\documentclass{article} 
\PassOptionsToPackage{sort&compress}{natbib}
\usepackage{iclr2027_conference,times}
\setcitestyle{numbers,square,comma}

\usepackage{amsmath,amsfonts,bm}

\def\eqref#1{equation~\ref{#1}}

\def\1{\bm{1}}

\DeclareMathAlphabet{\mathsfit}{\encodingdefault}{\sfdefault}{m}{sl}
\SetMathAlphabet{\mathsfit}{bold}{\encodingdefault}{\sfdefault}{bx}{n}

\usepackage{hyperref}
\usepackage{url}

\usepackage[utf8]{inputenc}   
\usepackage[T1]{fontenc}      
\usepackage{booktabs}         
\usepackage{amsfonts}         
\usepackage{nicefrac}         
\usepackage{microtype}        
\usepackage[table,dvipsnames]{xcolor} 
\usepackage{wrapfig}
\usepackage{subcaption}
\usepackage{amsthm}
\usepackage{float}
\usepackage{tabularx}
\usepackage{makecell}         
\usepackage{pifont}           
\usepackage{siunitx}
\usepackage{adjustbox}
\usepackage{graphicx}
\usepackage{subcaption}
\usepackage{caption}
\usepackage{listings}
\usepackage{amssymb}
\usepackage{comment}
\usepackage{xspace}
\usepackage{tikz}
\usetikzlibrary{calc}
\usepackage{algorithmicx}
\usepackage[ruled,vlined]{algorithm2e}
\usepackage{enumitem}
\usepackage{multirow}
\usepackage{mathtools}
\definecolor{best}{RGB}{180,220,140}
\definecolor{second}{RGB}{220,240,200}
\definecolor{royalgreen}{RGB}{0,102,51}
\definecolor{royalblue}{RGB}{0,53,148}
\definecolor{royalpurple}{RGB}{106,13,173}
\definecolor{royalorange}{RGB}{204,85,0}
\definecolor{royalred}{RGB}{178,24,43}
\definecolor{royalgold}{RGB}{184,134,11}
\definecolor{greenblue}{RGB}{0,128,140}

\definecolor{cAutomorphism}{RGB}{213,94,0}
\definecolor{cOrbit}{RGB}{0,114,178}
\definecolor{cSubgraphOrbit}{RGB}{0,158,115}
\definecolor{cEdgeOrbit}{RGB}{204,121,167}
\definecolor{cEdgeEquiv}{RGB}{230,159,0}
\definecolor{cEAR}{RGB}{86,180,233}

\colorlet{red}{black}

\usepackage[nameinlink,capitalize,noabbrev]{cleveref}
\newtheorem{definition}{Definition}[section]
\newtheorem{measure}{Measure}[section]
\newtheorem{Proposition}{Proposition}[section]

\newtheorem{theorem}{Theorem}[section]
\newtheorem{lemma}{Lemma}[section]
\crefname{figure}{Fig.}{Figs.}
\Crefname{figure}{Fig.}{Figs.}
\crefname{section}{Sec.}{Secs.}
\Crefname{section}{Sec.}{Secs.}
\crefname{subsection}{Sec.}{Secs.}
\Crefname{subsection}{Sec.}{Secs.}
\crefname{equation}{Eq.}{Eqs.}
\Crefname{equation}{Eq.}{Eqs.}
\crefname{theorem}{Thm.}{Thms.}
\Crefname{theorem}{Thm.}{Thms.}
\crefname{lemma}{Lem.}{Lems.}
\Crefname{lemma}{Lem.}{Lems.}
\crefname{appendix}{App.}{Apps.}
\Crefname{appendix}{App.}{Apps.}
\crefname{definition}{Def.}{Defs.}
\Crefname{definition}{Def.}{Defs.}
\Crefname{measure}{Measure}{Measures.}
\Crefname{measure}{Measure}{Measures.}
\crefname{algorithm}{Algo.}{Algos.}
\Crefname{algorithm}{Algo.}{Algos.}
\crefformat{figure}{#2Fig.~#1#3}
\Crefformat{figure}{#2Fig.~#1#3}
\crefname{table}{Tab.}{Tabs.}
\Crefname{table}{Tab.}{Tabs.}
\crefformat{table}{#2Tab.~#1#3}
\Crefformat{table}{#2Tab.~#1#3}
\crefformat{section}{#2Sec.~#1#3}
\Crefformat{section}{#2Sec.~#1#3}

\newcommand{\addcomment}[2]{\textcolor{#1}{#2}}
\newcommand{\chen}[1]{\addcomment{teal}{#1}}

\definecolor{purple2}{RGB}{153,0,153} %
\definecolor{green2}{RGB}{0,153,0} %

\newcommand{\stepone}{\textbf{{(S1)}}\xspace}
\newcommand{\steptwo}{\textbf{{(S2)}}\xspace}
\newcommand{\stepthree}{\textbf{{(S3)}}\xspace}

\definecolor{darkgreen}{RGB}{0,153,0}
\newcommand{\cmark}{{\color{darkgreen}\ding{51}}}
\newcommand{\xmark}{\ding{55}}

\newcommand{\V}[1]{\mathbf{#1}}

\newcommand{\graph}{\mathcal{H}}
\newcommand{\vertexSet}{\mathcal{V}}
\newcommand{\edgeSet}{\mathcal{E}}

\newcommand{\matA}{\mathbf{A}}

\newcommand{\matX}{\mathbf{X}}

\newcommand{\matW}{\mathbf{W}}

\newcommand{\matL}{\mathbf{L}}

\newcommand{\neighNoSelfLoop}{\bar{N}}

\newcommand{\decoder}{\textsc{decoder}}
\newcommand{\metrics}{\textnormal{automorphism}\xspace}
\newcommand{\Designone}{Automorphism-aware Dropout}
\newcommand{\method}{\textsc{EO-GNN}\xspace}
\newcommand{\Designtwo}{Subgraph Orbit-Biased Aggregation}
\newcommand{\EAR}{EAR}
\newcommand{\NAP}{node automorphism problem}
\newcommand{\fullname}{\textsc{Edge-Orbit Equivariant Graph Neural Network}\xspace}
\newcommand{\multiset}[1]{\left\{\!\left\{#1\right\}\!\right\}}
\definecolor{modifyred}{RGB}{224,0,0}
\newcommand{\modify}[1]{#1}

\title{Edge-Level Automorphism in GNNs: A Quantitative Framework
       and Effective Designs for Link Prediction}
\author{Chen Shao \\
Karlsruhe Institute of Technology \\
Karlsruhe, Germany \\
\texttt{chen.shao2@kit.edu}
\And
Donald Loveland \\
Department of Computer Science \\
University of Michigan \\
Ann Arbor, MI, USA \\
\texttt{loveland@umich.edu}
\And
Tobias Käfer \\
Karlsruhe Institute of Technology \\
Karlsruhe, Germany \\
\texttt{tobias.kaefer@kit.edu}
\And
Danai Koutra\thanks{Corresponding author.} \\
Computer Science \& Engineering \\
University of Michigan \\
Ann Arbor, MI, USA \\
\texttt{dkoutra@umich.edu}
}

\iclrfinalcopy 
\begin{document}

\maketitle

\begin{abstract}
Graph Neural Networks (GNNs) are effective for learning node and link embeddings
through permutation-equivariant aggregation. However, standard GNNs collapse
automorphic nodes, i.e., those with identical structural roles (or orbits), into
indistinguishable representations, leading to the \emph{\NAP}. This collapse
limits their expressive power and degrades link prediction performance.
Existing approaches to characterize GNN expressiveness rely primarily on
Weisfeiler-Lehman (WL) analyses, but these methods are typically qualitative
and often misaligned with empirical results. To address this gap, we begin by
introducing a novel quantitative framework to assess GNN expressiveness for link
prediction. We first formalize edge-level automorphism through \emph{edge
orbits}, which capture the set of structural role pairs for nodes that share a
link. Then, we introduce the edge automorphism ratio (EAR), a scalar metric that
quantifies a GNN's ability to distinguish links in a given graph. We empirically
demonstrate that EAR correlates strongly with performance, validating its
practical benefit. Building on this insight, we design \fullname (\method), a
GNN architecture that addresses automorphism collapse while preserving
equivariance and incurring minimal computational overhead. \method accomplishes
this through two core designs combined with WL-based node hashes: (i)~automorphism-aware
dropouts and (ii)~subgraph orbit-biased aggregation. Empirical evaluations on
synthetic and real graphs show improvements of up to~\num{42.36}\% and~\num{28.44}\%,
respectively, in predicting links in scenarios with high automorphism.
\end{abstract}

\section{Introduction}

Link prediction (LP) estimates the likelihood of connections between node pairs in a graph and is fundamental to applications such as recommendation systems, protein interaction analysis, and knowledge graph completion~\citep{10.1145/1065385.1065415,10.1093/nar/gky1131}. 
Most GNN-based LP methods follow an encoder-decoder paradigm, where a permutation-equivariant GNN produces node embeddings that are subsequently decoded into link probabilities~\citep{zhang_link_2018,Zhang2020LabelingTA}. While permutation equivariance ensures invariance to node ordering, it also causes structurally equivalent (automorphic) nodes to receive identical representations, even when they play distinct semantic and positional roles in the graph~\citep{bloem2020probabilistic,morris2024orbitequivariant,Ma2024ACP}. \begin{wrapfigure}{r}{0.52\linewidth}
\vspace{-0.42cm}
\centering
\includegraphics[width=0.9\linewidth]{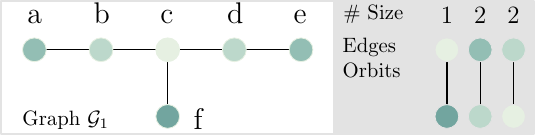}
\vspace{-0.22cm}
\caption{A standard GNN assigns identical scores to the automorphic candidate links $(a,b)$ and $(a,d)$, 
illustrating the Node Automorphism Problem (NAP).}
\label{fig:motive}
\vspace{-0.5cm}
\end{wrapfigure}
Informally, two nodes are \emph{structurally equivalent (automorphic)} if some symmetry of the graph, i.e., a relabeling of vertices that preserves every edge, maps one node onto the other; because a GNN's output cannot depend on how nodes are labeled, such nodes are architecturally indistinguishable, regardless of their intended semantic role, as illustrated in \Cref{fig:motive}. We refer to this as the \emph{Node Automorphism Problem (NAP)}, which leads to degraded link discriminability~\citep{chamberlain_graph_2023}.

Existing expressiveness analyses for GNNs primarily rely on the vertex-level Weisfeiler--Lehman (WL) hierarchy, which characterizes how well GNNs distinguish nodes using graph isomorphism tests \citep{zhang2024weisfeilerlehmanquantitativeframeworkgnn}.
However, existing expressiveness frameworks suffer from several fundamental limitations. 
\textbf{(L1) Level mismatch}: WL-based analyses primarily operate at the node or subgraph level, while link prediction inherently requires fine-grained edge-centric reasoning over complex multi-node interactions, resulting in a substantial gap between theoretical expressivity characterization and practical link discrimination. 
\textbf{(L2) Lack of quantitative characterization}: Current frameworks remain largely qualitative, providing little insight into the extent of structural automorphism or its direct impact on link prediction performance, thereby limiting systematic analysis of model behavior under symmetry. 
\textbf{(L3) Missing empirical foundations}: Existing studies lack synthetic benchmarks with controllable automorphism levels, leaving theoretical claims insufficiently validated and preventing rigorous, reproducible investigation of symmetry-induced ambiguity in link prediction.

To bridge these gaps, we establish a principled edge-level framework for analyzing link prediction under structural automorphism. 
First, we introduce the notion of \textit{Weisfeiler Lehman-induced Edge Orbit} to explicitly characterize edge ambiguity induced by automorphisms. 
Building upon this formulation, we propose the \textit{Edge Automorphism Ratio} (\EAR), a rigorous quantitative metric that measures the proportion of theoretically indistinguishable edges in a graph. We address L3 by constructing synthetic benchmarks  in which \EAR\xspace is an explicit, tunable generation parameter rather than an incidental property of a fixed graph. Sweeping this parameter lets us test whether
performance degradation is attributable to automorphism itself (\Cref{subsec:synthetic benchmark}).Guided by these empirical findings, we develop \fullname\ (\method), an automorphism-aware GNN architecture designed to preserve equivariance while substantially improving edge discriminability in highly symmetric graphs. 
Together, our contributions are summarized as follows:

\color{black}
\begin{itemize}[leftmargin=*,itemsep=0pt,parsep=0pt,topsep=0pt]

\item \textbf{Quantitative Theoretical Framework:} We introduce a structural concept, the \emph{WL-induced Edge Orbit}, that refines discrimination from the vertex to the edge level. Building on it, we characterize a GNN's expressive power through its distinguishable edge set $\modify{\mathcal{E}^{\mathcal{M}}}$, letting different models be compared directly by set inclusion and set difference. We summarize this set as a single scalar, the \emph{Edge Automorphism Ratio}, the proportion of theoretically indistinguishable edges i.e., the complement of $\modify{\mathcal{E}^{\mathcal{M}}}$, giving a principled, model-agnostic measure of expressivity. (\Cref{sec:measure and framework})
\item \textbf{Automorphism-Aware GNN with Theoretical Guarantees:} We propose
  \fullname\ (\method), which augments standard GNN layers with two key designs:
  (D1)~an automorphism-aware dropout and (D2)~subgraph orbit-based aggregation. We prove that \method exceeds the expressivity of the standard GNN. We make our code and data   available at
  \href{https://anonymous.4open.science/r/ANP4Link-6A70/syn_graph/lp_agcn_tri.py}{[Anon Repo Link]}.\color{black}
  (\Cref{sec:design})
\item \textbf{Extensive Empirical Validation:} We evaluate \method\ against
  state-of-the-art GNNs on synthetic graphs spanning low to high automorphism
  levels and on diverse real-world networks. Our ablation studies show up to a
  \num{42.36}\% improvement over the second-best model in high-symmetry settings and
  consistent gains of up to \num{28.44}\% across real-world benchmarks.
  (\Cref{sec:exp}).
\end{itemize}
\color{black}

\section{Related Work}
\label{sec:related work}
\noindent \textbf{Graph Expressivity Theory.} Our work relates to recent efforts on characterizing GNN expressivity~\citep{zhang2024weisfeilerlehmanquantitativeframeworkgnn}. Standard MPNNs are theoretically bounded by the 1-WL test in distinguishing non-isomorphic graphs~\citep{Xu2018HowPA,weisfeiler1968reduction}, motivating extensions based on the higher-order $k$-WL  hierarchy~\citep{barcelo2020logical,srinivasan_equivalence_2020,maron2019invariant,maron2019universality,morris2019weisfeiler}. 2D-WL further incorporates pairwise structural features for link prediction~\citep{hu2022twodimensionalweisfeilerlehmangraphneural}, while unrolling distance analyzes MPNN generalization through aligned computation trees. More recently, $k_\phi$-$k_\rho$-$m$-WL unifies message-passing link predictors by characterizing encoder expressivity and structural neighborhood radius.
\begin{wraptable}{t!}{0.67\linewidth}
\vspace{-0.4cm}
\centering
\scriptsize
\setlength{\tabcolsep}{2pt} 
\caption{Graph Expressivity Frameworks for GNNs.}
\vspace{-0.2cm}
\begin{tabularx}{\linewidth}{X ccccc}
\toprule
Method & Level & M-Agnostic & Invariance. & Quantification. & Alignment. \\
\midrule
$1$-WL~\citep{Xu2018HowPA} & node & \cmark & \cmark & \xmark & \xmark \\
$k$-WL~\citep{Bker2023FinegrainedEO} & node & \cmark & \cmark & \xmark & \xmark \\
2D-WL~\citep{hu2022twodimensionalweisfeilerlehmangraphneural} & edge & \xmark & \cmark & \xmark & \cmark \\
$k_\phi$-$k_\rho$-$m$-WL~\citep{lachi2025bridgingtheorypracticelink} & edge & \xmark & \xmark & \cmark & \xmark \\
Homomorphism.~\citep{zhang2024weisfeilerlehmanquantitativeframeworkgnn} & subgraph & \cmark & \cmark & \cmark & \xmark \\
Unrolling Distance.~\citep{Vasileiou2025UnderstandingGI} & node & \cmark & \cmark & \xmark & \xmark \\
\midrule
\textbf{Ours} & \textbf{edge} & \cmark & \cmark & \cmark & \cmark \\
\bottomrule
\end{tabularx}
\label{tab:related_work_comparison}
\vspace{-0.6cm}
\end{wraptable}
As summarized in \Cref{tab:related_work_comparison}, our framework simultaneously ensures model-agnosticism,
permutation-invariance, quantitative characterization and direct alignment with the prediction task, as illustrated in \Cref{fig:framework}. We provide a more comprehensive discussion of related work, MPNNs for link prediction, and comparisons with other similar designs in \Cref{app:related work}.

\section{Quantifying Edge Discriminability: A Theoretical Framework}
\label{sec:measure and framework}
In this section, we provide key notation and present our theoretical framework before introducing EAR as a quantitative measure. We first review the fundamental concepts of graph automorphisms and node orbits. We then define the GCN-induced notion of structural equivalence, referred to as the subgraph orbit. Building on these concepts, we introduce the proposed WL-induced edge orbit and its associated edge-level equivalence relation. Finally, we define a measure for comparing the expressive power of different models based on their ability to distinguish structurally equivalent edges.

Our framework stems from a fundamental observation that the
representational power of a GNN model is effectively bounded by the set of links
it can structurally distinguish. We denote this set of distinguishable links as
$\mathcal{E}^\mathcal{M}$. Consequently, we can treat $\mathcal{E}^\mathcal{M}$
as a direct proxy for the model's expressivity given it fully characterizes which
edges will receive unique representations. Using $\mathcal{E}^\mathcal{M}$ is
also beneficial for model comparison as it enables the expressive power of
different models to be easily compared via set relations. 

\subsection{Notation and Preliminaries}
\label{sec:automorphic}
We summarize key notation in \Cref{tab:dfn} (\Cref{app:dfn}). Let $\mathcal{G} = (\vertexSet, \edgeSet)$ be an undirected, unweighted graph with adjacency matrix
$\matA \in \{0,1\}^{|\vertexSet| \times |\vertexSet|}$ and node feature matrix
$\matX \in \mathbb{R}^{|\vertexSet| \times F}$. We then consider a permutation-equivariant GNN as a mapping
$\phi_w : (\mathbf{X}, \matA) \to \mathbf{H} \in \mathbb{R}^{|\vertexSet| \times F}$,
where $w \in \mathcal{W}$ denotes the weight set of the
full link-prediction model, drawn from the weight space $\mathcal{W}$.
\begin{definition}[Isomorphism and Automorphism]
\label{dfn:graph-automorphism}
Two graphs $\mathcal{G}$ and $\mathcal{H}$ are said to be \emph{isomorphic}
($\mathcal{G} \cong \mathcal{H}$) if there exists a bijective mapping $\pi:V_\mathcal{G} \to V_\mathcal{H}$ such that
$(v, u) \!\in\! \mathcal{E}_\mathcal{G} \iff (\pi(v), \pi(u)) \!\in\!
\mathcal{E}_\mathcal{H}$. An \emph{automorphism} of $\mathcal{G}$ is the special case $\mathcal{H} = \mathcal{G}$: i.e., a relabeling of $\mathcal{G}$'s own vertices that leaves the edge set unchanged.
\end{definition}
\begin{definition}[Orbits]
\label{dfn:orbits}
The automorphisms of $\mathcal{G}$ form a group $\operatorname{Aut}(\mathcal{G})$
under composition. The \emph{orbit} of $v$ is
$\mathcal{O}(v) = \{\pi(v) : \pi \in \operatorname{Aut}(\mathcal{G})\}$,
i.e., the set of nodes automorphic to $v$. Orbits partition $\vertexSet$.
\end{definition}
\vspace{-7pt}
All automorphisms induce a partition of the vertex set $\mathcal{V}_\mathcal{G}$ into 
disjoint subsets called \emph{orbits}, where each orbit represents a unique structural role.
As illustrated in \Cref{fig:motive}, $\mathcal{G}_1$ has four distinct node \emph{orbits} ${a}$, ${b}$, ${c}$, and ${f}$ shown by color. Nodes in the same orbit play identical structural roles and are therefore automorphic. 

\subsection{Formalizing Edge Discrimination}
Classical \emph{orbits} are node equivalence classes of the global
automorphism group; in contrast, a
$k$-layer message-passing GNN observes the $k$-hop neighborhood of a node
and is bounded in expressivity by the 1-WL test
\citep{Xu2018HowPA, morris2019weisfeiler}. This creates a mismatch between
the global notion of automorphism and the local receptive field of GNNs
\citep{xu2021automorphic, zhang2023hierarchy}. Thus, global automorphisms is unsuitable for NAP and fails to capture the blind spots of GNNs regarding automorphic nodes (\Cref{fig:motive}).


\textit{We formalize a structural equivalence which captures the
structural role of a node as perceived by a GNN encoder. }
\begin{definition}[Subgraph Orbit]\label{dfn:subgraph-orbit}
With subgraph $\mathcal{S}^{(k)}(u)$ induced by all
nodes within distance $k$ and rooted at $u$, and representation $\phi_w(\mathcal{S}^{(k)}(u))$ of the root produced by a
$k$-layer message-passing GNN. Two nodes
$u,v\in\mathcal{V}$ are \emph{$k$-equivalent}, written $u\sim_k v$, if
\begin{equation}
\phi_w\big(\mathcal{S}^{(k)}(u)\big) = \phi_w\big(\mathcal{S}^{(k)}(v)\big)
\quad \text{for all } w .
\end{equation}
A $k$-layer message-passing GNN computes the root representation
from $\mathcal{S}^{(k)}(u)$. Throughout the paper, we use the equivalence
obtained after 1-WL stabilizes and omit the subscript $k$.
\modify{We write $\mathcal{O}_{\mathcal{S}}(u)$, or simply $\mathcal{O}_u$ when the
context is clear, for the resulting \emph{subgraph orbit} of $u$: its
equivalence class $\{v \in \mathcal{V} : v \sim u\}$ under this relation.}
\end{definition}

\textit{Based on subgraph orbits, we introduce an edge equivalence designed to characterize the limitations of GCN-based link prediction systems.}
\begin{definition}[WL-Induced Edge Orbit]\label{def:link-orbit}
For a node pair $e=\{u,v\}$, $u,v\in\mathcal{V}$, not necessarily in
$\mathcal{E}$, we define its \emph{WL-induced edge orbit} as the multiset of the subgraph orbits (\Cref{dfn:subgraph-orbit}) of its endpoints:
\begin{equation}\vspace{-2pt}
  \mathcal{O}_{\mathcal{E}}(e) \coloneqq
  \multiset{\mathcal{O}_{\mathcal{S}}(u),\,\mathcal{O}_{\mathcal{S}}(v)}.
\end{equation}
Two pairs $e,e'$ are \emph{WL-edge-equivalent}, written
$e\sim^{\mathcal{E}} e'$, iff
$\mathcal{O}_{\mathcal{E}}(e)=\mathcal{O}_{\mathcal{E}}(e')$. The multiset makes the definition invariant to endpoint order
while distinguishing pairs whose endpoints share a subgraph orbit
from pairs whose endpoints belong to different orbits.
\end{definition}
\begin{wrapfigure}{r}{0.32\linewidth}
\centering
\vspace{-0.7cm}
\includegraphics[width=\linewidth]{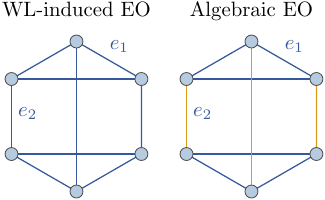}
\caption{Graph with one WL-induced edge orbit but two algebraic edge orbits.}
\label{fig:prism-edge-orbit}
\vspace{-1.2cm}
\end{wrapfigure}
\vspace{-0.3cm}

\textbf{Clarification}
We remark that the WL-induced edge orbit differs from classical \emph{edge orbits} in algebraic group theory.
Consider the triangular prism in
Fig.~\ref{fig:prism-edge-orbit}.
Because the graph is 3-regular, 1-WL assigns every vertex the same color,
placing all edges in a single WL-induced edge orbit. The graph automorphism
group, however, has two edge orbits: triangle edges and matching edges.
Automorphisms preserve triangle membership, so no automorphism maps a
triangle edge $e_1$ to a matching edge $e_2$.
WL-induced equivalence edge relation proposed in our paper is a model-induced equivalence determined by GCN.

We next formalize when a parameterized model
$\mathcal{M}_w$ discriminates a pair of such edges, and then compare
models by the sets of edge pairs they can distinguish.
\noindent
\begin{definition}[Edge Discrimination]\label{def:link-discrimination}
Let $e_1$ and $e_2$ belong to the same
\emph{WL-induced edge orbit}, i.e., $\mathcal{O}_{\mathcal{E}}(e_1) = \mathcal{O}_{\mathcal{E}}(e_2)$. A model
$\mathcal{M}_w$ \emph{discriminates} $e_1$ and $e_2$ if there
exists $w \in \mathcal{W}$ such that $\mathcal{M}_w(e_1) \neq \mathcal{M}_w(e_2)$,
written $e_1 \not\equiv_{\mathcal{M}} e_2$. We collect all such
pairs into the \emph{distinguishable edge set}
$\mathcal{E}^\mathcal{M} = \{\, \{e_1, e_2\} : e_1 \not\equiv_{\mathcal{M}} e_2 \,\}$.
\end{definition}
\begin{definition}[Higher Expressiveness]\label{def:more-expressive}
We order models by inclusion of their distinguishable edge sets. Given two models
$\mathcal{M}_1$ and $\mathcal{M}_2$, we say $\mathcal{M}_1$ is \emph{at least as
expressive} as $\mathcal{M}_2$, denoted $\mathcal{M}_1 \succeq \mathcal{M}_2$, if for every pair of edges $e_1, e_2 \in \mathcal{E}(\mathcal{G})$,
$e_1 \not\equiv_{\mathcal{M}_2} e_2 \;\Rightarrow\; e_1 \not\equiv_{\mathcal{M}_1} e_2$;
equivalently, $\mathcal{E}^{\mathcal{M}_2} \subseteq \mathcal{E}^{\mathcal{M}_1}$.
\end{definition}\vspace{-8pt}
\noindent The distinguishable edge set $\mathcal{E}^\mathcal{M}$ thus fully
characterizes a model's expressivity, capturing exactly the links it can
distinguish. We illustrate the framework in \Cref{fig:framework}.
\color{black}
\subsection{Quantifying Link Ambiguity: A Quantitative Measure}
\label{subsec:measure}
Building upon distinguishable edge set $\mathcal{E}^{\mathcal{M}}$, we introduce the \emph{EAR}, a quantitative measure of theoretically indistinguishable edges under a GNN encoder $\phi_w$. EAR will be used to empirically analyze the effect of structure automorphism and motivate our principled design choices.
\begin{figure}[t]
  \centering
  \includegraphics[width=0.75\textwidth]{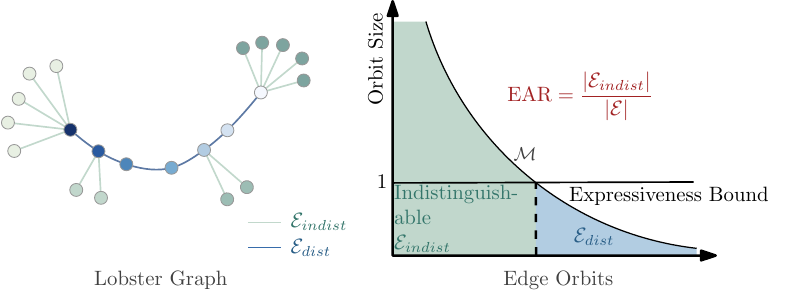}
  \vspace{-0.2cm}
  \caption{Illustration of EAR based on WL-induced edge equivalence. Green denotes indistinguishable edges $\mathcal{E}_{\text{indist}}$ and
   blue denotes distinguishable edges $\mathcal{E}_{\text{dist}}$. Expressiveness bound illustrates the theoretical discriminativity 
   limit of a GNN model. $\mathcal{E}^{\mathcal{M}}$ provides a formal characterization of the expressivity of model $\mathcal{M}$ 
   by capturing the set of edge pairs it can distinguish. A graph with more indistinguishable edges (high EAR) is inherently 
   more challenging for link prediction. } 
  \label{fig:framework}
  \vspace{-0.5cm}
\end{figure}
\begin{measure}[Edge Automorphism Ratio]
\label{dfn:automorphism-ratio}
Let $\mathcal{E}$ denote the set of all edges in the graph $\mathcal{G}$, \modify{and for an
edge $e=\{u,v\}$ let $\mathcal{O}_{\mathcal{E}}(e)$ be its WL-induced edge-orbit
signature (\Cref{def:link-orbit}). Denote by}
$$
\modify{[e]_{\mathcal{E}} \;=\; \left\{\, e' \in \mathcal{E} \;\middle|\; \mathcal{O}_{\mathcal{E}}(e') = \mathcal{O}_{\mathcal{E}}(e) \,\right\}}
$$
\modify{the \emph{edge orbit} of $e$, i.e., the set of edges sharing its signature
(equivalently, the equivalence class of $e$ under $\sim^{\mathcal{E}}$).}
\emph{EAR} is defined as the proportion of indistinguishable edges
$\mathcal{E}_{\text{indist}}$, written as:
$$
\mathrm{EAR} = \left( \frac{\left| \mathcal{E}_{\text{indist}} \right|}{|\mathcal{E}|} \right)^{\gamma}, \modify{\mathcal{E}_{\text{indist}} = \left\{ e \in \mathcal{E} \mid \left|[e]_{\mathcal{E}}\right| > 1 \right\}}.
$$
 where $\frac{|\mathcal{E}_{\mathrm{indist}}|}{|\mathcal{E}|}$ is the proportion of indistinguishable edges in the graph, and $\gamma\in(0,1]$ is a power-law scaling factor. For $\gamma<1$, the exponent increases EAR values between zero and one while preserving the endpoints. Indistinguishable edges are those belonging to non-singleton edge orbits, i.e., those whose \emph{orbit size} $|[e]_{\mathcal{E}}|>1$, because their WL-induced structural roles are not unique in the graph.
\end{measure}
\noindent \textbf{EAR Computation.} We utilize 1-orbit-WL (\Cref{algo:wl_forward}) adapted from~\citep{morris2019weisfeiler} to efficiently estimate edge orbits $\mathcal{O}_{\mathcal{E}}$. For example, in graph $\mathcal{G}_1$ of \Cref{fig:motive}, four out of five edges belong to non-singleton edge orbits, yielding $\mathrm{EAR}(\mathcal{G}_1)=\frac{4}{5}=0.8$. In contrast, all edges in $\mathcal{G}_2$ are indistinguishable, resulting in $\mathrm{EAR}(\mathcal{G}_2)=1$. The EAR measure ranges from $[0, 1]$ and applying a power-law transformation improves its spread across this range, in contrast to the raw ratio, which
typically falls within $[0, 0.4]$ for many real-world graphs. 
\subsection{Connecting EAR to GNN Performance}
\begin{wrapfigure}{r}{0.32\linewidth}
  \vspace{-0.49cm}
  \centering
  \begin{subfigure}{\linewidth}
    \centering
    \includegraphics[width=\linewidth]{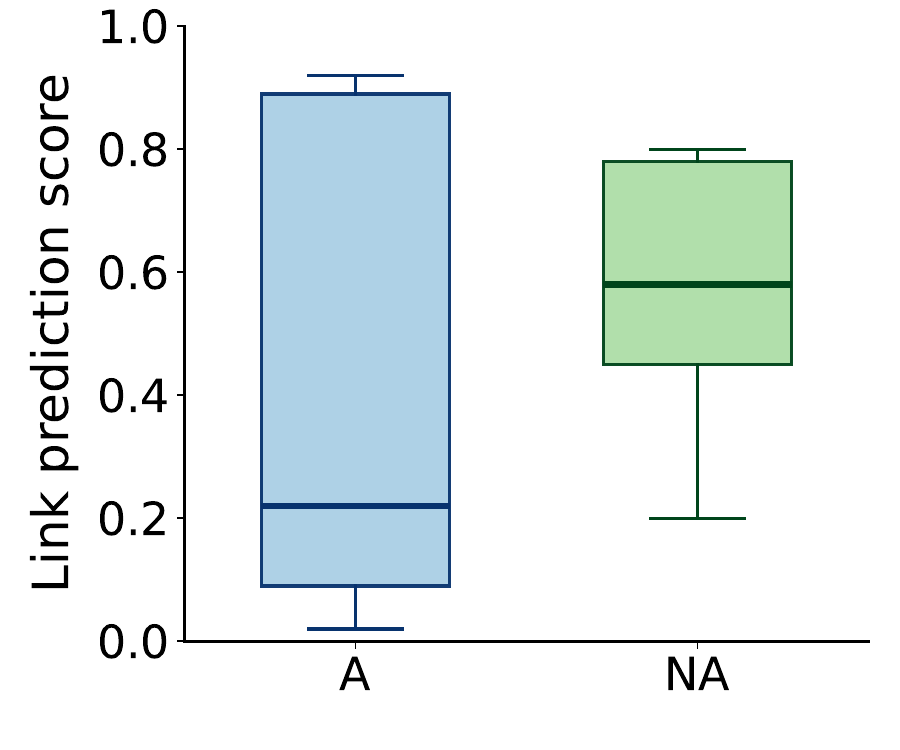}
    \caption{Automorphic (A) and non-automorphic (NA) edges.}
    \label{fig:lobster}
  \end{subfigure}
  \begin{subfigure}{\linewidth}
    \centering
    \includegraphics[width=\linewidth]{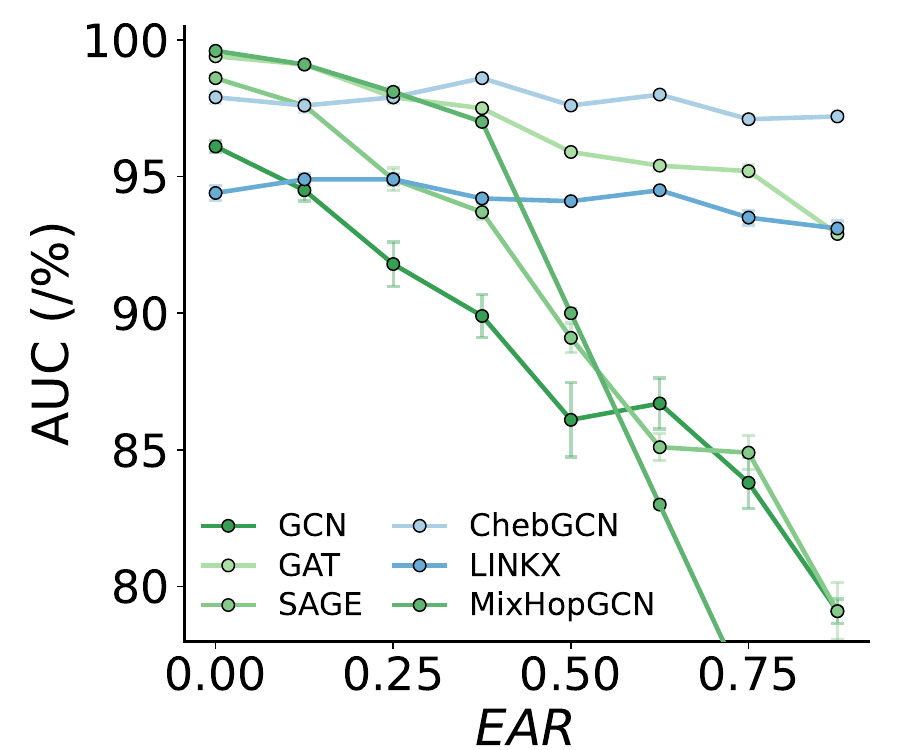}
    \caption{Performance decay on \textit{syn-triangle} as EAR increases.}
    \label{fig:motivate_example}
  \end{subfigure}
  \vspace{-0.65cm}
\end{wrapfigure}
As a motivating example, we demonstrate in \Cref{fig:lobster} that on a Lobster
graph (\Cref{fig:framework}) with automorphic edges detected by
\Cref{algo:wl_forward}. By design, the Lobster graph features a central path of
vertices with distinct orbits, while its leaves reside in identical orbits,
creating structural ambiguity. We show in \Cref{fig:lobster} that the automorphic edges exhibit significantly higher
variance in link likelihood scores for standard GNN \cite{Kipf2016SemiSupervisedCW},
with mean values falling below the detection threshold of 0.5. This indicates
that automorphic edges are likely to be incorrectly predicted as negative links.

To systematically investigate the impact of structural automorphism on link prediction, we construct synthetic graphs with controllable automorphism levels (\Cref{app:synthetic}) and report the mean AUC of representative models, including GNNs (SAGE~\citep{Hamilton2017InductiveRL}, MixHop~\citep{abuelhaija2019mixhophigherordergraphconvolutional}), the non-convolutional model LINKX~\citep{Lim2021LargeSL}, and the spectral method GCN-Cheby~\citep{he2024convolutionalneuralnetworksgraphs}, across increasing EAR values in \Cref{fig:motivate_example}. We observe that standard GNNs perform well under low automorphism but degrade substantially as structural automorphism increases. At $\EAR=0.88$, all GNN variants underperform LINKX by up to $15\%$, indicating severe loss of discriminability under high automorphism. Although MixHop partially alleviates this issue through higher-order neighborhoods, it still remains $15.3$\% below LINKX under strong symmetry. In contrast, GCN-Cheby, a
spectral method without equivariance constraints, maintains better robustness
across all levels, highlighting that GNN-based variants that work well under low
automorphism (EAR$=0.25$) are not appropriate for networks with medium/high
automorphism.

\section{Learning over Increasing Automorphism}
\label{sec:design}
Motivated by this limitation, we discuss and theoretically justify a set of key
design choices that, when appropriately incorporated in a GNN framework, can
maintain robust performance across the spectrum of automorphism values.
\subsection{D1: \Designone}
\noindent \textit{Intuition.} This design strategically perturbs the graph to
disrupt otherwise indistinguishable subgraph orbits, allowing GNNs to learn more
discriminative representations. \\
Given precomputed WL-hashes $\mathbf{O}\in\mathbb{Z}^{|\mathcal{V}|}$ from \Cref{algo:wl_forward}, we assign adaptive dropout probabilities according to normalized orbit sizes.
For each node $u$, we define
\[p_u
=
\min\left(
p_{\max},
\;
\alpha_p \cdot
\log\left(
1+
\frac{|\mathcal{O}_u|}{|\mathcal{V}|}
\right)
\right),
\]
where $|\mathcal{O}_u|$ denotes the cardinality of the subgraph orbit associated with node $u$, $\alpha_p>0$ is a scaling factor, and $p_{\max}<1$ bounds the dropout probability. The resulting edge and node dropout mechanisms are defined as
\begin{equation}
\label{equ:dropout}
\widetilde{A}_{uv}=
\begin{cases}
\displaystyle \frac{A_{uv}}{1-p_{uv}}, & \text{with probability }1-p_{uv},\\
0, & \text{with probability }p_{uv},
\end{cases}
\quad \forall (u,v)\in\mathcal{E},
\end{equation}
and
$
\widetilde{\mathbf{x}}_u
=
z_u \cdot \mathbf{x}_u,
\quad
z_u \sim \mathrm{Bernoulli}(1-p_u),
\; \forall u\in\mathcal{V}.
$
This design encourages the model to break structural automorphism and improve node distinguishability. Similarly, the edge dropout probability $p_{uv}$ is computed analogously from edge orbit sizes.
\begin{figure}[t!]
  \centering
  \includegraphics[width=0.85\linewidth]{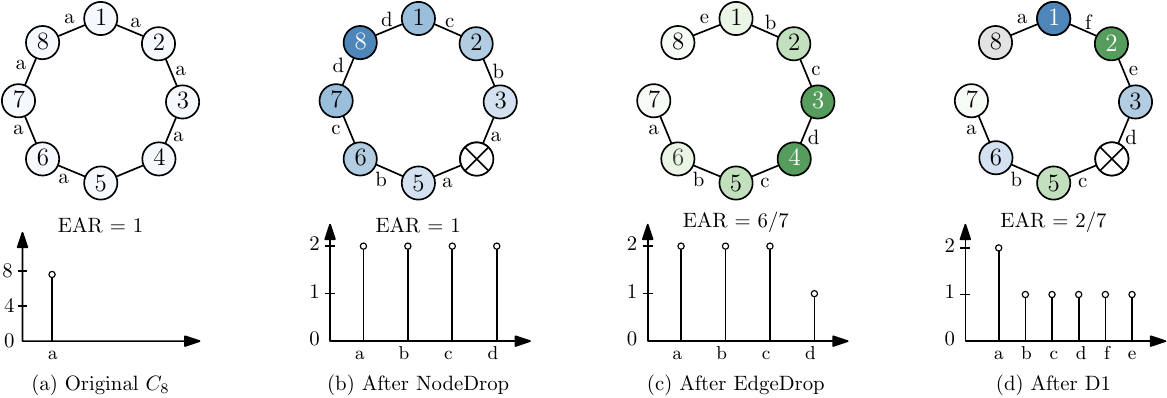}
  \vspace{-0.2cm}
\caption{Illustration of D1 \Designone\xspace on the cycle graph $C_8$, where node colors denote subgraph orbits under a 2-layer GNN. 
(a) All edges are automorphic ($\EAR=1$). 
(b) Dropping one edge breaks symmetry and reduces $\EAR$ to $\tfrac{6}{7}$. 
(c) Node dropout preserves $\EAR$ but makes half of the vertices distinguishable. 
(d) Combining node and edge dropout effectively improves GNN expressiveness (\Cref{theo:orbits-aware-dropout}).}
\label{fig:automorphism-aware dropout}
\vspace{-0.7cm}
\end{figure} \vspace{-0.2cm} 

\noindent \textit{Theoretical Justification.} We now formalize the role of D1 in
mitigating high edge automorphism. We show that GNN can better distinguish
subgraph orbits than a GNN without this design.
\begin{theorem}
\label{theo:orbits-aware-dropout}
Let $\phi_w:\mathcal{G}\rightarrow\mathbb{R}^{|\mathcal{V}|\times F}$ be a permutation-invariant GNN. If $\mathbb{E}_{w\sim\mathcal{W}}[\phi_w(\mathcal{S})]$ exists for all $\mathcal{S}\subseteq\mathcal{V}$, then GNN-D1 $\phi(p_a, p_n)$ with $p_a, p_n>0$, for any $\epsilon,\sigma>0$, it suffices to sample
\[
\mathbb{L}_\alpha=
\left\lceil
\frac{\log(\sigma)}
{|\mathcal{E}_{\mathrm{a}}|\log(1-p_{\mathrm{a}})
+
|\mathcal{E}_{\mathrm{n}}|\log(1-p_{\mathrm{n}})}
\right\rceil < \left\lceil \frac{\log(\sigma)}{-|\mathcal{E}|\log(1-p)} \right\rceil
\]
times such that $\mathbb{P}_{w\sim\mathcal{W}}
\!\left[
D(\mathcal{S},\mathcal{S}')
\geq \epsilon
\right]
\geq
1-\sigma,
$
where $\mathcal{S}'$ denotes the perturbed subgraph obtained by applying D1's node/edge dropout to $\mathcal{S}$, and $D(\cdot,\cdot)$ is a distance metric between their GNN embeddings (formally, \Cref{dfn:sufficient alteration,dfn:automorphic-node-problem}).
Here $\mathcal{E}_{\mathrm{a}}$ and $\mathcal{E}_{\mathrm{n}}$ denote the automorphic and non-automorphic edge sets, and $p_{\mathrm{a}}, p_{\mathrm{n}}$ the dropout probabilities applied to them.
$\mathbb{L}_\alpha$ is smaller than the sample complexity required by random dropout with a single rate $p$.
\end{theorem}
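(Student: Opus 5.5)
The argument is a geometric-trials bound: show that one D1 sample yields a \emph{sufficient alteration} of $\mathcal{S}$ with some fixed probability $q$, then repeat until the failure probability is at most $\sigma$. The dropout-specific part is only the computation of $q$ from the rates $p_{\mathrm{a}},p_{\mathrm{n}}$.

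\textbf{Step 1: a single sample succeeds with probability at least $q$.}
Fix $\mathcal{S}$ and use $D$ as formalised in \Cref{dfn:sufficient alteration,dfn:automorphic-node-problem}. I assume that any non-trivial dropout pattern breaks the symmetry of the relevant subgraph orbit. This means the perturbed subgraph $\mathcal{S}'$ is not isomorphic to $\mathcal{S}$ as a rooted computation graph.

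Because $\mathbb{E}_{w\sim\mathcal{W}}[\phi_w(\mathcal{S})]$ exists for every $\mathcal{S}$, the expected embeddings of $\mathcal{S}$ and $\mathcal{S}'$ are well defined. For non-isomorphic inputs they differ, by the standard argument that equivariant GNNs with generic weights separate them. Hence some $\epsilon>0$ gives $D(\mathcal{S},\mathcal{S}')\ge\epsilon$ with positive probability over $w$.

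This reduces the event ``sufficient alteration in one draw'' to the event ``the draw perturbs $\mathcal{S}$ at all''. Edges in $\mathcal{E}_{\mathrm{a}}$ are dropped independently with rate $p_{\mathrm{a}}$, and edges in $\mathcal{E}_{\mathrm{n}}$ with rate $p_{\mathrm{n}}$. So the probability that nothing is perturbed is
\[
(1-p_{\mathrm{a}})^{|\mathcal{E}_{\mathrm{a}}|}(1-p_{\mathrm{n}})^{|\mathcal{E}_{\mathrm{n}}|},
\]
and one draw succeeds with probability at least $q$, where $q$ is one minus this quantity.

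\textbf{Step 2: repetition.}
Draw $L$ i.i.d.\ dropout masks and keep the best outcome. All $L$ draws fail with probability at most
\[
(1-q)^L=\exp\!\big(L\,[\,|\mathcal{E}_{\mathrm{a}}|\log(1-p_{\mathrm{a}})+|\mathcal{E}_{\mathrm{n}}|\log(1-p_{\mathrm{n}})\,]\big).
\]
Requiring this to be at most $\sigma$ gives the stated $\mathbb{L}_\alpha$. Both the numerator $\log\sigma$ and the denominator are negative, so $\mathbb{L}_\alpha$ is positive. Running the same computation with a single uniform rate $p$ on all edges gives the right-hand expression.

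\textbf{Step 3: the strict comparison with uniform dropout.}
The inequality between ceilings reduces to
\[
|\mathcal{E}_{\mathrm{a}}|\log(1-p_{\mathrm{a}})+|\mathcal{E}_{\mathrm{n}}|\log(1-p_{\mathrm{n}}) < |\mathcal{E}|\log(1-p),
\]
where $|\mathcal{E}|=|\mathcal{E}_{\mathrm{a}}|+|\mathcal{E}_{\mathrm{n}}|$. This requires a comparison convention for $p$, and I would state it explicitly. The convention is that D1 assigns automorphic edges a larger rate, $p_{\mathrm{a}}>p$, via the orbit-size formula. Non-automorphic edges get $p_{\mathrm{n}}\ge p$, or more generally the total log-survival is smaller.

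Given this convention, monotonicity of $\log(1-\cdot)$ gives the inequality termwise. I expect this to be the main obstacle. Under a fixed expected number of dropped edges, concavity would push the inequality the wrong way, so the strict comparison cannot be proved for arbitrary rates. The proof must therefore pin down the assumption that orbit-adaptive rates dominate the uniform rate on automorphic edges. It must also make sure the ceiling preserves strictness, for example by assuming the quotients fall in different integer intervals, or by weakening the claim to $\le$.

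A second delicate point is the non-isomorphism-implies-separation step in Step 1, which holds only on the orbits in question. Node dropout ($p_n>0$ in the statement) can be folded in as additional independent failure factors of the same form.
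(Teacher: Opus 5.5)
\paragraph{Comparison with the paper.} Your Steps 1--2 are the same geometric-trials argument as the paper's proof. The per-draw success probability is the probability that at least one edge is dropped, and one then solves $(1-q)^L\le\sigma$. The paper carries this out only for a single rate $p$, obtaining $\lceil \log\sigma/(|\mathcal{E}|\log(1-p))\rceil$, and never proves the comparison. Your two-rate computation and Step~3 therefore go beyond it.

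This positive quantity is \emph{not} the printed right-hand side. As stated, $\lceil \log\sigma/(-|\mathcal{E}|\log(1-p))\rceil\le 0<\mathbb{L}_\alpha$, so the displayed inequality is literally false. Your remark that the uniform computation ``gives the right-hand expression'' silently drops that minus sign. Say explicitly that you are proving the corrected statement.

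\paragraph{Step 3: the obstacle diagnosis is backwards.} Since $x\mapsto\log(1-x)$ is concave, fix a matched expected-drop budget $|\mathcal{E}_{\mathrm{a}}|p_{\mathrm{a}}+|\mathcal{E}_{\mathrm{n}}|p_{\mathrm{n}}=|\mathcal{E}|p$. Jensen then gives
\[
|\mathcal{E}_{\mathrm{a}}|\log(1-p_{\mathrm{a}})+|\mathcal{E}_{\mathrm{n}}|\log(1-p_{\mathrm{n}})\le|\mathcal{E}|\log(1-p),
\]
with equality iff $p_{\mathrm{a}}=p_{\mathrm{n}}$. This is exactly the inequality you need; compare $(1-p-t)(1-p+t)=(1-p)^2-t^2$. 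So the natural budget-matched convention yields the strict comparison of the quotients, and hence $\le$ for the ceilings, as you correctly note.

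You therefore do not need the domination assumption $p_{\mathrm{a}}>p$, $p_{\mathrm{n}}\ge p$. That assumption makes the claim trivial (more dropping everywhere), and it does not fit D1's rule, where singleton-orbit edges receive rates of order $\alpha_p/|\mathcal{E}|$. The same computation also shows that the gain comes from non-uniformity alone, not from \emph{which} edges get the higher rate. So this argument does not actually capture D1's automorphism-awareness.

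\paragraph{Step 1: the separation claim.} ``Non-isomorphic inputs are separated by generic weights'' is false for message-passing GNNs, which are 1-WL bounded. The paper simply postulates the needed property through its closure $\mathcal{B}(\mathcal{S})_\epsilon$.

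A correct justification goes as follows. Removing an edge with an endpoint at distance $<k$ from the root deletes at least one walk of length $\le k$ from the root and adds none. Hence the root's depth-$k$ unrolling tree changes, and an injective (sum/GIN-type) aggregator separates the two for generic $w$. You need this separation for almost every $w$, not merely with positive probability; otherwise the per-draw success is $q$ times that probability. Fixing such a $w$, the finitely many dropout patterns give a positive minimum gap. This is why your ``some $\epsilon$'' is the right weakening of the statement's ``any $\epsilon$''.

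\paragraph{Edge counts and notation.} $|\mathcal{E}_{\mathrm{a}}|$ and $|\mathcal{E}_{\mathrm{n}}|$ must count only edges in the root's receptive field. With global counts, your $q$ is an \emph{upper} bound on the probability of perturbing $\mathcal{S}$, so ``succeeds with probability at least $q$'' fails; the paper's use of $|\mathcal{E}|$ has the same flaw. Finally, in the statement $p_{\mathrm{n}}$ is the rate on non-automorphic edges, not node dropout. Node-feature dropout only adds success events, so it can be ignored without changing $\mathbb{L}_\alpha$.
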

\vspace{-0.3cm}
The theorem shows that \textsc{GNN-D1}$\phi(p_a, p_n)$ can distinguish structurally equivalent nodes for any $p>0$, while requiring fewer repetitions than random dropout to break graph symmetries. Additional comparisons and analyses are provided in \Cref{app:related work}(\Cref{fig:comparison-dropout-mrr}).\\\vspace{-0.2cm}
\subsection{D2: Subgraph Orbit-Biased Aggregation}
\label{dfn:role-biased-aggregation}
\noindent \textit{Intuition.} D2 injects stochastic orbit-aware structural priors by perturbing identical structural roles with Gaussian noise, it introduces local asymmetry while preserving orbit-level information.\\
Let $\mathbf{O}\in\mathbb{Z}^{|\mathcal{V}|}$ denote the structural role identifiers obtained from $K$-iteration WL subtree labels, where $\mathbf{x}_v, \mathbf{o}_v$ represents raw embedding and orbit-aware identifier of node $v$. We construct a stochastic role embedding: \vspace{-0.5cm}
\begin{align}
\mathbf{h}_v^{(0)}
=
\mathrm{Emb}(\mathbf{o}_v)
+
\boldsymbol{\epsilon}_v,
\quad
\boldsymbol{\epsilon}_v
\sim
\mathcal{N}(0,\tau^2\mathbf{I}),
\label{eq:role-embedding}
\end{align}
where $\mathrm{Emb}(\cdot)$ is a learnable embedding table and $\tau$ controls the perturbation strength.
The noisy role embedding is concatenated with the initial node feature: $\mathbf{x}_v^{(0)}
=[\mathbf{h}_v^{(0)} \Vert \mathbf{x}_v],
\label{eq:concat-init} $
and injected into each message-passing layer:
\begin{align}
\mathbf{x}_v^{(k)}
=
\modify{\zeta^{(k)}}
\!\left(
\mathbf{x}_v^{(k-1)},
\square_{u\in\mathcal{N}(v)}
\phi^{(k)}
\!\left(
\mathbf{x}_u^{(k-1)},
\mathbf{x}_v^{(k-1)}
\right)
\right)
+
\mathbf{W}^{(k)}\mathbf{h}_v^{(0)},
\label{equ:subgraph-orbit aware aggregation}
\end{align}
where $\square$ denotes a permutation-invariant aggregation operator, $\phi,\modify{\zeta}$ are learnable functions, and $\mathbf{W}^{(k)}$ projects orbit-aware role embeddings into the hidden feature space.\\
\noindent \textit{Theoretical Justification.} To demonstrate D2's impact on automorphism, we first define an \textit{automorphism-dominant} vertex $v_a\in\mathcal{V}$ as a node whose neighbors are more likely to belong to the same orbit: $\mathbb{P}_{u\sim\mathcal{N}(v_a)}(\mathcal{O}_u=\mathcal{O}_{v_a})>\mathbb{P}_{u\sim\mathcal{N}(v_a)}(\mathcal{O}_u\neq \mathcal{O}_{v_a}),
$ where $\mathcal{O}_v$ denotes the subgraph orbit label of node $v$ (\Cref{dfn:subgraph-orbit}).
\begin{theorem}
\label{thm:automorphism-dominant}
With  a standard deterministic GNN with D2, the stochastic orbit-aware embedding increases their expected pairwise distance (\Cref{eq:role-embedding}):
$$\mathbb{E}\!\left[
\|\phi_{\mathrm{D2}}(u)-\phi_{\mathrm{D2}}(v_a)\|^2
\mid \mathcal{O}_u=\mathcal{O}_{v_a}
\right]
>
\mathbb{E}\!\left[
\|\phi(u)-\phi(v_a)\|^2
\mid \mathcal{O}_u=\mathcal{O}_{v_a}
\right]$$ Meanwhile, this distance remains bounded by the separation from non-automorphic neighbors when appropriately combined with GNN. Thus, D2 introduces local asymmetry within automorphic neighborhoods while preserving the structural separation from non-automorphic neighbors.
\end{theorem}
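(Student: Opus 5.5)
The plan is to reduce the statement to a bias–variance computation on the initial layer, then push it through the message-passing updates of \Cref{equ:subgraph-orbit aware aggregation}.

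\textbf{Step 1: write the D2 output as the base output plus a noise term.} Fix $u$ with $\mathcal{O}_u=\mathcal{O}_{v_a}$. Since both nodes share the same WL hash, $\mathrm{Emb}(\mathbf{o}_u)=\mathrm{Emb}(\mathbf{o}_{v_a})$. The standard deterministic GNN $\phi$ gives identical outputs for $u$ and $v_a$, because they are $\sim$-equivalent by \Cref{dfn:subgraph-orbit}. So the right-hand side of the claim is $0$, or at least equals the same quantity computed without noise.

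For D2, I first treat the case where $\zeta^{(k)}$ and $\phi^{(k)}$ are linear, or linearized at the noiseless point. Then I would write
\[
\phi_{\mathrm{D2}}(u)-\phi_{\mathrm{D2}}(v_a)=\big(\phi(u)-\phi(v_a)\big)+\mathbf{M}_u\boldsymbol{\epsilon}-\mathbf{M}_{v_a}\boldsymbol{\epsilon},
\]
where $\boldsymbol{\epsilon}$ stacks all node noises and $\mathbf{M}_u$ collects the propagated linear maps. These include the direct skip term $\mathbf{W}^{(k)}$ acting on $\boldsymbol{\epsilon}_u$.

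\textbf{Step 2: take the expectation.} Because the noise has zero mean, the cross term vanishes. This gives
\[
\mathbb{E}\|\phi_{\mathrm{D2}}(u)-\phi_{\mathrm{D2}}(v_a)\|^2=\|\phi(u)-\phi(v_a)\|^2+\tau^2\|\mathbf{M}_u-\mathbf{M}_{v_a}\|_F^2 .
\]
Strict inequality then needs $\mathbf{M}_u\neq\mathbf{M}_{v_a}$. This holds whenever $\mathbf{W}^{(K)}\neq 0$ and $u\neq v_a$: the coefficient of $\boldsymbol{\epsilon}_u$ in $\mathbf{M}_u$ contains $\mathbf{W}^{(K)}$ plus aggregation terms. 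The coefficient of $\boldsymbol{\epsilon}_u$ in $\mathbf{M}_{v_a}$ has only aggregation terms. These cannot cancel $\mathbf{W}^{(K)}$ for generic weights, since the set of weights where they do is a measure-zero algebraic set.

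For the nonlinear case, I plan a second-order Taylor expansion in $\tau$. This gives the same leading term $\tau^2\|J_u-J_{v_a}\|_F^2+o(\tau^2)$, which is positive for small $\tau$. Alternatively, if $\phi_{\mathrm{D2}}$ is injective in $\boldsymbol{\epsilon}_u$, the conditional law of the difference is non-degenerate, so its second moment is positive.

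\textbf{Step 3: the boundedness claim.} Take a non-automorphic neighbor $w$ with $\mathcal{O}_w\neq\mathcal{O}_{v_a}$. The deterministic separation $\Delta:=\|\phi(w)-\phi(v_a)\|^2$ is positive for suitable weights. By the same decomposition,
\[
\mathbb{E}\|\phi_{\mathrm{D2}}(w)-\phi_{\mathrm{D2}}(v_a)\|^2=\Delta+\tau^2\|\mathbf{M}_w-\mathbf{M}_{v_a}\|_F^2 .
\]
Choosing $\tau^2 < \Delta/\sup_u\|\mathbf{M}_u-\mathbf{M}_{v_a}\|_F^2$ makes the intra-orbit expected distance smaller than this separation. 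This is the precise reading of ``appropriately combined.''

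The automorphism-dominance assumption enters when averaging over a random neighbor. Neighbors in the same orbit form the majority, so the neighbourhood-averaged perturbation is dominated by the intra-orbit noise term. This is why the asymmetry introduced is local.

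\textbf{Main obstacle.} The hardest step is showing strictness rigorously for nonlinear $\zeta,\phi$. I would first try to restrict to generic weights (measure-zero exclusion) together with small $\tau$. If that fails, I would assume Lipschitz, piecewise-linear activations (ReLU) and argue on a positive-probability region where the map is affine.
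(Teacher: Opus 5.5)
Your argument is correct in substance, but it takes a different route from the paper. The paper's appendix never models the noise $\boldsymbol{\epsilon}_v$. It specializes to a 10-node lobster graph with a one-layer GCN and identical features, where $\mathbf{h}_i=(\hat{\mathbf{A}}\mathbf{1})_i\,\mathbf{v}$ turns embedding distances into degree gaps. It evaluates the average gap to non-automorphic neighbors ($2/7\approx 0.2857$) and to automorphic ones ($2.5$). It then exhibits explicit weights $(0.1,0.05,-0.1,0.4)$ on the automorphic terms that bring the latter down to $0.25<0.2857$.

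That is an existence-by-example for the ``remains bounded by the separation'' clause, with ``appropriately combined'' meaning a hand-picked weighting. Moreover, the neighbors it treats as automorphic differ from $v_a$ in degree (gap $3$), so they are not in one subgraph orbit in the sense of \Cref{dfn:subgraph-orbit}. In your reading the right-hand side vanishes by that definition, so the first claim becomes a statement about the variance injected by D2. You establish it for arbitrary graphs and depths, and you make ``appropriately combined'' precise as an upper bound on $\tau$. What you give up is only the concrete numerical illustration.

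A few repairs. First, the noiseless part of $\phi_{\mathrm{D2}}$ is not $\phi$: D2 changes the architecture (the $\mathrm{Emb}(\mathbf{o}_v)$ channel and the skips $\mathbf{W}^{(k)}\mathbf{h}_v^{(0)}$). So the mean term in Step 2 and the $\Delta$ in Step 3 should be separations of the noiseless D2 network. This is harmless: the right-hand side is $0$ anyway, and the noiseless D2 separation from a node with a different WL label is positive for generic weights.

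Second, ``whenever $\mathbf{W}^{(K)}\neq 0$'' is too strong. Already for a single edge $u$--$v_a$ and one linear layer, the $\boldsymbol{\epsilon}$-dependence of the difference cancels when the neighbor weight on the role block equals the self weight on it plus $\mathbf{W}^{(1)}$. So keep your genericity qualifier.

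Third, your ``main obstacle'' largely dissolves. Because the right-hand side is $0$, you only need $\mathbb{E}\|f_u(\boldsymbol{\epsilon})\|^2>0$ for the continuous map $f_u(\boldsymbol{\epsilon})=\phi_{\mathrm{D2}}(u)-\phi_{\mathrm{D2}}(v_a)$. Since the Gaussian has full support, this fails only if $f_u\equiv 0$. A witness excludes that: set $\mathbf{W}^{(k)}=0$ for $k<K$ and every other weight reading the role block to zero, with $\mathbf{W}^{(K)}\neq 0$; this gives $f_u=\mathbf{W}^{(K)}(\boldsymbol{\epsilon}_u-\boldsymbol{\epsilon}_{v_a})\not\equiv 0$. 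No Taylor expansion or small $\tau$ is needed for the first claim; small $\tau$ matters only for the boundedness clause. Automorphism-dominance is used only to guarantee that the conditioning event $\mathcal{O}_u=\mathcal{O}_{v_a}$ over neighbors of $v_a$ is non-empty.
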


\subsection{\method: A Framework for Relaxing Equivariance in Link Prediction}
We now present Edge Orbit Equivariant Graph Neural Network (\method), which
integrates a standard GCN with our proposed designs D1 and D2 to address the
automorphic node problem across the full spectrum of low to high automorphism.
At a high level, \method consists of three stages: \stepone precomputes the
subgraph node orbit; \steptwo performs automorphism-aware message passing enhanced by D1--D2; and \stepthree performs link prediction.

The \textit{pre-compute stage} {\bf \stepone} uses WL (\Cref{algo:wl_forward})
to generate $\mathbf{O} \in \mathbb{Z}^{|\mathcal{V}|}$ subgraph orbit hash
labels. 
In the \textit{aggregation stage} {\bf \steptwo}, the generated automorphic embeddings are concatenated with original node features and repeatedly updated for $K$ rounds of message passing steps based on the aggregation in (\Cref{equ:subgraph-orbit aware aggregation}). We collect the final embeddings in $\mathbf{X}^{(K)}$, whose row $\mathbf{x}_u^{(K)}$ is the final embedding of node $u$. \\
In the \textit{prediction stage} {\bf \stepthree}, for a target edge $(u,v)$, we combine pairwise and structure features(common neighbors):
\vspace{-0.2cm}
\begin{equation}
\begin{aligned}
\mathbf{z}_{uv}
=
\mathrm{ReLU}\Big(
&\mathbf{W}_{cn}\,\mathrm{AGG}(\mathcal{N}(u)\cap\mathcal{N}(v)) \\
&+
\mathbf{W}_{h}
(\mathbf{x}_u^{(K)}\odot\mathbf{x}_v^{(K)})
+
\mathbf{W}_{o}
(\mathbf{o}_u^{(0)}\odot\mathbf{o}_v^{(0)})
\Big).
\end{aligned}
\end{equation}
\noindent \textbf{Time complexity.} The 1-WL algorithm with $K$ iterations requires $\mathcal{O}(K|\mathcal{E}|)$ time, since each iteration aggregates neighborhood information along graph edges~\citep{togninalli2019wasserstein}. D1 introduces no complexity. The feature propagation complexity is
$\mathcal{O}\!\left(L \cdot ( |\mathcal{V}| \cdot p^2 + |\mathcal{E}| \cdot p)\right)$,
where $p$ is the hidden feature dimension and $L$ is the number of GNN layers. Therefore, the overall complexity of \method is
$\mathcal{O}\!\left(
K|\mathcal{E}|
+
L \cdot ( |\mathcal{V}| \cdot p^2 + |\mathcal{E}| \cdot p )
\right).
$
A detailed analysis is provided in \Cref{app:time complexity}.

\color{black}
\begin{figure*}[t!]
  \centering
  \begin{subfigure}[t]{0.45\textwidth}
    \centering
    \includegraphics[width=\textwidth, clip]{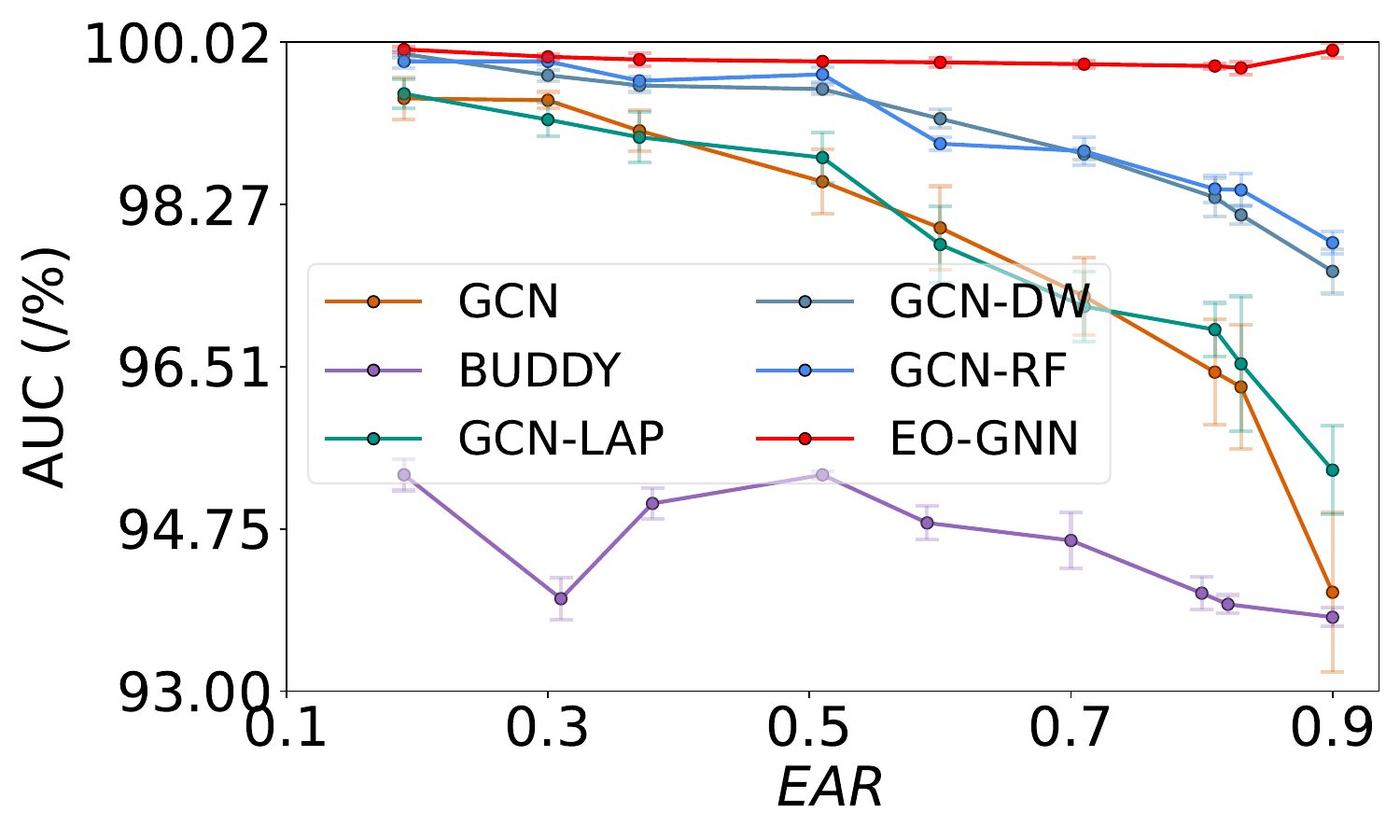}
    \vspace{-0.7cm}
    \caption{\texttt{syn-cora} (AUC)}
    \label{fig:syn-cora-auc}
  \end{subfigure}
  \begin{subfigure}[t]{0.45\textwidth}
    \centering
    \includegraphics[width=\textwidth, clip]{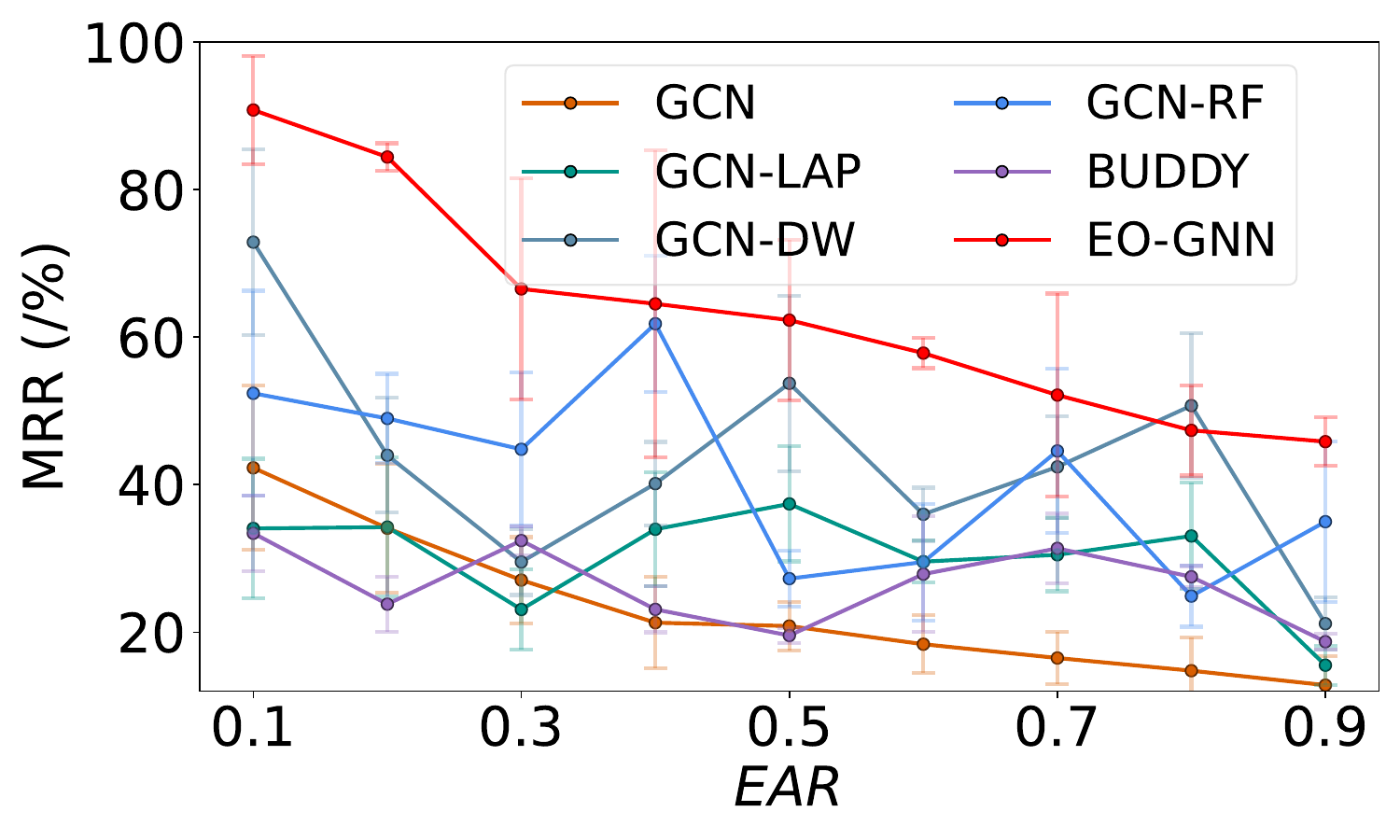}
    \vspace{-0.7cm}
    \caption{\texttt{syn-cora} (MRR)}
    \label{fig:syn-cora-mrr}
  \end{subfigure}
  \hfill
  \vspace{-0.2cm}
\caption{Performance on the semi-synthetic benchmarks. \method remains robust under high automorphism while maintaining competitive MRR and AUC in low-automorphism regimes.}
  \label{fig:5-syn-results}
  \vspace{-0.7cm}
\end{figure*}

\section{Empirical Evaluation}
\label{sec:exp}
In our empirical analysis, we first show the significance of the proposed designs
in \method on both synthetic and real-world graphs with low-to-high \EAR\xspace.
Specifically, we consider the following research questions: \textbf{(RQ1)} In a synthetic setting where the data generation process and automorphism level are controllable, do \method distinguish the automorphic structure? \textbf{(RQ2)} As the automorphism level increases, to what extent do the \method improve the performance? \textbf{(RQ3)} On complex real-world medium- and large-scale graphs with varying levels of automorphism, to what extent do \method and other baselines improve GNN performance? 

\subsection{Experimental Setup}
\textbf{Datasets.} We evaluate EO-GNN on seven real-world link prediction benchmarks from the Planetoid and OGB collections: Cora, Citeseer, Pubmed, ogbl-collab, ogbl-ddi, ogbl-ppa, and ogbl-citation2~\citep{Hu2021ogblscal}. 
To study the effect of structural automorphism, we further construct semi-synthetic datasets with controllable \EAR\xspace values from 0.1 to 0.9, following~\citep{lim2023expressive}. 
For the Planetoid and synthetic datasets, all methods use identical random splits (80\%/15\%/5\%), while official OGB splits are adopted for fair comparison with prior work. \\
\textbf{Training Protocol and Metrics.}
We report Mean Reciprocal Rank (MRR), Area Under the ROC Curve (AUC) and Hits@K depending on the benchmark setting, following the standard evaluation protocol established for each benchmark~\citep{li_evaluating_nodate-1}. We primarily emphasize MRR due to its sensitivity to structural discrimination quality in link prediction. Although the reported metric varies by dataset, within each dataset all methods are evaluated under identical splits and the same metric, so comparisons are always apples-to-apples. All reported results are averaged over five random runs using identical training, validation and testing protocols across all methods. Detailed hyperparameter configurations and training settings are provided in \Cref{subsec:app-choice-metrics}. \\
\noindent \textbf{Baseline models.} 
We compare EO-GNN with representative baselines spanning five categories:
(1)~\textit{heuristic link predictors}, including Common Neighbor (CN), Adamic--Adar (AA), and Resource Allocation (RA)~\citep{libennowell2007link,adamic2003friends,zhou2009predicting};
(2)~\textit{classical GNNs}, including GCN, GAT, GIN, GraphSAGE, and MixHop~\citep{Kipf2016SemiSupervisedCW,Velickovic2017GraphAN,Xu2018HowPA,Hamilton2017InductiveRL,abuelhaija2019mixhophigherordergraphconvolutional};
(3)~\textit{pairwise and structure-aware GNNs}, including SEAL, NBFNet, Neo-GNN, BUDDY, and NCN(C)~\citep{zhang_link_2018,3540261.3542517,yun2021neognns,chamberlain_graph_2023,wang_neural_2023};
(4)~\textit{graph-agnostic models}, such as LINKX and MLP~\citep{Lim2021LargeSL}; and
(5)~\textit{GNNs with positional encodings}, including GCN-DW, GCN-Lap, and GCN-RF~\citep{perozzi_deepwalk_2014,Ito2025LearningLP,Rampek2022RecipeFA}.
The real-world evaluation (\Cref{tab:main_results}) covers the heuristic, classical-GNN (GCN, GAT, GIN, GraphSAGE), and pairwise/structure-aware baselines, whereas the synthetic benchmark (\Cref{tab:app-syn-cora-results}) additionally includes MixHop, the graph-agnostic models, and the positional-encoding variants.
Additional dataset generation details and results are provided in \Cref{app:real,app:synthetic}.
\subsection{(RQ1) Effectiveness of \method}
\label{subsec:synthetic benchmark}
Figure~\ref{fig:syn-cora-auc} reports the mean test AUC (with standard deviation) over five runs of the top-6 performing models on the synthetic benchmark, selected for visual clarity; the complete comparison against all baselines is reported in \Cref{tab:app-syn-cora-results}. All baselines exhibit clear performance degradation as \EAR\xspace increases, especially classical GNNs under highly symmetric settings (\EAR\xspace $\geq 0.7$). Positional encoding methods, including GCN-DW, GCN-RF, and GCN-Lap, improve robustness over vanilla GNNs by up to 3.71\% in AUC, indicating that auxiliary structural signals partially mitigate symmetry ambiguity. Pairwise-feature methods remain relatively stable across all automorphism regimes, but often sacrifice performance under low-\EAR\xspace settings. In contrast, \method (Red) consistently achieves the best AUC and MRR, reaching up to 99.7\% with the lowest variance across all automorphism levels, demonstrating strong robustness under increasing structural automorphism.
\subsection{(RQ2) Ablation Study: Significance of Design Principles}
\label{subsec:ablation study}
\begin{figure*}[t!]
\vspace{-0.2cm}
  \centering
  \begin{subfigure}[t]{0.43\textwidth}
    \centering
    \includegraphics[width=\textwidth, clip]{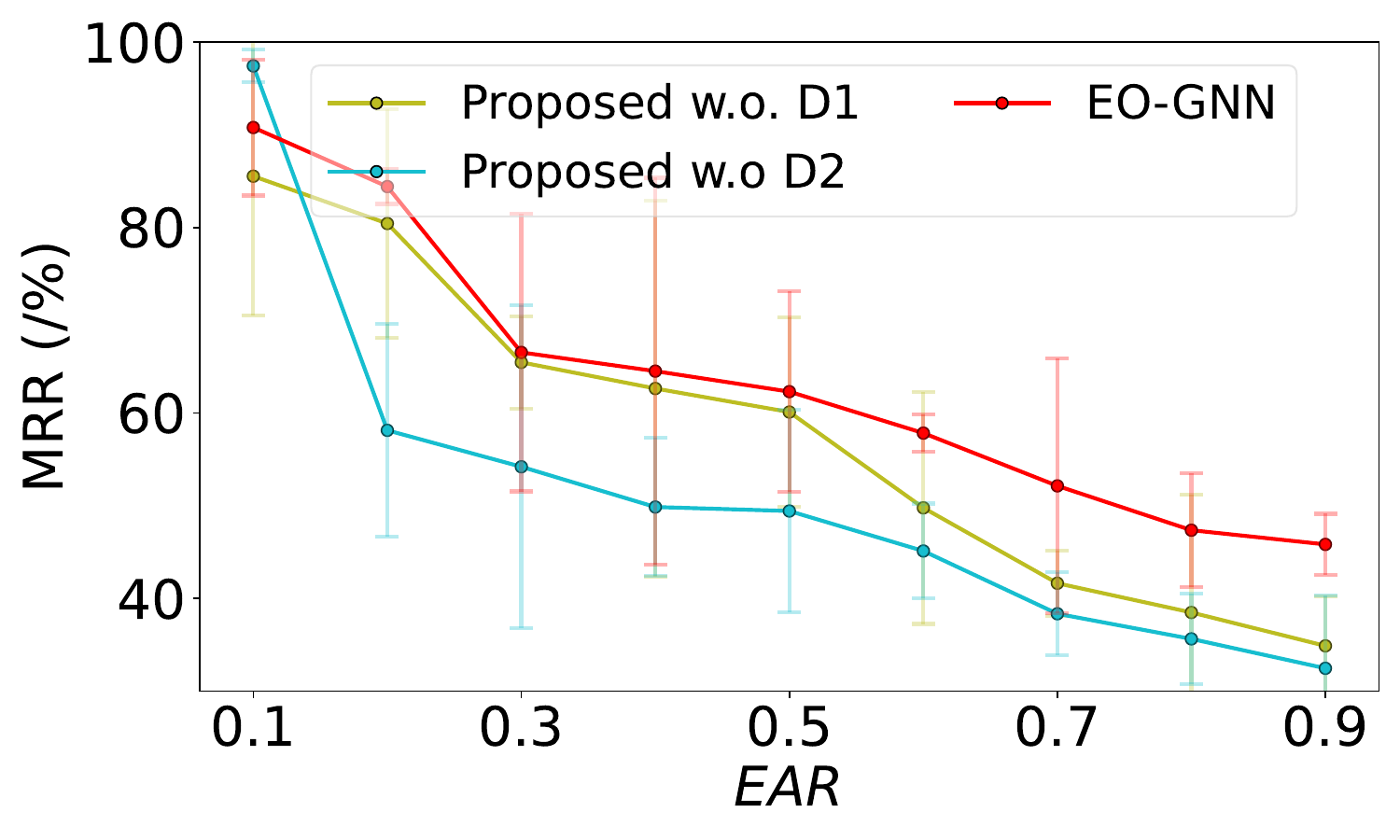}
    \vspace{-0.7cm}
    \caption{\texttt{syn-cora} (MRR)}
    \label{fig:ablation-syn-cora-mrr}
  \end{subfigure}
  \begin{subfigure}[t]{0.43\textwidth}
    \centering
    \includegraphics[width=\textwidth, clip]{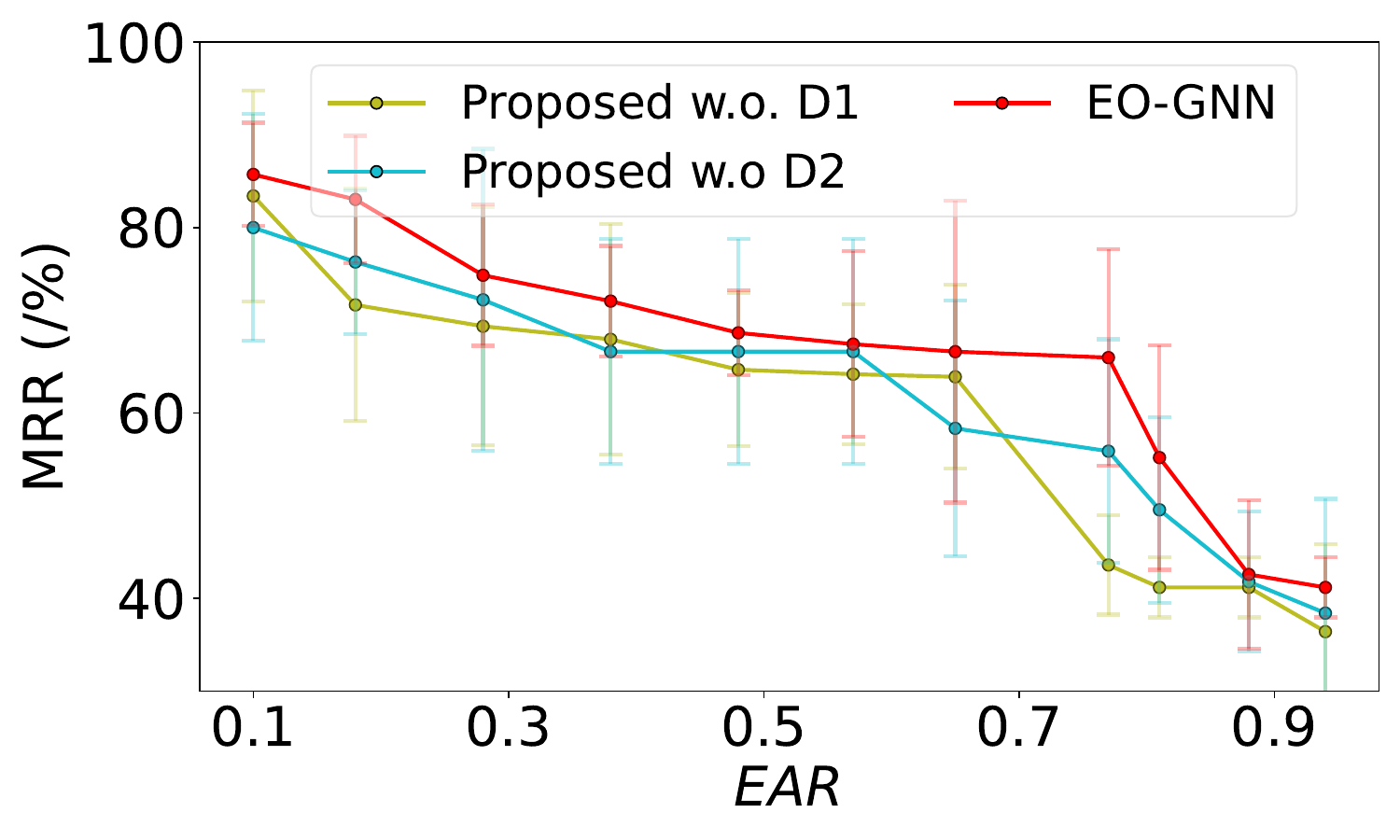}
    \label{fig:ablation-syn-citeseer-mrr}
    \vspace{-0.7cm}
    \caption{\texttt{syn-citeseer} (MRR)}
  \end{subfigure}
  \vspace{-0.2cm}
  \caption{Impact of D1 and D2 Modules in EO-GNN. When \EAR\xspace$\geq$\num{0.5}, both D1 and D2 contribute substantially to the performance gains, with D2 provides slightly stronger improvements on Cora.}
  \label{fig:ablation-syn-results}
  \vspace{-0.71cm}
\end{figure*} 

We evaluate the effectiveness of the proposed designs on \texttt{syn-cora} and
\texttt{syn-citeseer} through the ablation studies shown in
\Cref{fig:ablation-syn-results}. Specifically, we consider two variants of
\method: (1) removing the \Designone (D1,
\Cref{equ:dropout}), and (2) removing the \Designtwo (D2,
\Cref{equ:subgraph-orbit aware aggregation}). These variants are compared
against the full model.

\textit{D1: \Designone.}
As illustrated in
Fig.~\ref{fig:ablation-syn-results}-a and
Fig.~\ref{fig:ablation-syn-results}-b,
D1 (Green) substantially improves both the expressiveness and robustness of link
prediction across different automorphism levels, with particularly strong gains
in highly automorphic regimes. On Syn-Cora, D1 achieves
up to a 22.30\% improvement at $\EAR\xspace = 0.8$. Similarly, on
Syn-Citeseer, a clear inflection point emerges around $\EAR\xspace = 0.81$,
where the performance gap reaches 42.36\% in MRR. \\
\textit{D2: \Designtwo.} The results further demonstrate that D2 (Blue) plays an important role in maintaining robustness under highly automorphic settings, and its impact is more correlated with the underlying dataset characteristics. Removing D2 leads to a gradual degradation in both performance and stability as $\EAR\xspace$ increases, specifically, removing D2 reduces the average MRR by 26.28\% on Syn-Cora and by 10.09\% on Syn-Citeseer. Moreover, Syn-Citeseer appears more sensitive to high automorphism levels. Additional results and analyses are provided in \Cref{sec:app:ablation_syn_details}.
\setlength{\tabcolsep}{1pt}
\begin{table}[h]
    \scriptsize
    \centering
    \caption{%
    Real data: mean accuracy $\pm$ stdev over different data splits. Best model per benchmark highlighted in YellowGreen. OOM indicates that the algorithm requires over 40Gb of GPU memory or more than 24 hours. We highlight the two best-performing GNN models. }
    \vspace{-0.2cm}
    \label{tab:main_results}
    \begin{adjustbox}{width=0.9\textwidth}
    \begin{tabular}{lccccccc}
    \toprule
    \textbf{Dataset} & \textbf{PPA} & \textbf{Pubmed} & \textbf{Collab} & \textbf{DDI} & \textbf{Cora} & \textbf{Citeseer} & \textbf{Citation2} \\
    \textbf{\EAR} & \textbf{0.001} & \textbf{0.216} & \textbf{0.505} & \textbf{0.002} & \textbf{0.139} & \textbf{0.326} & \textbf{0.311} \\
    \textbf{Metric} & HR@100 & MRR & HR@50 & HR@20 & MRR & MRR & MRR \\
    \midrule
    CN & $27.65 {\scriptstyle \pm 0.00}$ & $14.66 {\scriptstyle \pm 0.06}$ & $61.37 {\scriptstyle \pm 0.00}$ & $17.73 {\scriptstyle \pm 0.00}$ & $32.88 {\scriptstyle \pm 0.09}$ & $21.13 {\scriptstyle \pm 0.02}$ & $74.30 {\scriptstyle \pm 0.00}$ \\
    AA & $32.45 {\scriptstyle \pm 0.00}$ & $19.87 {\scriptstyle \pm 0.30}$ & $64.17 {\scriptstyle \pm 0.00}$ & $18.61 {\scriptstyle \pm 0.00}$ & $47.33 {\scriptstyle \pm 0.09}$ & $24.61 {\scriptstyle \pm 0.11}$ & $75.96 {\scriptstyle \pm 0.00}$ \\
    RA & $49.33 {\scriptstyle \pm 0.00}$ & $19.16 {\scriptstyle \pm 0.27}$ & $63.81 {\scriptstyle \pm 0.00}$ & $6.23 {\scriptstyle \pm 0.00}$ & $47.17 {\scriptstyle \pm 0.11}$ & $23.94 {\scriptstyle \pm 0.16}$ & $76.04 {\scriptstyle \pm 0.00}$ \\
    \midrule
    GCN & $29.57 {\scriptstyle \pm 2.90}$ & $14.55 {\scriptstyle \pm 2.41}$ & $46.25 {\scriptstyle \pm 1.60}$ & $65.04 {\scriptstyle \pm 0.76}$ & $34.19 {\scriptstyle \pm 6.23}$ & $44.37 {\scriptstyle \pm 6.06}$ & $84.74 {\scriptstyle \pm 0.07}$ \\
    GIN & OOM & $15.96 {\scriptstyle \pm 2.21}$ & $48.84 {\scriptstyle \pm 0.54}$ & $67.05 {\scriptstyle \pm 0.61}$ & $42.47 {\scriptstyle \pm 7.02}$ & $29.48 {\scriptstyle \pm 6.21}$ & OOM \\
    SAGE & $25.80 {\scriptstyle \pm 1.94}$ & $11.34 {\scriptstyle \pm 2.41}$ & $48.10 {\scriptstyle \pm 0.81}$ & $63.69 {\scriptstyle \pm 1.45}$ & $30.81 {\scriptstyle \pm 8.07}$ & $44.48 {\scriptstyle \pm 8.40}$ & $82.60 {\scriptstyle \pm 0.36}$ \\
    GAT & OOM & $4.85 {\scriptstyle \pm 0.91}$ & $48.33 {\scriptstyle \pm 0.61}$ & $34.51 {\scriptstyle \pm 22.49}$ & $36.96 {\scriptstyle \pm 4.27}$ & $45.69 {\scriptstyle \pm 7.19}$ & OOM \\
    \midrule
    SEAL & $48.80 {\scriptstyle \pm 3.16}$ & \cellcolor{YellowGreen!20}$49.02 {\scriptstyle \pm 13.91}$ & $64.74 {\scriptstyle \pm 0.43}$ & $30.56 {\scriptstyle \pm 3.86}$ & $37.81 {\scriptstyle \pm 9.93}$ & $39.36 {\scriptstyle \pm 4.99}$ & $87.67 {\scriptstyle \pm 0.32}$ \\
    NBFNet & OOM & $19.46 {\scriptstyle \pm 2.42}$ & OOM & $44.00 {\scriptstyle \pm 0.58}$ & $41.48 {\scriptstyle \pm 5.11}$ & $38.17 {\scriptstyle \pm 3.06}$ & OOM \\
    Neo-GNN & $49.13 {\scriptstyle \pm 0.60}$ & \cellcolor{YellowGreen!40}$31.44 {\scriptstyle \pm 3.85}$ & $57.52 {\scriptstyle \pm 0.37}$ & $63.57 {\scriptstyle \pm 3.52}$ & $41.48 {\scriptstyle \pm 5.11}$ & $53.97 {\scriptstyle \pm 5.88}$ & $87.26 {\scriptstyle \pm 0.84}$ \\
    BUDDY & $49.85 {\scriptstyle \pm 0.20}$ & $19.46 {\scriptstyle \pm 2.42}$ & \cellcolor{YellowGreen!20}$65.94 {\scriptstyle \pm 0.58}$ & $78.51 {\scriptstyle \pm 1.36}$ & $30.78 {\scriptstyle \pm 5.55}$ & $22.84 {\scriptstyle \pm 0.36}$ & $87.56 {\scriptstyle \pm 0.11}$ \\
    NCN & \cellcolor{YellowGreen!40}$61.19 {\scriptstyle \pm 0.85}$ & $25.92 {\scriptstyle \pm 4.33}$ & $64.76 {\scriptstyle \pm 0.87}$ & \cellcolor{YellowGreen!20}$82.32 {\scriptstyle \pm 6.10}$ & \cellcolor{YellowGreen!20}$45.76 {\scriptstyle \pm 6.39}$ & \cellcolor{YellowGreen!20}$54.97 {\scriptstyle \pm 6.03}$ & \cellcolor{YellowGreen!30}$88.09 {\scriptstyle \pm 0.06}$ \\
    NCNC & \cellcolor{YellowGreen!20}$61.42 {\scriptstyle \pm 0.73}$ & $20.31 {\scriptstyle \pm 6.51}$ & \cellcolor{YellowGreen!40}$66.61 {\scriptstyle \pm 0.71}$ & \cellcolor{YellowGreen!40}$84.11 {\scriptstyle \pm 3.67}$ & \cellcolor{YellowGreen!40}$48.68 {\scriptstyle \pm 8.60}$ & \cellcolor{YellowGreen!40}$64.03 {\scriptstyle \pm 3.67}$ & \cellcolor{YellowGreen!40}$89.12 {\scriptstyle \pm 0.40}$ \\
    \midrule
    \method & \cellcolor{YellowGreen!70}$61.58_{\pm 0.45}$ & \cellcolor{YellowGreen!70}$68.51 {\scriptstyle \pm 0.74}$ & \cellcolor{YellowGreen!70}$70.95 {\scriptstyle \pm 0.81}$ & \cellcolor{YellowGreen!70}$86.95 {\scriptstyle \pm 1.63}$ & \cellcolor{YellowGreen!70}$50.02 {\scriptstyle \pm 5.83}$ & \cellcolor{YellowGreen!70}$72.88 {\scriptstyle \pm 1.45}$ & \cellcolor{YellowGreen!70}$90.15 {\scriptstyle \pm 0.06}$ \\ 
    \textbf{Improve.}
    & $+0.16\uparrow$
    & $+19.49\uparrow$
    & $+4.34\uparrow$
    & $+2.84\uparrow$
    & $+1.34\uparrow$
    & $+8.85\uparrow$
    & $+1.03\uparrow$ \\
    \bottomrule
\end{tabular}
\end{adjustbox}
\end{table}

\subsection{(RQ3) Evaluation on Real-World Graphs}
\label{subsec:eval_real_world}
\noindent \textbf{Significance of \method.}  \Cref{tab:main_results} shows that \method consistently achieves the strongest overall performance across both Planetoid and OGB link prediction benchmarks under multiple evaluation metrics. More importantly, the gains become increasingly pronounced on datasets with larger EAR values, such as Pubmed, Citeseer, and Collab, indicating that \method is particularly effective under strong structural automorphism and automorphism-induced ambiguity. In particular, on Pubmed, SEAL which also incorporates symmetry-aware structural modeling already outperforms conventional baselines by a large margin, while \method further improves upon SEAL by 19.49 MRR points(28.44\%). In contrast, datasets with near-zero \EAR\xspace values, such as PPA and DDI, exhibit comparatively smaller gains. We additionally observe that pairwise and subgraph-based methods generally achieve stronger robustness than classical message-passing GNNs, highlighting the importance of richer structural representations for link prediction.  
Nevertheless, as graph size increases, the estimation quality of subtree orbits in \Cref{algo:wl_forward} gradually decreases, causing D2 to classify more nodes as structurally equivalent and partially reducing its discriminative capability. Despite this limitation, \method consistently maintains strong performance with low variance across all benchmarks.

\section{Conclusion}
We highlighted the limitations of node-level symmetry frameworks for link prediction and introduced an edge-centric view via the concept of \emph{edge orbit}. We presented \method,
an automorphism-aware GNN with theoretically grounded components. Together, our
framework and design better align GNN inductive biases with graph symmetries,
improving both the quantitative theoretical framework and empirical performance in
link prediction.

\textbf{Limitations.} Our theoretical framework is principled and offers
theoretically grounded design insights. However, it assumes identical vertex
representations, following prior work. Although this assumption does not strictly
hold in real-world graphs, our experiments indicate that the theoretical findings
remain valid even when node features differ. Extending this analysis to
incorporate semantic embeddings (and understanding their interaction with
structural signals) remains an important direction for future theoretical
development.
\section*{Acknowledgments}
This work was supported in part by the National Science Foundation under Grants No. IIS-2212143 and IIS-2504090, and in part by the Federal Ministry of Research, Technology and Space (BMFTR), Germany, under award number 01IS23066.

\subsection*{AI use statement}
In accordance with the venue's AI policy, we disclose that generative AI tools were used solely to aid and polish writing, specifically grammar correction, improving clarity of exposition, and minor LaTeX formatting. No research content or code was produced with the aid of AI tools, and we take full responsibility for the final content of this work.

\subsection*{Ethics statement}
This work is methodological, studying the expressive power of graph neural
networks for link prediction. All experiments use publicly available benchmark
datasets (the Planetoid citation networks and the Open Graph Benchmark), which
contain no personally identifiable or sensitive information and are standard in
the community under their respective licenses. The research involves no human
subjects, and we foresee no direct risks relating to discrimination, fairness,
privacy, or security beyond the general dual-use considerations common to
machine-learning research. The authors declare no conflicts of interest.

\subsection*{Reproducibility statement}
We have taken several steps to ensure reproducibility. Our model and
experimental code are available as an anonymous repository, linked in the
contributions of \Cref{sec:design}. All theoretical claims state their
assumptions explicitly and are accompanied by complete proofs in
\Cref{app:proofs}. The construction of our synthetic benchmarks with
controllable automorphism (\EAR) is described in detail in \Cref{app:synthetic},
and the real-world datasets, their statistics, and the splits used are described
in \Cref{app:real}. Full hyperparameter settings and training protocols for all
methods are reported in \Cref{subsec:app-choice-metrics}.

\bibliography{iclr2027_conference}
\bibliographystyle{iclr2027_conference}

\newpage
\appendix
\section{Nomenclature}
\label{app:dfn}
We summarize the main symbols used in this work and their definitions below:
\begin{table}[h!]
     \caption{Major symbols and definitions.} %
     \label{tab:dfn}
     {\small
     \resizebox{0.8\textwidth}{!}{
    \begin{tabular}{ l@{\hspace{1cm}}p{11cm}}
         \toprule
         \textbf{Symbols} &  \textbf{Definitions}\\
         \midrule
         $\graph = (\vertexSet, \edgeSet)$ & graph $\graph$ with nodeset $\vertexSet$, edgeset $\edgeSet$ \\
         $\matA$ & $n \times n$ adjacency matrix of $\graph$ \\
         $\matX$ & $n \times F$ node feature matrix of $\graph$ \\
         $\V{x}_v$ & $F$-dimensional feature vector for node $v$ \\
         $\matL$ & unnormalized graph Laplacian matrix \\ 
         \midrule
         $\mathcal{G} / \text{Aut}(\mathcal{G}) $ & the quotient of the automorphism when it acts on the graph $\modify{\mathcal{G}}$ \\
         $ \text{Aut}(\mathcal{G})$ & automorphism on graph $\mathcal{G}$  \\
         $\pi$ & vertex permutation; $\pi \in \mathrm{Aut}^k(\mathcal{G})$ denotes a $k$-hop graph automorphism \\
         \midrule
         $N(v)$ & general type of neighbors of node $v$ in graph $\graph$ \\
         $\neighNoSelfLoop(v)$ & general type of neighbors of node $v$ in $\graph$  \textit{without self-loops} (i.e., excluding $v$)  \\
         $N_i(v),\neighNoSelfLoop_i(v)$ & $i$-hop/step neighbors of node $v$ in $\graph$ (at exactly distance $i$) maybe-with/without self-loops, resp.\\
         $\edgeSet_2$ & set of pairs of nodes $(u,v)$ with shortest distance between them being 2 \\
         $d, d_{\mathrm{max}}$ & node degree and maximum node degree across all nodes $v \in \vertexSet$, resp. \\
        \midrule
         $K$ & the number of rounds in the neighborhood aggregation stage \\
         $\matW$ & learnable weight matrix for GNN model \\
         $\rho$ & non-linear activation function \\
         $[ \cdot  \vert \cdot ]$& vector concatenation operator \\
         \midrule
         \texttt{AGGR} & function that aggregates node feature representations within a neighborhood \\
         \texttt{COMBINE} & function that combines feature representations from different neighborhoods \\
         $\mathcal{P}(\mathcal{V})$ & Power set of vertex set $\mathcal{V}$\\
        $\mathcal{W}$ & Weight space of the parameterized neural network; $w \in \mathcal{W}$ \\
        $\phi$ & Graph encoder \\
         \midrule
        $\mathcal{O}(v)$ & orbit of vertex $v$ under $\operatorname{Aut}(\mathcal{G})$ \\
        $O$ & $O \in \mathbb{Z}^{\vert \mathcal{V}\vert}$ WL-Hash value from  \Cref{algo:wl_forward}  \\
        $\mathbf{O}$ & $\mathbf{O} \in \mathbb{Z}^{\vert \mathcal{V}\vert\times d_o}$ embedded node feature based on the hash label from  \Cref{algo:wl_forward} as node feature \\
        $\vert \mathcal{O}_u \vert$ & orbit size of vertex $u$: number of the vertices sharing the same role as $u$ \\
        \modify{$\mathcal{O}_{\mathcal{S}}(u)\,(\!=\!\mathcal{O}_u)$} & \modify{subgraph orbit of $u$: its $k$-equivalence class under a GNN (\Cref{dfn:subgraph-orbit})} \\
        \modify{$\mathcal{O}_{\mathcal{E}}(e)$} & \modify{WL-induced edge-orbit signature of $e=\{u,v\}$: the multiset $\multiset{\mathcal{O}_{\mathcal{S}}(u),\mathcal{O}_{\mathcal{S}}(v)}$ (\Cref{def:link-orbit})} \\
        \modify{$[e]_{\mathcal{E}}$} & \modify{edge orbit of $e$: the edges sharing its signature, i.e.\ its $\sim^{\mathcal{E}}$ equivalence class (\Cref{dfn:automorphism-ratio})} \\
        $\mathcal{O}(\cdot)$ & Complexity \\
        $D$ & Decoder; typically implemented as a dot product \\
        $\mathcal{M}_w$ & The full link-prediction model for weight set $w$, i.e., the GNN encoder $\phi_w$ composed with a decoder $D$; $D$'s functional form is left unspecified \\
        $\mathcal{S}^{(k)}(u)$ & $k$-hop enclosing subgraph centered at node $u$ \\
        $\mathcal{S}^{(k)}(u)_{\epsilon}$ & $k$-hop enclosing subgraph of $u$ with $\epsilon$-level structural perturbation \\
        \modify{$\epsilon$} & \modify{Degree of structural alteration; an embedded distance in the representation space, i.e.\ the desired closeness to the unperturbed graph} \\
        $\alpha_\mathcal{V}$ & Proposed vertex-level automorphism measure (see Appendix~\Cref{app:automorphism}) \\
        $\alpha_\mathcal{E}$ & Proposed edge-level automorphism measure (see Appendix~\Cref{app:automorphism})\\
        \midrule
        $\widetilde{A} $ & Adjacency matrix after random dropout \\
        $\delta$        & Permissible distance from $a$ for input \\
        $\sigma$ & bound on failure probability; a guarantee is stated to hold with probability at least $1-\sigma$ (\Cref{theo:orbits-aware-dropout}) \\
        $\tau$ & standard deviation of the Gaussian perturbation noise used in D2 (\Cref{dfn:role-biased-aggregation}) \\
        $\mathbf{r}^{(0)}$        & Embedding vector from WL-Hash $\mathbf{o}$ for input in iteration $0$\\
        \midrule
               $\mathbf{E}_e \quad \quad$ & weight of graph adjacency matrix \\
        $\mathbf{X}^{(K)}$ & final embedding of GNN \\
        $\mathbf{H}_c$ & common neighbor as feature vector, $\mathbf{H}_c(i, j) = 1$ if $(i, j ) \in \mathcal{E}$ \\
        $\mathbf{R}_v^{(0)}$ & embedding from embedded WL-hash Label \\
        $\mathbf{H}_e$ & embedding from \\
        $\matW_f$ & weight $\matW_f \in\mathbb{R}^{p\times 1}$ in the decoder \\
        $p$ & feature dimension in the final embedding \\
         \bottomrule 
    \end{tabular}
    }
    }
    \end{table}
\newpage
\section{Extended Preliminary}
\begin{definition}[Graph Quotient]
\label{equ:quotient}
The \emph{quotient} of a graph $\mathcal{G}$ is the partition of its vertex set
into automorphic orbits under the action of $\operatorname{Aut}(\mathcal{G})$:
$\mathcal{G} \mathbin{/} \operatorname{Aut}(\mathcal{G})
= \{ \mathcal{O}(v) \mid v \in \mathcal{V}_\mathcal{G} \}.$
\end{definition} 
\noindent \textbf{Subgraph Orbit $\neq$ Orbit.} The difference between \emph{Subgraph Orbit} and standard \emph{Orbit} lies in the group action they are based on: an orbit is induced by is a \emph{global} automorphism (i.e., an adjacency-preserving permutation of the entire node set), whereas subgraph orbit in link prediction is a local generalization of automorphism based on the fact that the GNN operates on a small $k$-hop neighborhood. In  \Cref{fig:motive}, nodes b and d belong to the same subgraph orbit under a 1-layer GNN, as their 1-hop enclosing subgraphs are structurally identical. 
\noindent \textit{Motivation: It ensures that structural equivalence is rigorously characterized through the isomorphism of GNN-induced local enclosing subgraphs. } \\
 
\section{Extended Related Work}
\label{app:related work}
\noindent \textbf{MPNNs for Link Prediction.}
Link prediction is commonly performed using embeddings generated by GCN-based encoders followed by simple decoders such as the dot
product~\citep{Velickovic2017GraphAN, Xu2018HowPA, Hamilton2017InductiveRL}.
However, \citet{zhang_link_2018} showed that these vanilla GCNs cannot
distinguish automorphic nodes due to their inherent permutation invariance.
To address this limitation, \citet{Zhang2020LabelingTA} introduced node labeling features that encode relative distances between target nodes and their neighborhoods, enhancing structural discrimination.
Subsequent work incorporated manually designed structural priors, such as the Jaccard index~\citep{chamberlain_graph_2023, yun2021neognns}, while
\citet{wang_neural_2023} proposed soft completion of common-neighbor features to mitigate distribution shifts and structural holes. Recent work, including the introduction of higher-order shortest path methods and PageRank attention-based frameworks, has attempted to facilitate link prediction by utilizing manually extracted local structures \citep{10.1145/3637528.3672025}. Our work, in contrast, aims to provide a principled solution within a permutation invariance
framework. 

\noindent \textbf{Expressivity Comparison.} The closest method to ours is $k_\phi$-$k_\rho$-$m$-WL. However, our approach
differs in three key aspects: (i)~\textit{Motivation.} The
$k_\phi$-$k_\rho$-$m$-WL framework provides a unified view for hierarchically
comparing the aggregation and update designs of existing methods relative to
1-WL. In contrast, our work defines model expressiveness by explicitly
identifying, quantifying and measuring indistinguishable edges in standard GNNs
in a model-agnostic manner. (ii)~\textit{Objective.} While
$k_\phi$-$k_\rho$-$m$-WL aims to compare the expressiveness of existing
approaches for link prediction, our framework studies graph automorphisms in
real-world graphs and analyzes their impact on link prediction across different
categories of GNNs. (iii)~\textit{Measure and synthetic benchmark.} The
$k_\phi$-$k_\rho$-$m$-WL framework proposes a vertex-level measure to compare
graph automorphisms, which leads to a mismatch with the edge-level nature of the
problem we study. In contrast, our proposed measure is an empirical estimator
derived directly from our theoretical framework. This estimator is consistently
aligned with our method design, our synthetic benchmark and our theoretical
analysis.

\noindent \textbf{Dropout Comparison.} \textit{Dropout}~\citep{hinton2012improvingneuralnetworkspreventing} was originally introduced to mitigate overfitting by randomly deactivating hidden units during training. Subsequent analysis showed that dropout implicitly induces a regularization term that is first-order equivalent to an $L_2$ penalty after feature scaling~\citep{Wager2013DropoutTA}. In graph learning, dropout primarily has three variants. \textit{DropEdge}~\citep{rong2020dropedge} extends dropout by randomly removing edges during message passing, effectively alleviating the over-smoothing problem in GNNs~\citep{li2018deeper, chen2020simple}. \textit{DropNode}~\citep{you2021graphcontrastivelearningaugmentations} further generalizes this idea by randomly discarding vertices and their incident edges, serving as a powerful augmentation method that enhances model generalizability, transferability and robustness. \textit{DropPath}~\citep{huang2016deep} stochastically masks random-walk–based paths within the graph, reducing redundancy in local substructures. More recently, \textit{AdaEdge}~\citep{fan2021adaptively} optimizes the graph topology in a learnable fashion by adaptively applying dropout according to local homophily ratios during training.
We empirically compare \Designone \xspace with other dropout variants on the synthetic benchmark in \Cref{subsec:ablation study}.

\begin{figure*}[t]
    \centering
    \begin{minipage}{\textwidth}
        \centering
        \vspace{0.5em}
        \label{fig:dropout-comparison-syn-results}
    \end{minipage}
    \vspace{1em} 
    \begin{subfigure}[t]{0.48\textwidth}
        \centering
        \includegraphics[width=\textwidth, clip]{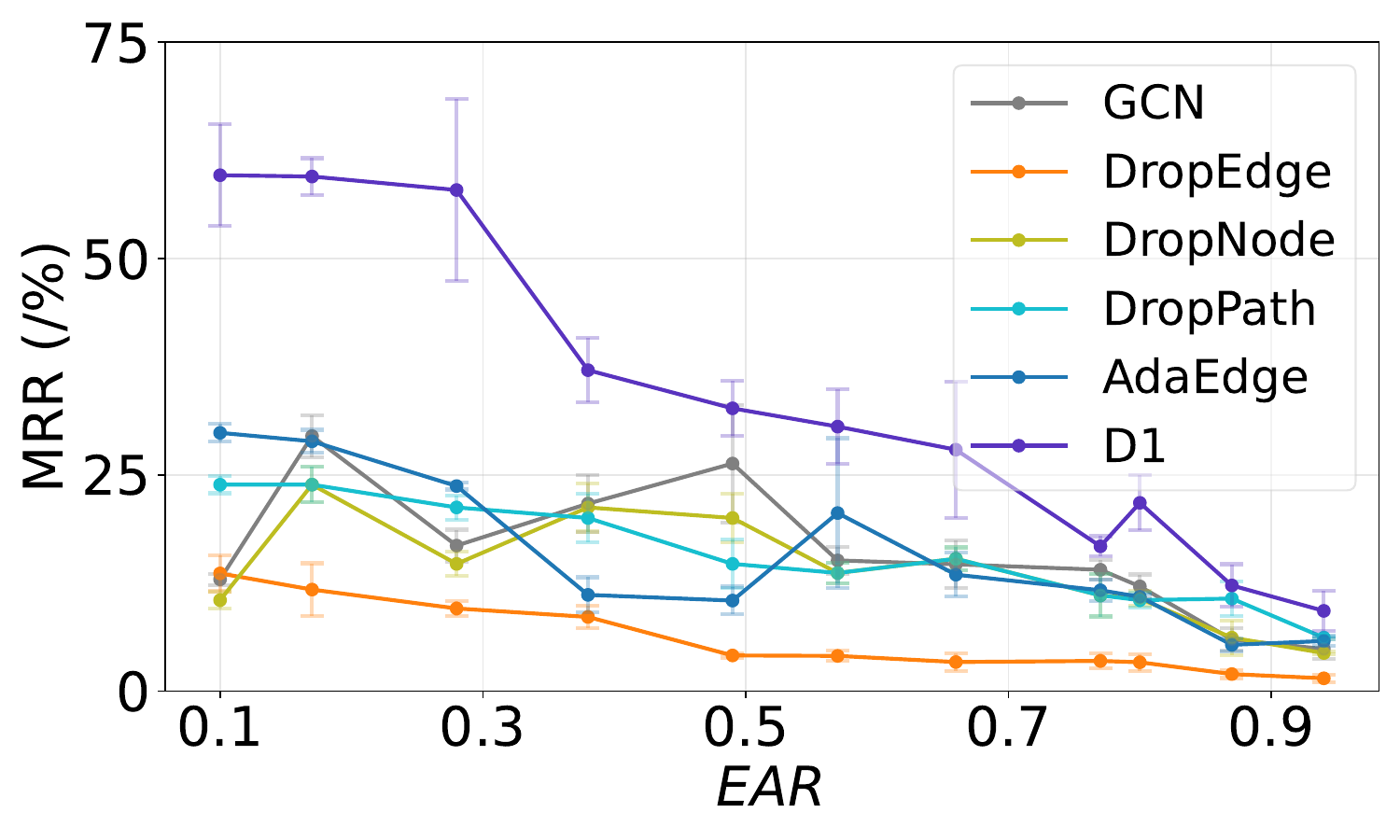}
        \caption{\texttt{syn-citeseer} (MRR)}
        \label{fig:comparison-dropout-auc}
    \end{subfigure}
    \begin{subfigure}[t]{0.48\textwidth}
        \centering
        \includegraphics[width=\textwidth, clip]
        {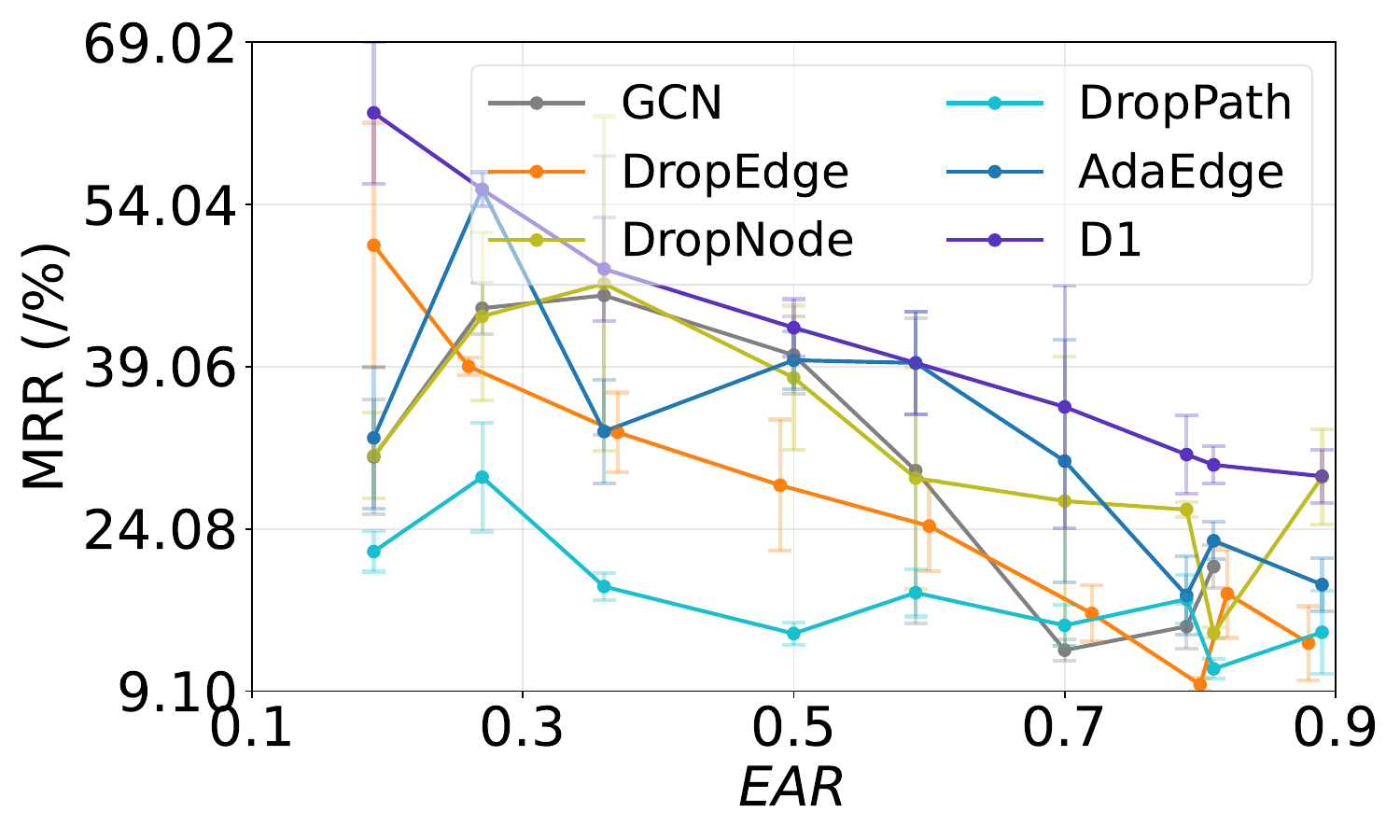}
        \caption{\texttt{syn-cora} (MRR)}
        \label{fig:comparison-dropout-mrr}
    \end{subfigure}
        \captionof{figure}{
        \textbf{Empirical Comparison of Dropout Variants on Synthetic Benchmarks. }
        Ablation study comparing our proposed method, $\text{\Designone}$ (D1), against three popular dropout approaches ($\text{DropEdge}$, $\text{DropNode}$, $\text{DropPath}$,  $\text{AdaEdge}$ and $\text{DropPath}$) on synthetic datasets $\textit{syn-citeseer}$ and $\textit{syn-cora}$. We show that $\text{\Designone}$ consistently achieves higher MRR than the other variants. This demonstrates that leveraging the WL test's automorphism prior allows $\text{\Designone}$ to identify and prune automorphic (structurally identical) edges more effectively. It enhances the model's expressiveness to distinguish non-isomorphic local structures. (See \Cref{sec:design} for method details and \Cref{subsec:ablation study} for other empirical evaluation) 
        }
\end{figure*}

\noindent \textbf{Summary}  
While prior works utilized dropout to address various issues in GNN. We are the first to leverage the automorphism prior derived from the WL test to optimize the dropout process. This design choice uniquely enables us to mitigate degradations in GNN model expressiveness, which we both theoretically justify and empirically demonstrate on diverse datasets. We note that our design philosophy is similar to the adaptive nature of $\text{AdaEdge}$, but our method specifically mitigates automorphism as opposed to optimizing for homophily labels.

\section{Proofs and Discussions of Theorems}
\label{app:proofs}

\subsection{Proof for Edge Orbit Equivariance}
\label{appendix:edge-orbit-equivariance}
\begingroup
\begin{Proposition}[Edge-Orbit Equivariance]
\label{prop:edge-orbit-equivariance}
Let $\mathcal{G}$ be an undirected graph with automorphism group
$\operatorname{Aut}(\mathcal{G})$ (\Cref{dfn:graph-automorphism}), and let
$\mathcal{O}_{\mathcal{E}}(e) = \multiset{\mathcal{O}_{\mathcal{S}}(u),\,\mathcal{O}_{\mathcal{S}}(v)}$
be the WL-induced edge orbit of $e = \{u, v\}$ (\Cref{def:link-orbit}). Then the
edge-orbit assignment is \emph{invariant} under automorphisms: for every
$\phi \in \operatorname{Aut}(\mathcal{G})$ and every $e = \{u,v\}$,
\[
\mathcal{O}_{\mathcal{E}}(\phi(u), \phi(v)) = \mathcal{O}_{\mathcal{E}}(u, v).
\]
For a permutation $\phi \notin \operatorname{Aut}(\mathcal{G})$ the equality need
not hold, since $\phi$ may neither preserve subgraph orbits nor map edges to
edges.
\end{Proposition}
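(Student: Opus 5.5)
The plan is to reduce the statement to a node-level fact: every automorphism preserves subgraph orbits, i.e.\ $\mathcal{O}_{\mathcal{S}}(\phi(u)) = \mathcal{O}_{\mathcal{S}}(u)$ for all $u\in\mathcal{V}$ and $\phi\in\operatorname{Aut}(\mathcal{G})$. Once this holds, the edge-level claim is immediate from \Cref{def:link-orbit}:
\[
\mathcal{O}_{\mathcal{E}}(\phi(u),\phi(v)) = \multiset{\mathcal{O}_{\mathcal{S}}(\phi(u)),\mathcal{O}_{\mathcal{S}}(\phi(v))} = \multiset{\mathcal{O}_{\mathcal{S}}(u),\mathcal{O}_{\mathcal{S}}(v)} = \mathcal{O}_{\mathcal{E}}(u,v).
\]
Because the signature is a multiset, the order of the endpoints is irrelevant. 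Because \Cref{def:link-orbit} allows arbitrary node pairs, there is also no need to check that $\{\phi(u),\phi(v)\}\in\mathcal{E}$. That said, since $\phi$ preserves adjacency, edges are in fact mapped to edges, so the invariance also holds for $[e]_{\mathcal{E}}$ restricted to $\mathcal{E}$.

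For the node-level fact I would argue in two equivalent ways and keep the cleaner one. The first route uses the local subgraph. Because $\phi$ preserves adjacency, it preserves shortest-path distances. Hence $\phi$ restricts to an isomorphism $\mathcal{S}^{(k)}(u)\to\mathcal{S}^{(k)}(\phi(u))$ that maps root to root, for every $k$. A message-passing GNN computes the root representation by permutation-invariant aggregation over this rooted subgraph. It follows that $\phi_w(\mathcal{S}^{(k)}(u))=\phi_w(\mathcal{S}^{(k)}(\phi(u)))$ for all $w$, so $u\sim_k\phi(u)$ for every $k$, and in particular at the 1-WL stable stage.

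The second route, which I would use as the main argument because it matches the paper's choice to work with the stabilized 1-WL equivalence, is an induction on WL iterations. The claim is that the color $c^{(t)}(\phi(u))=c^{(t)}(u)$ for all $t$. The base case holds because initial colors are uniform, following the paper's identical-representation assumption. For the inductive step, $N(\phi(u))=\phi(N(u))$, so the neighbor color multisets coincide by the inductive hypothesis. Applying the same injective hash then gives equal colors at step $t+1$.

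The main obstacle is not technical. It is being precise about what $\mathcal{O}_{\mathcal{S}}$ depends on. If node features $\matX$ are nontrivial, an automorphism of the bare graph need not preserve features. I would therefore state explicitly that $\operatorname{Aut}(\mathcal{G})$ here means automorphisms of the attributed graph, i.e.\ $\mathbf{x}_{\phi(u)}=\mathbf{x}_u$, or else invoke the identical-features convention stated in the Limitations. Under either convention the base case goes through.

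For the closing remark about $\phi\notin\operatorname{Aut}(\mathcal{G})$, a single counterexample suffices. Take a path $a-b-c$ and the transposition swapping $a$ and $b$. Under it, $\{a,b\}\mapsto\{b,a\}$ keeps its signature, but $\{b,c\}\mapsto\{a,c\}$ sends a (leaf, center) pair to a (leaf, leaf) pair. The endpoints of the image lie in different subgraph orbits than those of the original, so the signature changes.
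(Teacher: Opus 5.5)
Your proposal is correct. Your first route matches the paper's proof: an automorphism preserves distances, so it restricts to a root-preserving isomorphism $\mathcal{S}^{(k)}(u)\to\mathcal{S}^{(k)}(\phi(u))$, a message-passing GNN is invariant under such isomorphisms, hence $u\sim\phi(u)$, and the same multiset computation finishes. You also spell out the distance-preservation step, which the paper compresses into ``adjacency-preserving''.

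The route you choose as your main argument, induction on 1-WL colors, is genuinely different. Its advantages:
\begin{itemize}
\item It is elementary and self-contained.
\item It proves invariance of exactly the labels that \Cref{algo:wl_forward} computes.
\item Its base case exposes the hypothesis that $\phi$ must preserve node features (constant initial colors, or feature-preserving automorphisms). The paper's appeal to invariance of the GNN under rooted-subgraph isomorphisms needs this too but never states it.
\end{itemize}
Its cost is a bridge step. \Cref{dfn:subgraph-orbit} defines $\sim$ through GNN outputs $\phi_w(\mathcal{S}^{(k)}(\cdot))$ for all $w$. To land on $\mathcal{O}_{\mathcal{S}}$, you must either read that definition literally as the stabilized 1-WL partition, or invoke the standard fact that nodes with equal 1-WL colors receive equal embeddings from every message-passing GNN. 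The paper's route avoids this bridge by working with the definition directly.

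A small point: injectivity of the hash plays no role in your inductive step. Any deterministic update of (own color, neighbor-color multiset) preserves equality; injectivity would matter only for distinguishing nodes. Your counterexample ($a-b-c$ under the transposition $(a\,b)$, sending $\{b,c\}$ to the non-edge $\{a,c\}$ with a leaf--leaf signature) is correct, and it makes concrete the paper's closing remark, which the paper only asserts.
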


\begin{proof}
The argument rests on the fact that automorphisms preserve subgraph orbits. Fix
$\phi \in \operatorname{Aut}(\mathcal{G})$ and a node $x \in \mathcal{V}$. Because
$\phi$ is adjacency-preserving, it maps the $k$-hop rooted subgraph
$\mathcal{S}^{(k)}(x)$ isomorphically onto $\mathcal{S}^{(k)}(\phi(x))$; a
message-passing GNN is invariant to such rooted-subgraph isomorphisms, so
$\phi_w\big(\mathcal{S}^{(k)}(x)\big) = \phi_w\big(\mathcal{S}^{(k)}(\phi(x))\big)$
for all $w$. By \Cref{dfn:subgraph-orbit} this gives $x \sim \phi(x)$, i.e.
\begin{equation}
\label{eq:subgraph-orbit-invariance}
\mathcal{O}_{\mathcal{S}}(\phi(x)) = \mathcal{O}_{\mathcal{S}}(x)
\qquad \text{for all } x \in \mathcal{V}.
\end{equation}
Since $\mathcal{O}_{\mathcal{E}}$ is the multiset of the endpoints' subgraph
orbits, applying \Cref{eq:subgraph-orbit-invariance} to both endpoints of
$e = \{u,v\}$ yields
\[
\mathcal{O}_{\mathcal{E}}(\phi(u), \phi(v))
= \multiset{\mathcal{O}_{\mathcal{S}}(\phi(u)),\,\mathcal{O}_{\mathcal{S}}(\phi(v))}
= \multiset{\mathcal{O}_{\mathcal{S}}(u),\,\mathcal{O}_{\mathcal{S}}(v)}
= \mathcal{O}_{\mathcal{E}}(u, v),
\]
which proves the claim. Conversely, if $\phi \notin \operatorname{Aut}(\mathcal{G})$
then $\phi$ need not preserve the enclosing subgraphs, so
\Cref{eq:subgraph-orbit-invariance} can fail and the two multisets may differ;
indeed $\phi(u),\phi(v)$ may not even form an edge. Hence the assignment is
equivariant exactly on $\operatorname{Aut}(\mathcal{G})$.
\end{proof}
\endgroup

\subsection{Detailed Analysis of Theorem 4.1}
\label{subsec:theorem 1}
We begin by introducing key concepts to formalize subgraph orbit-distinguishability. We then define the Node Automorphism Problem and show that the proposed D1 provides a solution.
\begin{definition}[$\epsilon$-sufficient]
\label{dfn:sufficient alteration}
We define an alteration as \emph{$\epsilon$-sufficient} if, for all subgraph $\mathcal{S}^{(k)} \in \mathcal{G}$, there exists an altered version $\mathcal{S}_\epsilon^{(k)}$, such that 
$\langle \phi_w(\mathcal{S}_\epsilon^{(k)}(u)), \phi_w(\mathcal{S}^{(k)}(u))\rangle) > 0$ under GNN encoder $\phi_w$ and decoder $D$. $\epsilon$ denotes the magnitude of change in the embedding space resulting from a structural perturbation in the graph.
\end{definition}
\begin{definition}[Node Automorphism Problem]
\label{dfn:automorphic-node-problem} 
A GNN is said to be \emph{$\epsilon$-complete} if there exists an \emph{$\epsilon$-sufficient alteration} such that for all ${u \in \mathcal{V}}$, the following holds: {\small
\begin{equation}
D\left(\phi_w(\mathcal{S}_\epsilon^{(k)}(u)), \phi_w(\mathcal{S}^{(k)}(u)) \right) \geq \epsilon
\end{equation}}
Automorphic nodes are defined as nodes that are $\epsilon$-incomplete under the action of $(\phi, D)_w$.
\end{definition}
We reformulate the Theorem as follows:
\begin{proof}
\label{app:proof-thm1}
For any $\epsilon > 0$ with probability $p_\epsilon$, we expect one bounded $\mathbb {L}$ exists, such that for all $l\geq \mathbb{L}$, $ \mathcal{S'} = \mathcal{B(S)} $
\begin{equation}
    \left(\phi_{w_1}(\mathcal{S}), \ldots, \phi_{w_\ell}(\mathcal{S})\right) \neq \left(\phi_{w_1}(\mathcal{S'}), \ldots, \phi_{w_\ell}(  \mathcal{S'})\right)
\end{equation}
holds with probability $1-\sigma$ , where random variables $w \sim \mathcal{W}$ .
It particularly implies that one closure exists $\mathcal{B(S)}_\epsilon$  with $\Pr(\mathcal{S} \in \mathcal{B(S)}_\epsilon) =  \Pr_{\mathcal{S}, \mathcal{S'}}=1 - (1-p)^{|\mathcal{E}|}> 0$, such that for all subgraph orbits  $\phi_\mathcal{B}(\mathcal{S})  \neq \phi_\mathcal{B}(\mathcal{S'})$
\\ 
We need $ \Pr\left( \exists i \in \{1, \ldots, \ell\} : \mathcal{S} \in \mathcal{S'} \right) \geq 1 - \sigma$, hence
\begin{equation}
     1 - (1 - \Pr_{\mathcal{S, S'}})^\ell \geq 1 - \sigma 
\end{equation}
must hold. Solving for $\ell$, we find that 
\begin{equation}
    \mathbb{L} = \left\lceil \frac{\log(1/\sigma)}{\log\left( \frac{1}{1 - \Pr_{\mathcal{S},\mathcal{S'}}} \right)} \right\rceil  
    \label{equ: L1}
\end{equation}

is sufficient to guarantee that there will be at least one $\mathcal{S}$ in $\mathcal{S'}$ with probability at least $1 - \sigma$, implying $\phi_{w}(\mathcal{S}) \neq \phi_{w}(\mathcal{S}')$, which closes the proof.
\end{proof}
We further simplify the   \Cref{equ: L1} to provide a qualitative analysis for $\mathbb{L} $ w.r.t $p$ and $|\mathcal{E}|$:
\begin{equation}
    \mathbb{L} = \left\lceil \frac{\log(1/\sigma)}{\log\left( \frac{1}{1 - \Pr_{\mathcal{S},\mathcal{S'}}} \right)} \right\rceil
\end{equation}

Given that
\begin{equation}
    \Pr_{\mathcal{S},\mathcal{S'}} = 1 - (1-p)^{|\mathcal{E}|},
\end{equation}
we substitute into the definition of $\mathbb{L}$:
\begin{equation}
\begin{aligned}
    \mathbb{L} &= \left\lceil \frac{\log(1/\sigma)}{\log\left( \frac{1}{1 - (1 - (1-p)^{|\mathcal{E}|})} \right)} \right\rceil \\
    &= \left\lceil \frac{\log(1/\sigma)}{\log\left( \frac{1}{(1-p)^{|\mathcal{E}|}} \right)} \right\rceil.
\end{aligned}
\end{equation}

setting in $\log(1/x) = -\log(x)$, we have
\begin{equation}
    \log\left( \frac{1}{(1-p)^{|\mathcal{E}|}} \right) = -\log\left( (1-p)^{|\mathcal{E}|} \right).
\end{equation}

Applying the power rule for logarithms $\log(x^n) = n \log(x)$, we obtain
\begin{equation}
    -\log\left( (1-p)^{|\mathcal{E}|} \right) = -|\mathcal{E}| \log(1-p).
\end{equation}

Thus,
\begin{equation}
    \mathbb{L} = \left\lceil \frac{\log(1/\sigma)}{-|\mathcal{E}|\log(1-p)} \right\rceil.
\end{equation}

Since $\log(1/\sigma) = -\log(\sigma)$, it further simplifies to
\begin{equation}
    \mathbb{L} = \left\lceil \frac{-\log(\sigma)}{-|\mathcal{E}|\log(1-p)} \right\rceil = \left\lceil \frac{\log(\sigma)}{|\mathcal{E}|\log(1-p)} \right\rceil.
\end{equation}

\paragraph{Final simplified form:}
\begin{equation}
    \boxed{ \mathbb{L} = \left\lceil \frac{\log(\sigma)}{|\mathcal{E}|\log(1-p)} \right\rceil }
\end{equation}

\paragraph{Qualitative Analysis:}
Since $\log(1-p) < 0$ for $p \in (0,1)$ and $\log(\sigma) < 0$ for $\sigma \in (0,1)$, the ratio $\frac{\log(\sigma)}{|\mathcal{E}|\log(1-p)}$ is positive. 

Furthermore, as $p$ increases $\mathbb{L}$ decreases; but when $|\mathcal{E}|$ is fixed and the graph becomes larger,  Suppose $|\mathcal{E}|$ is fixed, but the graph becomes larger (i.e., the number of nodes increases).  
Since
\begin{equation}
    (1-p)^{|\mathcal{E}|} \approx 1 - p|\mathcal{E}|,
\end{equation}
for small $p$, we apply first-order Taylor approximation, then have 
\begin{equation}
    \Pr_{\mathcal{S},\mathcal{S'}} \approx p|\mathcal{E}|.
\end{equation}
Thus, to keep $\Pr_{\mathcal{S},\mathcal{S'}}$ approximately constant as the graph grows, $p$ should satisfy
\begin{equation}
    p \propto \frac{1}{|\mathcal{E}|}.
\end{equation}
\emph{the per-edge probability $p$} should be increased proportionally to maintain a stable success probability and avoid increasing $\mathbb{L}$ excessively.


\subsection{Detailed Analysis of Theorem 4.2}
\label{app:app-automorphism-dominant}
\begin{theorem}

Let $ G $ be a lobster graph containing an automorphism-dominant node $ v_a \) and let $ \phi(\cdot) $ denote the embedding function of a $ K \)-layer GNN. In standard GNNs, the average embedding distance between $ v_a $ and its automorphic neighbors is greater than that between $ v_a $ and its non-automorphic neighbors:
\begin{equation}
\label{app:embedding-gap}
\begin{aligned}
&\mathbb{E}_{u \in \mathcal{N}(v)} \left[ \| \phi(u) - \phi(v) \|^2 \mid \mathcal{O}_u \neq \mathcal{O}_{v} \right]  \\
&\quad - \mathbb{E}_{u \in \mathcal{N}(v_a)} \left[ \| \phi(u) - \phi(v_a) \|^2 \mid \mathcal{O}_u = \mathcal{O}_{v_a} \right]  > \delta
\end{aligned}
\end{equation}
for some $ \delta > 0 \). A GNN equipped with D2 reduces this gap caomparing to standard GNN without.
\end{theorem}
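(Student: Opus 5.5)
The plan is to compare the two conditional expectations in \eqref{app:embedding-gap} term by term, first for a standard deterministic GNN and then for the same GNN equipped with D2. The key tool is the decomposition of the D2 embedding into a deterministic part and a zero-mean Gaussian part coming from \Cref{eq:role-embedding}.

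\textbf{Step 1 (standard GNN, existence of $\delta$).}
\begin{itemize}[leftmargin=*,itemsep=0pt,parsep=0pt,topsep=0pt]
\item By \Cref{dfn:subgraph-orbit}, two nodes with $\mathcal{O}_u=\mathcal{O}_{v_a}$ satisfy $\phi_w(u)=\phi_w(v_a)$ for all $w$, with 1-WL stabilized. Hence the automorphic term is exactly $0$.
\item In a lobster graph, a spine node $v_a$ has at least one neighbour in a different subgraph orbit: a spine neighbour or a leg/leaf node whose degree differs.
\item For such a neighbour $u$ the 1-WL colours differ. For a generic (injective, GIN-type) choice of weights the embeddings therefore differ, so $\|\phi(u)-\phi(v)\|^2>0$.
\item Since $\mathcal{N}(v)$ is finite, take $\delta$ to be half the resulting average. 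This gives \eqref{app:embedding-gap}.
\end{itemize}

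\textbf{Step 2 (decomposing the D2 embedding).} To keep the analysis tractable, I linearise the layer in \Cref{equ:subgraph-orbit aware aggregation}, treating $\zeta^{(k)},\phi^{(k)}$ as linear or working in a fixed ReLU activation region. The final embedding then has the form
\[
\phi_{\mathrm{D2}}(x)=\mu(x)+\textstyle\sum_{y}\mathbf{M}_{xy}\boldsymbol{\epsilon}_y .
\]
\begin{itemize}[leftmargin=*,itemsep=0pt,parsep=0pt,topsep=0pt]
\item The mean $\mu(x)$ depends only on the $\mathrm{Emb}(\mathbf{o}_y)$ and the features, so $\mu(u)=\mu(v_a)$ whenever $\mathcal{O}_u=\mathcal{O}_{v_a}$.
\item The matrix $\mathbf{M}_{xx}$ contains the direct injection $\mathbf{W}^{(K)}$.
\item Because the $\boldsymbol{\epsilon}_y$ are i.i.d.\ $\mathcal{N}(0,\tau^2\mathbf{I})$, all cross terms vanish in expectation, giving
\[
\mathbb{E}\|\phi_{\mathrm{D2}}(u)-\phi_{\mathrm{D2}}(v)\|^2=\|\mu(u)-\mu(v)\|^2+\tau^2\textstyle\sum_y\|\mathbf{M}_{uy}-\mathbf{M}_{vy}\|_F^2 .
\]
\end{itemize}

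\textbf{Step 3 (the automorphic term becomes positive).} For the automorphic pair the mean term is $0$. The noise term contains the summand
\[
\|\mathbf{M}_{uu}-\mathbf{M}_{v_a u}\|_F^2+\|\mathbf{M}_{v_a v_a}-\mathbf{M}_{u v_a}\|_F^2 .
\]
This is strictly positive whenever the self-injection $\mathbf{W}^{(K)}\neq 0$ is not exactly cancelled by the propagated weights, which fails only on a measure-zero set of weights. This recovers \Cref{thm:automorphism-dominant}: the automorphic term rises from $0$ to some $T_{\mathrm{aut}}>0$.

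\textbf{Step 4 (the gap shrinks).} For non-automorphic neighbours the same formula gives
\[
\|\mu(u)-\mu(v)\|^2+T_{\mathrm{non}} .
\]
Here $\mu(u)-\mu(v)$ equals the standard-GNN difference plus a contribution from $\mathrm{Emb}(\mathbf{o}_u)-\mathrm{Emb}(\mathbf{o}_v)$.

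The D2 gap therefore equals the standard gap plus
\[
\bigl(T_{\mathrm{non}}-T_{\mathrm{aut}}\bigr)+\Delta_{\mathrm{Emb}} ,
\]
where $\Delta_{\mathrm{Emb}}$ collects the embedding-table contribution. To make $\Delta_{\mathrm{Emb}}$ negligible, I would take $\mathrm{Emb}$ small or tied at initialisation, or absorb it into the bound.

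Automorphism-dominance of $v_a$ is then used to argue $T_{\mathrm{aut}}\ge T_{\mathrm{non}}$. Most of $v_a$'s neighbours share its orbit, so on a lobster spine they have the same local degree profile and hence the same propagation coefficients $\mathbf{M}$. A noise-variance comparison between same-orbit and different-orbit neighbours should then follow from the degree structure of the lobster.

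\textbf{Main obstacle.} I expect the hard step to be this last comparison $T_{\mathrm{aut}}>T_{\mathrm{non}}+\Delta_{\mathrm{Emb}}$. It requires controlling the propagated noise coefficients under the nonlinearity and the degree normalisation. I would first try to prove it in the linear, degree-normalised GCN case on the lobster explicitly, and only then extend it via a local Lipschitz argument.
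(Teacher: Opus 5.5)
Your Step~1 is sound: under the subgraph-orbit definition, the automorphic term of a standard GNN is identically $0$. The D2 half of the statement, however, is not proved, and Step~4 cannot be completed as planned. The reason is that $T_{\mathrm{aut}}>T_{\mathrm{non}}+\Delta_{\mathrm{Emb}}$ is not a property of the graph; it depends on the D2 parameters.

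To see this, take weights for which the $\mathbf{h}$-block of $\mathbf{x}^{(0)}$ gets zero weight in the first layer and $\mathbf{W}^{(k)}=\mathbf{0}$ for $k<K$. Then, exactly and with no linearisation, $\phi_{\mathrm{D2}}(x)=\phi(x)+\mathbf{W}^{(K)}\bigl(\mathrm{Emb}(\mathbf{o}_x)+\boldsymbol{\epsilon}_x\bigr)$. The noise therefore adds the same amount $2\tau^2\|\mathbf{W}^{(K)}\|_F^2$ to every pair of distinct nodes. So $T_{\mathrm{aut}}=T_{\mathrm{non}}$, the noise cancels from the gap, and the change in the gap is $\Delta_{\mathrm{Emb}}$ alone. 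This change is strictly positive, i.e.\ the gap grows, whenever each $\mathbf{W}^{(K)}(\mathrm{Emb}(\mathbf{o}_u)-\mathrm{Emb}(\mathbf{o}_{v_a}))$ is nonzero and orthogonal to $\phi(u)-\phi(v_a)$. By continuity the gap also grows on an open set of parameters. For a tied table the change is exactly $0$, so in this configuration your proposed remedy gives no decrease at all.

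Automorphism-dominance cannot supply the missing comparison either. Take orbits as stable 1-WL classes, as in your Step~1. On a tree, colour refinement determines the round in which each vertex is removed by iterated leaf-stripping. Two adjacent vertices removed in the same round must be the two ends of the central edge. Hence every vertex of a tree has at most one neighbour in its own orbit, and an automorphism-dominant vertex exists only in $K_2$. The hypothesis has content only for the unstabilised relation $\sim_K$. Even there it constrains proportions of neighbours, not the coefficients $\mathbf{M}_{uy}-\mathbf{M}_{v_a y}$.

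What can be proved is an existence statement, and that is also all the paper provides. In the same configuration $\phi$ is constant on subgraph orbits. So, for $\mathbf{W}^{(K)}$ of full row rank, you may choose the table with $\mathbf{W}^{(K)}\mathrm{Emb}(\mathbf{o}_x)=-\lambda\,\phi(x)$ for some $0<\lambda<2$. The D2 gap then becomes $(1-\lambda)^2$ times the standard one, while the automorphic term rises from $0$ to $2\tau^2\|\mathbf{W}^{(K)}\|_F^2$, in line with \Cref{thm:automorphism-dominant}.

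The paper's argument is a single worked instance: a 10-node lobster with identical features and a one-layer GCN whose embeddings are proportional to normalised degree. It then uses hand-picked weights $(0.1,0.05,-0.1,0.4)$ to push its automorphic term to $0.25<2/7$. That example treats neighbours with degree difference $3$ as automorphic. It therefore reports the reverse ordering for the standard GNN ($2.5$ versus $2/7$), matching the statement's prose rather than its display. Your zero automorphic term is what the paper's definitions actually yield.
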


We start with a concrete example: lobster graph illustrated in \Cref{fig:lobster}
\begin{lemma}
Let $ \mathcal{G} = (\mathcal{V}, \mathcal{E}) $ be an undirected graph with $ |\mathcal{V}| =10  $ nodes and let $ \mathbf{X} \in \mathbb{R}^{10 \times 1} $ be the node feature matrix where all node features are identical, i.e.,
\[
\mathbf{X} = \mathbf{1}_N \cdot \mathbf{x}_0^\top
\]
for some fixed vector $ \mathbf{x}_0 \in \mathbb{R}^d \). Let $ \hat{\mathbf{A}} \in \mathbb{R}^{N \times N} $ be the normalized adjacency matrix used in a one-layer GCN. Then the output embedding $ \mathbf{H} \in \mathbb{R}^{N \times d'} $ satisfies:
\[
\mathbf{H} = \hat{\mathbf{A}} \mathbf{X} \mathbf{W} = \mathbf{c} \cdot (\mathbf{x}_0^\top \mathbf{W}),
\]
where $ \mathbf{c} \in \mathbb{R}^{N} $ is a vector with entries $ c_i = \sum_{j=1}^N \hat{A}_{ij} \) and $ \mathbf{W} \in \mathbb{R}^{d \times d'} $ is the trainable weight matrix.
\end{lemma}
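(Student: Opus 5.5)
The approach is a direct matrix computation: factor the constant feature matrix out of the propagation and use associativity of matrix multiplication. The lemma is a statement about a single linear GCN layer without nonlinearity, $\mathbf{H} = \hat{\mathbf{A}}\mathbf{X}\mathbf{W}$. Nothing specific to the lobster graph or to $N=10$ is needed beyond $\hat{\mathbf{A}}$ being a fixed $N\times N$ matrix. I will keep $N$ general and note at the end that the case $N=10$ follows. I will also treat the stated $\mathbb{R}^{10\times 1}$ as a typo for $\mathbb{R}^{N\times d}$, since $\mathbf{X}=\mathbf{1}_N\mathbf{x}_0^\top$ with $\mathbf{x}_0\in\mathbb{R}^d$.

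\textbf{Step 1.} Substitute $\mathbf{X}=\mathbf{1}_N\mathbf{x}_0^\top$ and regroup by associativity:
$\hat{\mathbf{A}}\mathbf{X}\mathbf{W} = (\hat{\mathbf{A}}\mathbf{1}_N)(\mathbf{x}_0^\top\mathbf{W})$.
\textbf{Step 2.} Identify $\hat{\mathbf{A}}\mathbf{1}_N$ entrywise: its $i$-th entry is $\sum_{j=1}^N \hat{A}_{ij}\cdot 1 = c_i$. This gives the vector $\mathbf{c}$.
\textbf{Step 3.} Conclude that $\mathbf{H}=\mathbf{c}\,(\mathbf{x}_0^\top\mathbf{W})$ is an outer product of $\mathbf{c}\in\mathbb{R}^N$ and the row vector $\mathbf{x}_0^\top\mathbf{W}\in\mathbb{R}^{1\times d'}$. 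In particular $\mathbf{H}$ has rank at most one, and row $i$ is $c_i\,\mathbf{x}_0^\top\mathbf{W}$. So two nodes receive identical embeddings whenever their normalized row sums agree.

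I would add this corollary because it is what the lemma feeds into for Theorem~4.2. For $\hat{\mathbf{A}}=\tilde{\mathbf{D}}^{-1/2}\tilde{\mathbf{A}}\tilde{\mathbf{D}}^{-1/2}$ with self-loops, $c_i=\sum_{j\in N(i)\cup\{i\}} (\tilde d_i\tilde d_j)^{-1/2}$. This depends only on the degrees of $i$ and its neighbours, so nodes in the same subgraph orbit (\Cref{dfn:subgraph-orbit}) get equal $c_i$.

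I do not expect a genuine obstacle. The only care needed is dimensional bookkeeping, namely that $\mathbf{1}_N\mathbf{x}_0^\top$ is $N\times d$ so the products are well defined. The other point is to state explicitly that no bias or activation is present; with an elementwise activation $\rho$ the same conclusion holds row-wise as $\rho(c_i\,\mathbf{x}_0^\top\mathbf{W})$, but the rank-one form is lost.
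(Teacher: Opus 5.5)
Your proposal is correct and follows the paper's proof: substitute $\mathbf{X}=\mathbf{1}_N\mathbf{x}_0^\top$, regroup by associativity as $(\hat{\mathbf{A}}\mathbf{1}_N)(\mathbf{x}_0^\top\mathbf{W})$, and read off $\mathbf{c}=\hat{\mathbf{A}}\mathbf{1}_N$ entrywise. Your extra remarks (rank at most one, and equal $c_i$ within a subgraph orbit for the symmetrically normalized $\hat{\mathbf{A}}$) are also correct but go beyond what the paper proves, and $\mathbb{R}^{10\times 1}$ is simply the case $N=10$, $d=1$ rather than a typo.
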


\begin{proof}
By the standard GCN layer definition, we have:
\[
\mathbf{H} = \hat{\mathbf{A}} \mathbf{X} \mathbf{W}.
\]
We replace $ \mathbf{X} = \mathbf{1}_N \cdot \mathbf{x}_0^\top \), then:
\[
\hat{\mathbf{A}} \mathbf{X} = \hat{\mathbf{A}} (\mathbf{1}_N \cdot \mathbf{x}_0^\top) = (\hat{\mathbf{A}} \mathbf{1}_N) \cdot \mathbf{x}_0^\top.
\]
We represent $\mathbf{c}$ in matrix form  $ \mathbf{c} = \hat{\mathbf{A}} \mathbf{1}_N \). Then:
\[
\hat{\mathbf{A}} \mathbf{X} = \mathbf{c} \cdot \mathbf{x}_0^\top.
\]
Therefore:
\[
\mathbf{H} = (\hat{\mathbf{A}} \mathbf{X}) \mathbf{W} = (\mathbf{c} \cdot \mathbf{x}_0^\top) \mathbf{W} = \mathbf{c} \cdot (\mathbf{x}_0^\top \mathbf{W}).
\]
\end{proof}
We see that three vertices are automorphic; a special case is when two nodes $ u, v \in \mathcal{V} $ have identical rows in $ \hat{\mathbf{A}} $ in the lobster graph. We characterize such intuition as $ \hat{A}_{u,:} = \hat{A}_{v,:} \). Assuming we have no dropout, injected noise and all weights $\mathbf{W}$ are shared, we have the following deviation, substituting $ \mathbf{X} = \mathbf{1}_N \cdot \mathbf{x}_0^\top \):
\[
\mathbf{H} = \hat{\mathbf{A}} (\mathbf{1}_N \cdot \mathbf{x}_0^\top) \mathbf{W}.
\]
\[
\mathbf{H} = (\hat{\mathbf{A}} \mathbf{1}_N) \cdot (\mathbf{x}_0^\top \mathbf{W}).
\]
Define $ \mathbf{z} = \hat{\mathbf{A}} \mathbf{1}_N \in \mathbb{R}^{N \times 1} $ and $ \mathbf{v} = \mathbf{x}_0^\top \mathbf{W} \in \mathbb{R}^{1 \times d'} \), $
\mathbf{H} = \mathbf{z} \cdot \mathbf{v}$
which implies that each row of $ \mathbf{h} $ is given by:
\[
\mathbf{h}_i = z_i \cdot \mathbf{v} = \sum_{j=1}^N \hat{A}_{ij} \mathbf{v} = k \mathbf{D}_i \mathbf{v}.
\]
We now compute the two terms in  ~\Cref{app:embedding-gap} to illustrate the embedding difference between automorphic and non-automorphic neighbors of an automorphism-dominant node.
\begin{equation}
\begin{aligned}
&\mathbb{E}_{u \in \mathcal{N}(v)} \left[ \| \phi(u) - \phi(v) \|^2 \mid \mathcal{O}_u \neq \mathcal{O}_{v} \right]  \\
&\quad  = \frac{k (\mathbf{D}_u - \mathbf{D}_v)\mathbf{v}}{N_{nauto}} \\
&\quad  = 2/7 = 0.2857
\end{aligned}
\end{equation}
where $ \mathbf{D}_u $ and $ \mathbf{D}_v $ represent the degrees of the neighboring and center nodes respectively, $ \mathbf{v} $ is the learned weight vector and $ N_{\text{non-auto}} = 7 $ is the number of non-automorphic neighbors. This reflects the average squared embedding distance due to degree gaps, most automorphic vertices have identical degrees except two tail nodes.

Next, we calculate the expected distance among automorphic neighbors:

\begin{equation}
\begin{aligned}
&\mathbb{E}_{u \in \mathcal{N}(v_a)} \left[ \| \phi(u) - \phi(v_a) \|^2 \mid \mathcal{O}_u = \mathcal{O}_{v} \right]  \\
&\quad  = \frac{k (\mathbf{D}_u - \mathbf{D}_v)\mathbf{v}}{N_{nauto}} \\
&\quad  = 1+3+3+3/4 \\
&\quad  = 2.5
\end{aligned}
\end{equation}

where the degree difference is 3 for automorphic neighbors (e.g., between vertex 1 and other symmetric vertices) and 1 for the tail node.
These results indicate that standard GNNs assign larger embedding differences to automorphic edges compared to non-automorphic ones—contradicting structural automorphism. This directly supports ~\Cref{thm:automorphism-dominant}.

We have now 
\begin{equation}
\begin{aligned}
&\mathbb{E}_{u \in \mathcal{N}(v_a)} \left[ \| \phi(u) - \phi(v_a) \|^2 \mid \mathcal{O}_u = \mathcal{O}_{v} \right]  < 0.2857\\
&\quad  = \frac{k (\mathbf{D}_u - \mathbf{D}_v)\mathbf{v}}{N_{nauto}} \\
&\quad  = 1w_1+3w_2+3w_3+0.75w_4  < 0.2857 \\
\end{aligned}
\end{equation}
We provide one valid solution to the inequality using negative weights:
\[
w_1 = 0.1, \quad w_2 = 0.05, \quad w_3 = -0.1, \quad w_4 = 0.4
\]
Substituting into the left-hand side of the inequality:
\[
1w_1 + 3w_2 + 3w_3 + 0.75w_4 
\]
\[= 1(0.1) + 3(0.05) + 3(-0.1) + 0.75(0.4) \]
\[= 0.1 + 0.15 - 0.3 + 0.3 = 0.25 < 0.2857\]
Thus, this set of weights satisfies the inequality.


\section{Preliminaries in Detail}
\label{app:automorphism}

\section{Synthetic Datasets: Details}
\label{app:synthetic}
In this section, we present our data generation process, dataset details, the sources of our baselines and the corresponding hyperparameters.

\subsection{Detailed Results on Semi-Synthetic Benchmarks}
\label{app:synthetic-results}

\textbf{Automorphic Real Synthetic Graph Construction} We utilize a real world graph $\mathcal{H}$ (Cora and Citeseer). Then we form a larger graph $\mathcal{H}^2$ that contains two disjoint copies of $\mathcal{H}$, along with 1000 uniformly-randomly added edges (both between (inter) and within (intra) copies of $\mathcal{H}$). Without the random edges, each node in one copy of $\mathcal{H}$ is automorphic to the corresponding node in the other copy, as 
increasing inter, intra probability and number of edges, $\EAR\xspace$ increases. We use three parameters inter $p_i$, intra $p_s$, introduced automorphic nodes $a$ and number of perturbed edges $\vert \mathcal{E} \vert$ to control the $\EAR\xspace$. The utilized parameters for Syn-Cora \& Syn-Citeseer are provided in \Cref{tab:A-synthetic-stats}.

\begin{figure}[h]
    \centering
        \includegraphics[width=0.28\linewidth]{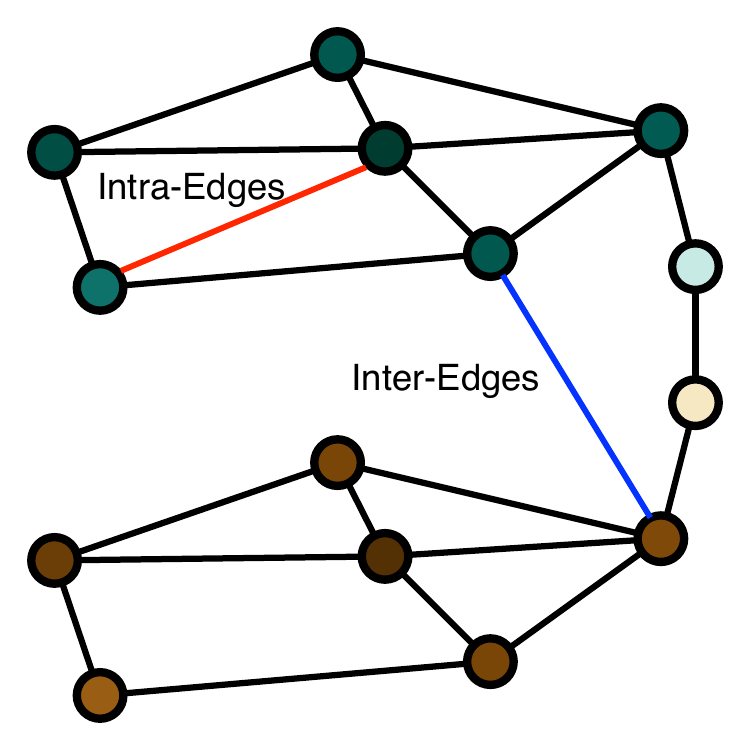}
        \caption{Syn-Cora \& Syn-Citeseer}
    \caption{Symmetric automorphic graph construction to change the automorphism in real world graph.}
    \label{fig:edge_rewiring_tri}
\end{figure}

\begin{table}[H]
	\centering
	\caption{Statistics for Synthetic Datasets}
	\label{tab:A-synthetic-stats}
	{\footnotesize
	\begin{tabular}{>{\raggedright\arraybackslash}p{2.5cm} 
	                >{\raggedright\arraybackslash}p{2.5cm} 
	                >{\raggedright\arraybackslash}p{2.5cm} 
	                >{\raggedright\arraybackslash}p{2.5cm}}
		\toprule 
		\textbf{Benchmark Name} & \texttt{syn-cora} & \texttt{syn-citeseer} \\
		\midrule
		\textbf{\# Nodes} & 5416 & 6654 \\
		\textbf{\# Edges} & 10556 to 17549 & 9104 to 23098 \\
		\textbf{automorphism} $\EAR\xspace$  & [0, 0.1, $\ldots$, 0.9] & [0, 0.1, $\ldots$, 0.9] \\
		\textbf{Degree Range} & 1 to 168 & 1 to 107 \\
		\textbf{Average Degree} & 3.89 to 6.48 & 2.73 to 6.94 \\
		\midrule
		\textbf{Inter Prob.} & 0.1 & 0.1 \\ 
		\textbf{Intra Prob.} & 0.5 & 0.5 \\
		\textbf{Number of Edges} & {[0.2, 1, 4, 7, 12, 18, 20, 28]$\times$250} & {[0.2, 1, 2, 3, 4, 5, 7, 8, 10, 14]$\times$1000} \\ 
        \midrule
		\textbf{$\EAR\xspace$} & [0.19, 0.30, 0.37, 0.51, 0.60, 0.71, 0.81, 0.83, 0.90] & [0.10, 0.18, 0.28, 0.38, 0.48, 0.57, 0.65, 0.77, 0.81, 0.88, 0.94] \\
		\bottomrule
	\end{tabular}
	}
\end{table}


\begin{table}[h]
    \centering
    \caption{\textsc{syn-cora}: Mean metrics (MRR) and standard deviation for each method on synthetic datasets with varying \metrics ratios $\EAR\xspace$. }
    \label{tab:app-syn-cora-results}
    \resizebox{\linewidth}{!}{\scriptsize
    \begin{tabular}{lcccccccccc}
        \toprule
        Method & 0.19 & 0.30 & 0.37 & 0.51 & 0.60 & 0.71 & 0.81 & 0.83 & 0.90 \\
        \midrule
        GCN & $99.41 {\scriptstyle \pm .23}$ & $99.39 {\scriptstyle \pm .09}$ & $99.06 {\scriptstyle \pm .22}$ & $98.51 {\scriptstyle \pm .35}$ & $98.01 {\scriptstyle \pm .45}$ & $97.27 {\scriptstyle \pm .42}$ & $96.45 {\scriptstyle \pm .57}$ & $96.29 {\scriptstyle \pm .67}$ & $94.07 {\scriptstyle \pm .86}$ \\
        GAT & $99.28 {\scriptstyle \pm .16}$ & $98.97 {\scriptstyle \pm .23}$ & $98.99 {\scriptstyle \pm .13}$ & $98.75 {\scriptstyle \pm .20}$ & $98.48 {\scriptstyle \pm .25}$ & $97.98 {\scriptstyle \pm .22}$ & $97.72 {\scriptstyle \pm .50}$ & $97.51 {\scriptstyle \pm .50}$ & $96.42 {\scriptstyle \pm .39}$ \\
        GIN & $91.83 {\scriptstyle \pm .70}$ & $95.76 {\scriptstyle \pm .78}$ & $95.53 {\scriptstyle \pm .59}$ & $94.91 {\scriptstyle \pm .90}$ & $84.81 {\scriptstyle \pm .14}$ & $82.49 {\scriptstyle \pm .14}$ & $70.24 {\scriptstyle \pm .78}$ & $72.01 {\scriptstyle \pm .96}$ & $65.24 {\scriptstyle \pm .38}$ \\
        GraphSAGE & $97.91 {\scriptstyle \pm .62}$ & $97.96 {\scriptstyle \pm .57}$ & $97.96 {\scriptstyle \pm .60}$ & $97.05 {\scriptstyle \pm .85}$ & $95.93 {\scriptstyle \pm .93}$ & $93.38 {\scriptstyle \pm .99}$ & $90.27 {\scriptstyle \pm 1.93}$ & $90.37 {\scriptstyle \pm 2.24}$ & $85.82 {\scriptstyle \pm 2.03}$ \\
        MixHopGCN & $99.55 {\scriptstyle \pm .17}$ & $99.28 {\scriptstyle \pm .09}$ & $98.99 {\scriptstyle \pm .13}$ & $98.75 {\scriptstyle \pm .20}$ & $98.48 {\scriptstyle \pm .25}$ & $97.98 {\scriptstyle \pm .22}$ & $97.72 {\scriptstyle \pm .50}$ & $97.51 {\scriptstyle \pm .50}$ & $96.42 {\scriptstyle \pm .39}$ \\
        ChebGCN & $98.41 {\scriptstyle \pm .13}$ & $98.19 {\scriptstyle \pm .22}$ & $97.82 {\scriptstyle \pm .21}$ & $96.56 {\scriptstyle \pm .79}$ & $96.76 {\scriptstyle \pm .32}$ & $94.09 {\scriptstyle \pm .56}$ & $91.18 {\scriptstyle \pm 1.02}$ & $86.24 {\scriptstyle \pm 1.77}$ & $84.88 {\scriptstyle \pm 1.42}$ \\
        GCN-DW & $99.89 {\scriptstyle \pm .04}$ & $99.66 {\scriptstyle \pm .06}$ & $99.55 {\scriptstyle \pm .07}$ & $99.51 {\scriptstyle \pm .05}$ & $99.19 {\scriptstyle \pm .10}$ & $98.81 {\scriptstyle \pm .06}$ & $98.34 {\scriptstyle \pm .21}$ & $98.15 {\scriptstyle \pm .10}$ & $97.54 {\scriptstyle \pm .24}$ \\
        GCN-RF & $99.81 {\scriptstyle \pm .07}$ & $99.81 {\scriptstyle \pm .03}$ & $99.60 {\scriptstyle \pm .05}$ & $99.67 {\scriptstyle \pm .08}$ & $98.92 {\scriptstyle \pm .07}$ & $98.84 {\scriptstyle \pm .15}$ & $98.43 {\scriptstyle \pm .15}$ & $98.42 {\scriptstyle \pm .18}$ & $97.85 {\scriptstyle \pm .12}$ \\
        GCN-LAP     & $99.46 {\scriptstyle \pm .16}$ & $99.18 {\scriptstyle \pm .18}$ & $98.99 {\scriptstyle \pm .27}$ & $98.77 {\scriptstyle \pm .27}$ & $97.83 {\scriptstyle \pm .41}$ & $97.16 {\scriptstyle \pm .38}$ & $96.91 {\scriptstyle \pm .29}$ & $96.54 {\scriptstyle \pm .73}$ & $95.39 {\scriptstyle \pm .48}$ \\
        BUDDY & $95.34 {\scriptstyle \pm .17}$ & $94.00 {\scriptstyle \pm .23}$ & $95.03 {\scriptstyle \pm .17}$ & $95.34 {\scriptstyle \pm .04}$ & $94.82 {\scriptstyle \pm .18}$ & $94.63 {\scriptstyle \pm .30}$ & $94.06 {\scriptstyle \pm .18}$ & $93.94 {\scriptstyle \pm .10}$ & $93.80 {\scriptstyle \pm .10}$ \\
        LINKX & $97.30 {\scriptstyle \pm .32}$ & $96.99 {\scriptstyle \pm .15}$ & $96.81 {\scriptstyle \pm .18}$ & $94.28 {\scriptstyle \pm .15}$ & $92.82 {\scriptstyle \pm .13}$ & $89.63 {\scriptstyle \pm .29}$ & $86.69 {\scriptstyle \pm .15}$ & $85.47 {\scriptstyle \pm .12}$ & $83.48 {\scriptstyle \pm .18}$ \\
        Proposed w.o. D1   & $99.89 {\scriptstyle \pm .08}$ & $99.83 {\scriptstyle \pm .10}$ & $99.77 {\scriptstyle \pm .11}$ & $99.76 {\scriptstyle \pm .07}$ & $99.76 {\scriptstyle \pm .05}$ & $99.74 {\scriptstyle \pm .07}$ & $99.67 {\scriptstyle \pm .12}$ & $99.63 {\scriptstyle \pm .07}$ & $99.62 {\scriptstyle \pm .22}$ \\
        Proposed w.o D2    & $99.90 {\scriptstyle \pm .05}$ & $99.80 {\scriptstyle \pm .05}$ & $99.79 {\scriptstyle \pm .07}$ & $99.78 {\scriptstyle \pm .12}$ & $99.75 {\scriptstyle \pm .03}$ & $99.74 {\scriptstyle \pm .07}$ & $99.73 {\scriptstyle \pm .04}$ & $99.71 {\scriptstyle \pm .03}$ & $99.90 {\scriptstyle \pm .05}$ \\
        Proposed          & $\mathbf{99.94} {\scriptstyle \pm .03}$ & $\mathbf{99.86} {\scriptstyle \pm .03}$ & $\mathbf{99.83} {\scriptstyle \pm .07}$ & $\mathbf{99.81} {\scriptstyle \pm .00}$ & $\mathbf{99.80} {\scriptstyle \pm .05}$ & $\mathbf{99.78} {\scriptstyle \pm .04}$ & $\mathbf{99.76} {\scriptstyle \pm .03}$ & $\mathbf{99.74} {\scriptstyle \pm .07}$ & $\mathbf{99.93} {\scriptstyle \pm .08}$ \\
        \bottomrule
    \end{tabular}}
\end{table}

\begin{table}[h]
    \centering
    \caption{\textsc{syn-cora}: Mean Metrics (MRR) and standard deviation for each method on synthetic datasets with varying \metrics ratios $\EAR\xspace$. }
    \label{tab:app-syn-cora-mrr}
    \resizebox{\linewidth}{!}{\scriptsize
    \begin{tabular}{lcccccccccc}
        \toprule
        Method & 0.19 & 0.30 & 0.37 & 0.51 & 0.60 & 0.71 & 0.81 & 0.83 & 0.90 \\
        \midrule
        GCN & $42.28 {\scriptstyle \pm 11.15}$ & $34.09 {\scriptstyle \pm 8.75}$ & $27.06 {\scriptstyle \pm 5.83}$ & $21.29 {\scriptstyle \pm 6.17}$ & $20.83 {\scriptstyle \pm 3.29}$ & $18.37 {\scriptstyle \pm 3.90}$ & $16.50 {\scriptstyle \pm 3.50}$ & $14.78 {\scriptstyle \pm 4.48}$ & $12.82 {\scriptstyle \pm 3.91}$ \\
        GAT & $30.45 {\scriptstyle \pm 6.74}$ & $25.24 {\scriptstyle \pm 7.63}$ & $21.77 {\scriptstyle \pm 7.00}$ & $20.28 {\scriptstyle \pm 5.60}$ & $19.74 {\scriptstyle \pm 9.83}$ & $17.06 {\scriptstyle \pm 6.54}$ & $15.60 {\scriptstyle \pm 3.41}$ & $14.80 {\scriptstyle \pm 4.85}$ & $13.28 {\scriptstyle \pm 3.10}$ \\
        GIN & $15.80 {\scriptstyle \pm 2.59}$ & $15.57 {\scriptstyle \pm 5.25}$ & $14.43 {\scriptstyle \pm 3.99}$ & $10.88 {\scriptstyle \pm 4.81}$ & $9.19 {\scriptstyle \pm 3.93}$ & $6.87 {\scriptstyle \pm 4.00}$ & $4.42 {\scriptstyle \pm 3.89}$ & $4.06 {\scriptstyle \pm 3.72}$ & $3.07 {\scriptstyle \pm 3.30}$ \\
        GraphSAGE & $24.79 {\scriptstyle \pm 8.27}$ & $24.20 {\scriptstyle \pm 4.74}$ & $17.53 {\scriptstyle \pm 5.70}$ & $16.30 {\scriptstyle \pm 5.24}$ & $15.45 {\scriptstyle \pm 3.57}$ & $13.66 {\scriptstyle \pm 4.13}$ & $12.26 {\scriptstyle \pm 5.25}$ & $11.01 {\scriptstyle \pm 3.77}$ & $8.94 {\scriptstyle \pm 2.54}$ \\
        MixHopGCN & $65.87 {\scriptstyle \pm 14.64}$ & $58.62 {\scriptstyle \pm 11.24}$ & $17.65 {\scriptstyle \pm 29.31}$ & $0.12 {\scriptstyle \pm 0.00}$ & $0.12 {\scriptstyle \pm 0.00}$ & $0.12 {\scriptstyle \pm 0.00}$ & $0.12 {\scriptstyle \pm 0.00}$ & $0.12 {\scriptstyle \pm 0.00}$ & $0.12 {\scriptstyle \pm 0.00}$ \\
        ChebGCN & $28.71 {\scriptstyle \pm 4.29}$ & $24.71 {\scriptstyle \pm 5.39}$ & $21.38 {\scriptstyle \pm 5.21}$ & $21.01 {\scriptstyle \pm 4.68}$ & $19.82 {\scriptstyle \pm 5.16}$ & $16.75 {\scriptstyle \pm 4.98}$ & $15.35 {\scriptstyle \pm 3.71}$ & $10.26 {\scriptstyle \pm 2.32}$ & $9.77 {\scriptstyle \pm 2.48}$ \\
        GCN-DW & $72.87 {\scriptstyle \pm 12.57}$ & $43.98 {\scriptstyle \pm 7.79}$ & $29.52 {\scriptstyle \pm 4.48}$ & $40.14 {\scriptstyle \pm 5.64}$ & $53.73 {\scriptstyle \pm 11.90}$ & $35.98 {\scriptstyle \pm 3.61}$ & $42.41 {\scriptstyle \pm 6.86}$ & $\mathbf{50.71} {\scriptstyle \pm 9.77}$ & $21.16 {\scriptstyle \pm 3.57}$ \\
        GCN-RF & $52.38 {\scriptstyle \pm 13.90}$ & $48.97 {\scriptstyle \pm 6.06}$ & $44.79 {\scriptstyle \pm 10.38}$ & $61.81 {\scriptstyle \pm 9.20}$ & $27.26 {\scriptstyle \pm 3.79}$ & $29.51 {\scriptstyle \pm 7.91}$ & $44.57 {\scriptstyle \pm 11.12}$ & $24.89 {\scriptstyle \pm 4.14}$ & $34.98 {\scriptstyle \pm 10.87}$ \\
        GCN-LAP & $34.06 {\scriptstyle \pm 9.46}$ & $34.24 {\scriptstyle \pm 9.45}$ & $23.08 {\scriptstyle \pm 5.45}$ & $33.95 {\scriptstyle \pm 7.70}$ & $37.39 {\scriptstyle \pm 7.80}$ & $29.56 {\scriptstyle \pm 2.86}$ & $30.49 {\scriptstyle \pm 4.94}$ & $33.04 {\scriptstyle \pm 7.21}$ & $15.52 {\scriptstyle \pm 2.65}$ \\
        BUDDY & $33.41 {\scriptstyle \pm 5.10}$ & $23.80 {\scriptstyle \pm 3.71}$ & $32.42 {\scriptstyle \pm 1.96}$ & $23.09 {\scriptstyle \pm 3.11}$ & $19.55 {\scriptstyle \pm 1.05}$ & $27.87 {\scriptstyle \pm 7.83}$ & $31.36 {\scriptstyle \pm 4.70}$ & $27.51 {\scriptstyle \pm 1.46}$ & $18.71 {\scriptstyle \pm 1.04}$ \\
        LINKX & $32.99 {\scriptstyle \pm 9.10}$ & $23.91 {\scriptstyle \pm 0.00}$ & $19.90 {\scriptstyle \pm 0.00}$ & $18.60 {\scriptstyle \pm 0.00}$ & $17.44 {\scriptstyle \pm 0.00}$ & $15.08 {\scriptstyle \pm 0.00}$ & $14.99 {\scriptstyle \pm 0.00}$ & $11.67 {\scriptstyle \pm 3.83}$ & $10.86 {\scriptstyle \pm 0.00}$ \\
        Proposed w.o. D1 & $85.53 {\scriptstyle \pm 14.99}$ & $80.43 {\scriptstyle \pm 12.34}$ & $65.46 {\scriptstyle \pm 4.98}$ & $62.64 {\scriptstyle \pm 20.27}$ & $60.10 {\scriptstyle \pm 10.21}$ & $49.77 {\scriptstyle \pm 12.52}$ & $41.64 {\scriptstyle \pm 3.52}$ & $38.49 {\scriptstyle \pm 12.70}$ & $34.89 {\scriptstyle \pm 5.32}$ \\
        Proposed w.o D2 & $\mathbf{97.42} {\scriptstyle \pm 1.74}$ & $58.13 {\scriptstyle \pm 11.50}$ & $54.20 {\scriptstyle \pm 17.44}$ & $49.86 {\scriptstyle \pm 7.44}$ & $49.43 {\scriptstyle \pm 10.93}$ & $45.12 {\scriptstyle \pm 5.11}$ & $38.35 {\scriptstyle \pm 4.47}$ & $35.64 {\scriptstyle \pm 4.88}$ & $32.47 {\scriptstyle \pm 7.89}$ \\
        Proposed & $90.78 {\scriptstyle \pm 7.34}$ & $\mathbf{84.41} {\scriptstyle \pm 1.87}$ & $\mathbf{66.53} {\scriptstyle \pm 14.99}$ & $\mathbf{64.51} {\scriptstyle \pm 20.85}$ & $\mathbf{62.30} {\scriptstyle \pm 10.83}$ & $\mathbf{57.83} {\scriptstyle \pm 2.04}$ & $\mathbf{52.14} {\scriptstyle \pm 13.77}$ & $47.36 {\scriptstyle \pm 6.13}$ & $\mathbf{45.83} {\scriptstyle \pm 3.30}$ \\
        \bottomrule
    \end{tabular}}
\end{table}

\begin{table}[h]
    \centering
    \caption{\textsc{syn-citeseer}: Mean metrics (AUC) and standard deviation for each method on synthetic datasets with varying \metrics ratios $\EAR\xspace$ (1/2, $\EAR\xspace \in [0.10, 0.57]$). The best method per column is bolded.}
    \label{tab:app-syn-citeseer-results}
    \resizebox{\linewidth}{!}{\scriptsize
    \begin{tabular}{lcccccc}
        \toprule
         $\EAR\xspace$ & 0.10 & 0.18 & 0.28 & 0.38 & 0.48 & 0.57 \\
        \midrule
        GCN & $99.65 {\scriptstyle \pm 0.08}$ & $99.50 {\scriptstyle \pm 0.14}$ & $99.42 {\scriptstyle \pm 0.16}$ & $98.92 {\scriptstyle \pm 0.24}$ & $98.61 {\scriptstyle \pm 0.33}$ & $98.03 {\scriptstyle \pm 0.22}$ \\
        GAT & $99.42 {\scriptstyle \pm 0.14}$ & $98.94 {\scriptstyle \pm 0.32}$ & $99.17 {\scriptstyle \pm 0.14}$ & $98.28 {\scriptstyle \pm 0.71}$ & $98.31 {\scriptstyle \pm 0.74}$ & $97.89 {\scriptstyle \pm 1.07}$ \\
        GIN & $95.43 {\scriptstyle \pm 1.10}$ & $95.40 {\scriptstyle \pm 0.71}$ & $94.69 {\scriptstyle \pm 1.14}$ & $92.73 {\scriptstyle \pm 0.79}$ & $90.53 {\scriptstyle \pm 1.34}$ & $89.78 {\scriptstyle \pm 1.16}$ \\
        GraphSAGE & $96.19 {\scriptstyle \pm 2.00}$ & $95.99 {\scriptstyle \pm 1.77}$ & $94.18 {\scriptstyle \pm 1.73}$ & $93.26 {\scriptstyle \pm 3.49}$ & $91.69 {\scriptstyle \pm 2.63}$ & $91.81 {\scriptstyle \pm 2.91}$ \\
        MixHopGCN & $99.69 {\scriptstyle \pm 0.06}$ & $94.41 {\scriptstyle \pm 1.60}$ & $99.03 {\scriptstyle \pm 0.25}$ & $98.77 {\scriptstyle \pm 0.07}$ & $98.22 {\scriptstyle \pm 0.27}$ & $97.99 {\scriptstyle \pm 0.23}$ \\
        ChebGCN & $97.59 {\scriptstyle \pm 0.46}$ & $97.76 {\scriptstyle \pm 0.41}$ & $97.20 {\scriptstyle \pm 0.44}$ & $94.90 {\scriptstyle \pm 0.58}$ & $92.50 {\scriptstyle \pm 1.31}$ & $88.75 {\scriptstyle \pm 1.86}$ \\
        GCN-DW & $99.86 {\scriptstyle \pm 0.06}$ & $99.80 {\scriptstyle \pm 0.08}$ & $99.60 {\scriptstyle \pm 0.10}$ & $99.37 {\scriptstyle \pm 0.12}$ & $99.09 {\scriptstyle \pm 0.16}$ & $98.58 {\scriptstyle \pm 0.23}$ \\
        GCN-RF & $99.87 {\scriptstyle \pm 0.04}$ & $99.69 {\scriptstyle \pm 0.06}$ & $99.75 {\scriptstyle \pm 0.06}$ & $99.60 {\scriptstyle \pm 0.08}$ & $99.39 {\scriptstyle \pm 0.09}$ & $99.27 {\scriptstyle \pm 0.11}$ \\
        GCN-LAP & $99.79 {\scriptstyle \pm 0.07}$ & $99.65 {\scriptstyle \pm 0.09}$ & $99.35 {\scriptstyle \pm 0.19}$ & $99.04 {\scriptstyle \pm 0.13}$ & $98.78 {\scriptstyle \pm 0.14}$ & $98.23 {\scriptstyle \pm 0.23}$ \\
        BUDDY & $95.20 {\scriptstyle \pm 0.39}$ & $95.83 {\scriptstyle \pm 0.18}$ & $95.84 {\scriptstyle \pm 0.26}$ & $94.37 {\scriptstyle \pm 0.53}$ & $95.10 {\scriptstyle \pm 0.51}$ & $95.79 {\scriptstyle \pm 0.48}$ \\
        LINKX & $97.30 {\scriptstyle \pm 0.08}$ & $96.99 {\scriptstyle \pm 0.14}$ & $96.81 {\scriptstyle \pm 0.16}$ & $94.28 {\scriptstyle \pm 0.24}$ & $92.82 {\scriptstyle \pm 0.33}$ & $89.63 {\scriptstyle \pm 0.22}$ \\
        Proposed & $\mathbf{99.94} {\scriptstyle \pm 0.03}$ & $\mathbf{99.86} {\scriptstyle \pm 0.03}$ & $\mathbf{99.83} {\scriptstyle \pm 0.07}$ & $\mathbf{99.81} {\scriptstyle \pm 0.00}$ & $\mathbf{99.80} {\scriptstyle \pm 0.05}$ & $\mathbf{99.78} {\scriptstyle \pm 0.04}$ \\
        \bottomrule
    \end{tabular}}
\end{table}

\begin{table}[h]
    \centering
    \caption{\textsc{syn-citeseer}: Mean metrics (AUC) and standard deviation for each method on synthetic datasets with varying \metrics ratios $\EAR\xspace$ (2/2, $\EAR\xspace \in [0.65, 0.94]$). The best method per column is bolded.}
    \label{tab:app-syn-citeseer-results-b}
    \resizebox{\linewidth}{!}{\scriptsize
    \begin{tabular}{lccccc}
        \toprule
         $\EAR\xspace$ & 0.65 & 0.77 & 0.81 & 0.88 & 0.94 \\
        \midrule
        GCN & $97.65 {\scriptstyle \pm 0.47}$ & $96.32 {\scriptstyle \pm 0.56}$ & $96.49 {\scriptstyle \pm 0.74}$ & $95.03 {\scriptstyle \pm 1.60}$ & $91.81 {\scriptstyle \pm 2.05}$ \\
        GAT & $97.81 {\scriptstyle \pm 0.41}$ & $96.56 {\scriptstyle \pm 1.26}$ & $95.87 {\scriptstyle \pm 1.41}$ & $93.11 {\scriptstyle \pm 4.54}$ & $91.93 {\scriptstyle \pm 2.09}$ \\
        GIN & $86.46 {\scriptstyle \pm 2.00}$ & $78.42 {\scriptstyle \pm 1.10}$ & $82.29 {\scriptstyle \pm 1.19}$ & $75.34 {\scriptstyle \pm 1.54}$ & $61.25 {\scriptstyle \pm 1.04}$ \\
        GraphSAGE & $86.72 {\scriptstyle \pm 4.49}$ & $85.62 {\scriptstyle \pm 5.20}$ & $82.41 {\scriptstyle \pm 6.02}$ & $77.59 {\scriptstyle \pm 3.97}$ & $74.68 {\scriptstyle \pm 5.96}$ \\
        MixHopGCN & $96.28 {\scriptstyle \pm 0.44}$ & $93.72 {\scriptstyle \pm 0.76}$ & $94.55 {\scriptstyle \pm 0.52}$ & $91.21 {\scriptstyle \pm 1.95}$ & $88.35 {\scriptstyle \pm 3.18}$ \\
        ChebGCN & $84.72 {\scriptstyle \pm 1.89}$ & $79.21 {\scriptstyle \pm 1.42}$ & $76.92 {\scriptstyle \pm 1.49}$ & $72.16 {\scriptstyle \pm 2.07}$ & $68.81 {\scriptstyle \pm 2.76}$ \\
        GCN-DW & $98.45 {\scriptstyle \pm 0.17}$ & $97.92 {\scriptstyle \pm 0.20}$ & $97.83 {\scriptstyle \pm 0.23}$ & $97.36 {\scriptstyle \pm 0.24}$ & $96.23 {\scriptstyle \pm 0.24}$ \\
        GCN-RF & $99.27 {\scriptstyle \pm 0.13}$ & $98.46 {\scriptstyle \pm 0.15}$ & $98.48 {\scriptstyle \pm 0.19}$ & $98.16 {\scriptstyle \pm 0.20}$ & $97.01 {\scriptstyle \pm 0.37}$ \\
        GCN-LAP & $98.29 {\scriptstyle \pm 0.29}$ & $97.81 {\scriptstyle \pm 0.25}$ & $97.51 {\scriptstyle \pm 0.24}$ & $97.16 {\scriptstyle \pm 0.25}$ & $95.80 {\scriptstyle \pm 0.45}$ \\
        BUDDY & $96.03 {\scriptstyle \pm 0.32}$ & $94.86 {\scriptstyle \pm 0.42}$ & $94.70 {\scriptstyle \pm 0.31}$ & $95.50 {\scriptstyle \pm 0.39}$ & $95.57 {\scriptstyle \pm 0.41}$ \\
        LINKX & $86.69 {\scriptstyle \pm 0.47}$ & $85.47 {\scriptstyle \pm 0.56}$ & $83.48 {\scriptstyle \pm 0.74}$ & $81.28 {\scriptstyle \pm 1.42}$ & $79.68 {\scriptstyle \pm 1.49}$ \\
        Proposed & $\mathbf{99.76} {\scriptstyle \pm 0.03}$ & $\mathbf{99.74} {\scriptstyle \pm 0.07}$ & $99.74 {\scriptstyle \pm 0.075}$ & $99.74 {\scriptstyle \pm 0.0775}$ & $\mathbf{99.90} {\scriptstyle \pm 0.08}$ \\
        \bottomrule
    \end{tabular}}
\end{table}

\begin{table}[h]
    \centering
    \caption{\textsc{syn-citeseer}: Mean MRR and standard deviation for each method on synthetic datasets with varying \metrics ratios $\EAR\xspace$ (1/2, $\EAR\xspace \in [0.10, 0.57]$). The best method per column is bolded.}
    \label{tab:app-syn-citeseer-mrr}
    \resizebox{\linewidth}{!}{\scriptsize
    \begin{tabular}{lcccccc}
        \toprule
         $\EAR\xspace$ & 0.10 & 0.18 & 0.28 & 0.38 & 0.48 & 0.57 \\
        \midrule
        GCN & $49.84 {\scriptstyle \pm 4.78}$ & $48.61 {\scriptstyle \pm 13.29}$ & $48.34 {\scriptstyle \pm 10.77}$ & $43.70 {\scriptstyle \pm 10.66}$ & $39.74 {\scriptstyle \pm 9.66}$ & $27.99 {\scriptstyle \pm 7.70}$ \\
        GAT & $31.93 {\scriptstyle \pm 8.23}$ & $27.72 {\scriptstyle \pm 7.61}$ & $25.82 {\scriptstyle \pm 4.77}$ & $24.61 {\scriptstyle \pm 6.22}$ & $23.06 {\scriptstyle \pm 3.30}$ & $21.82 {\scriptstyle \pm 8.21}$ \\
        GIN & $20.37 {\scriptstyle \pm 3.99}$ & $18.09 {\scriptstyle \pm 4.48}$ & $15.77 {\scriptstyle \pm 2.23}$ & $14.17 {\scriptstyle \pm 3.56}$ & $11.37 {\scriptstyle \pm 2.29}$ & $10.79 {\scriptstyle \pm 2.36}$ \\
        GraphSAGE & $24.01 {\scriptstyle \pm 15.47}$ & $20.28 {\scriptstyle \pm 12.22}$ & $20.13 {\scriptstyle \pm 10.96}$ & $16.45 {\scriptstyle \pm 4.91}$ & $15.82 {\scriptstyle \pm 5.18}$ & $12.57 {\scriptstyle \pm 5.13}$ \\
        MixHopGCN & $70.99 {\scriptstyle \pm 27.71}$ & $52.17 {\scriptstyle \pm 4.79}$ & $51.84 {\scriptstyle \pm 9.30}$ & $43.61 {\scriptstyle \pm 6.49}$ & $39.02 {\scriptstyle \pm 2.52}$ & $36.87 {\scriptstyle \pm 10.14}$ \\
        ChebGCN & $28.40 {\scriptstyle \pm 6.49}$ & $27.45 {\scriptstyle \pm 5.14}$ & $27.07 {\scriptstyle \pm 10.19}$ & $23.05 {\scriptstyle \pm 3.27}$ & $22.76 {\scriptstyle \pm 4.13}$ & $15.23 {\scriptstyle \pm 4.23}$ \\
        GCN-DW & $65.92 {\scriptstyle \pm 9.59}$ & $53.50 {\scriptstyle \pm 9.29}$ & $\mathbf{75.96} {\scriptstyle \pm 10.76}$ & $61.99 {\scriptstyle \pm 8.48}$ & $50.64 {\scriptstyle \pm 10.31}$ & $32.26 {\scriptstyle \pm 3.79}$ \\
        GCN-RF & $60.49 {\scriptstyle \pm 9.57}$ & $49.02 {\scriptstyle \pm 12.14}$ & $59.79 {\scriptstyle \pm 12.07}$ & $58.83 {\scriptstyle \pm 11.46}$ & $59.68 {\scriptstyle \pm 11.20}$ & $65.63 {\scriptstyle \pm 13.42}$ \\
        GCN-LAP & $63.75 {\scriptstyle \pm 8.76}$ & $64.01 {\scriptstyle \pm 8.85}$ & $74.99 {\scriptstyle \pm 8.55}$ & $60.41 {\scriptstyle \pm 9.69}$ & $49.12 {\scriptstyle \pm 7.91}$ & $30.90 {\scriptstyle \pm 4.64}$ \\
        BUDDY & $29.06 {\scriptstyle \pm 7.08}$ & $31.40 {\scriptstyle \pm 4.33}$ & $26.07 {\scriptstyle \pm 8.50}$ & $37.07 {\scriptstyle \pm 5.59}$ & $40.97 {\scriptstyle \pm 4.09}$ & $38.80 {\scriptstyle \pm 1.98}$ \\
        LINKX & $61.82 {\scriptstyle \pm 5.53}$ & $56.80 {\scriptstyle \pm 7.94}$ & $51.21 {\scriptstyle \pm 6.15}$ & $50.44 {\scriptstyle \pm 15.06}$ & $44.59 {\scriptstyle \pm 15.02}$ & $39.72 {\scriptstyle \pm 9.74}$ \\
        Proposed & $\mathbf{85.72} {\scriptstyle \pm 5.57}$ & $\mathbf{83.02} {\scriptstyle \pm 6.87}$ & $74.85 {\scriptstyle \pm 7.60}$ & $\mathbf{72.06} {\scriptstyle \pm 5.95}$ & $\mathbf{68.64} {\scriptstyle \pm 4.59}$ & $\mathbf{67.42} {\scriptstyle \pm 10.02}$ \\
        \bottomrule
    \end{tabular}}
\end{table}

\begin{table}[h]
    \centering
    \caption{\textsc{syn-citeseer}: Mean MRR and standard deviation for each method on synthetic datasets with varying \metrics ratios $\EAR\xspace$ (2/2, $\EAR\xspace \in [0.65, 0.94]$). The best method per column is bolded.}
    \label{tab:app-syn-citeseer-mrr-b}
    \resizebox{\linewidth}{!}{\scriptsize
    \begin{tabular}{lccccc}
        \toprule
         $\EAR\xspace$ & 0.65 & 0.77 & 0.81 & 0.88 & 0.94 \\
        \midrule
        GCN & $21.32 {\scriptstyle \pm 4.65}$ & $20.78 {\scriptstyle \pm 3.08}$ & $18.08 {\scriptstyle \pm 3.87}$ & $13.18 {\scriptstyle \pm 2.00}$ & $10.56 {\scriptstyle \pm 2.93}$ \\
        GAT & $21.16 {\scriptstyle \pm 4.89}$ & $19.10 {\scriptstyle \pm 4.48}$ & $14.75 {\scriptstyle \pm 2.74}$ & $13.61 {\scriptstyle \pm 5.15}$ & $10.22 {\scriptstyle \pm 1.75}$ \\
        GIN & $10.45 {\scriptstyle \pm 1.82}$ & $9.61 {\scriptstyle \pm 3.08}$ & $7.51 {\scriptstyle \pm 3.36}$ & $5.91 {\scriptstyle \pm 2.72}$ & $2.82 {\scriptstyle \pm 2.89}$ \\
        GraphSAGE & $10.61 {\scriptstyle \pm 5.20}$ & $10.56 {\scriptstyle \pm 5.06}$ & $10.34 {\scriptstyle \pm 3.02}$ & $6.89 {\scriptstyle \pm 2.12}$ & $5.16 {\scriptstyle \pm 2.18}$ \\
        MixHopGCN & $36.12 {\scriptstyle \pm 4.80}$ & $35.69 {\scriptstyle \pm 9.47}$ & $34.84 {\scriptstyle \pm 6.12}$ & $25.08 {\scriptstyle \pm 11.08}$ & $14.32 {\scriptstyle \pm 2.42}$ \\
        ChebGCN & $12.33 {\scriptstyle \pm 3.00}$ & $8.83 {\scriptstyle \pm 2.23}$ & $7.57 {\scriptstyle \pm 2.49}$ & $7.09 {\scriptstyle \pm 0.94}$ & $4.81 {\scriptstyle \pm 0.62}$ \\
        GCN-DW & $37.06 {\scriptstyle \pm 4.30}$ & $35.14 {\scriptstyle \pm 4.28}$ & $26.22 {\scriptstyle \pm 2.46}$ & $29.03 {\scriptstyle \pm 3.65}$ & $25.11 {\scriptstyle \pm 5.44}$ \\
        GCN-RF & $54.62 {\scriptstyle \pm 9.97}$ & $30.14 {\scriptstyle \pm 6.63}$ & $34.67 {\scriptstyle \pm 4.98}$ & $41.84 {\scriptstyle \pm 5.38}$ & $18.56 {\scriptstyle \pm 3.56}$ \\
        GCN-LAP & $37.15 {\scriptstyle \pm 6.31}$ & $38.19 {\scriptstyle \pm 6.80}$ & $24.42 {\scriptstyle \pm 2.50}$ & $26.08 {\scriptstyle \pm 4.46}$ & $23.36 {\scriptstyle \pm 4.97}$ \\
        BUDDY & $22.80 {\scriptstyle \pm 4.00}$ & $37.20 {\scriptstyle \pm 3.34}$ & $32.22 {\scriptstyle \pm 3.14}$ & $35.62 {\scriptstyle \pm 5.89}$ & $34.86 {\scriptstyle \pm 5.28}$ \\
        LINKX & $36.21 {\scriptstyle \pm 4.30}$ & $32.68 {\scriptstyle \pm 8.60}$ & $32.09 {\scriptstyle \pm 12.39}$ & $18.60 {\scriptstyle \pm 2.41}$ & $12.34 {\scriptstyle \pm 4.56}$ \\
        Proposed & $\mathbf{66.61} {\scriptstyle \pm 16.27}$ & $\mathbf{65.97} {\scriptstyle \pm 11.65}$ & $\mathbf{55.19} {\scriptstyle \pm 12.10}$ & $\mathbf{42.59} {\scriptstyle \pm 8.04}$ & $\mathbf{41.19} {\scriptstyle \pm 3.23}$ \\
        \bottomrule
    \end{tabular}}
\end{table}

\newpage

\section{Ablation Study in Details}
\label{sec:app:ablation_syn_details}
\begin{figure}[h]
    \centering
    \begin{subfigure}[h]{0.35\textwidth}
        \centering
        \includegraphics[width=\textwidth, clip]{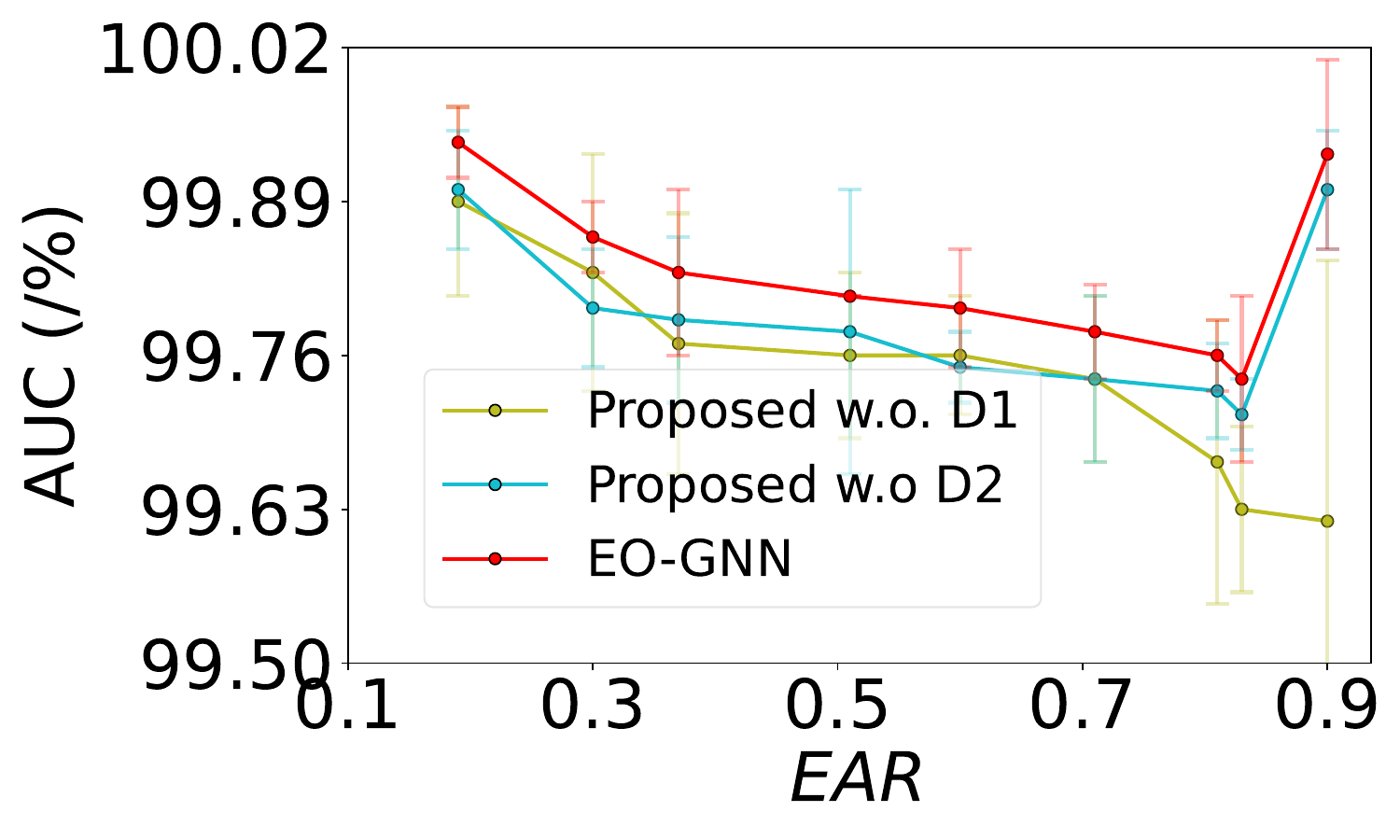}
        \caption{\scriptsize\texttt{syn-cora} (~\Cref{tab:app-syn-cora-results}). }
        \label{fig:app-syn-cora-auc} 
    \end{subfigure}
    \begin{subfigure}[h]{0.35\textwidth}
        \centering
        \includegraphics[width=\textwidth, clip]{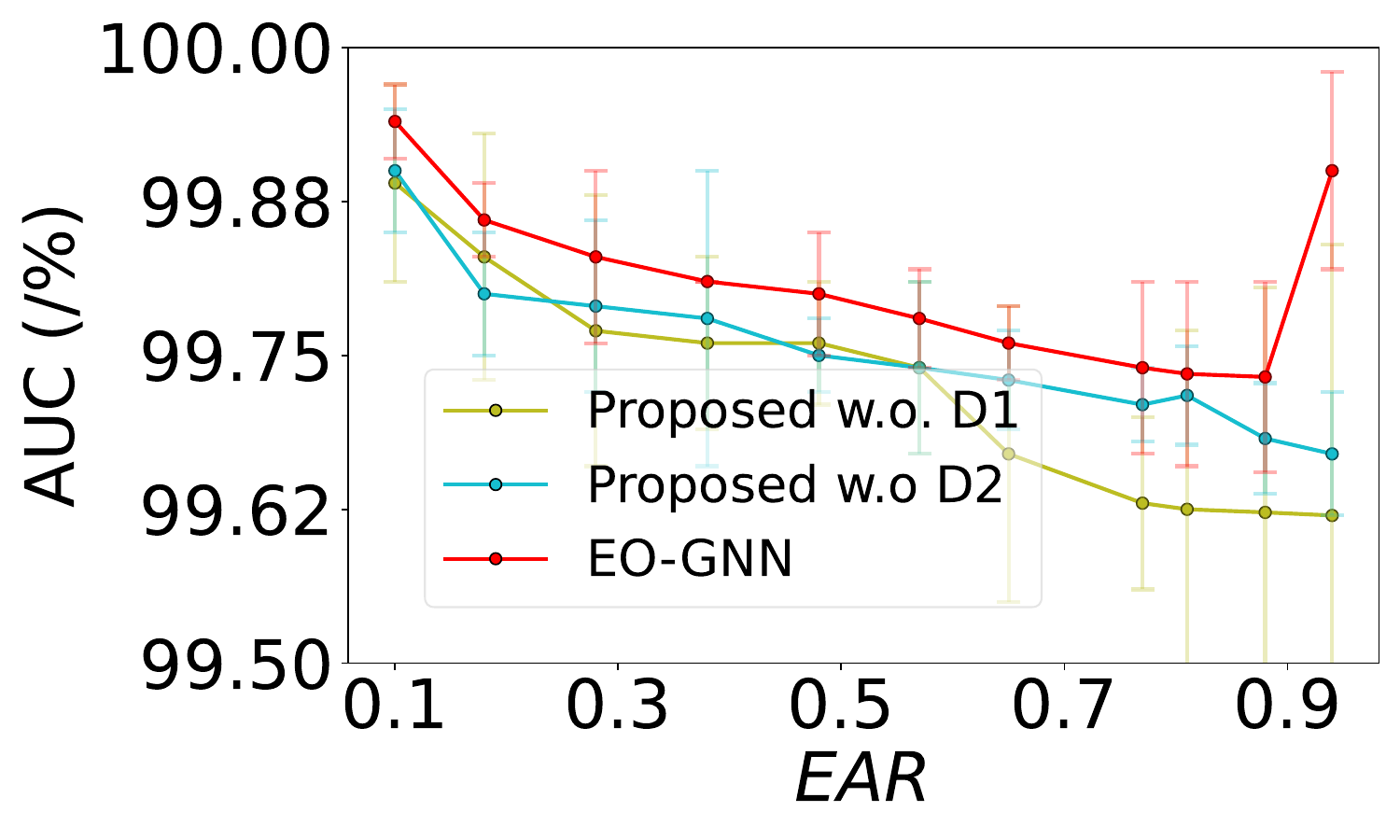}
        \caption{\scriptsize\texttt{syn-citeseer} (~\Cref{tab:app-syn-citeseer-results}).}
        \label{fig:app-syn-citeseer-auc}
    \end{subfigure}
    \begin{subfigure}[H]{0.35\textwidth}
        \centering
        \includegraphics[width=\textwidth, clip]{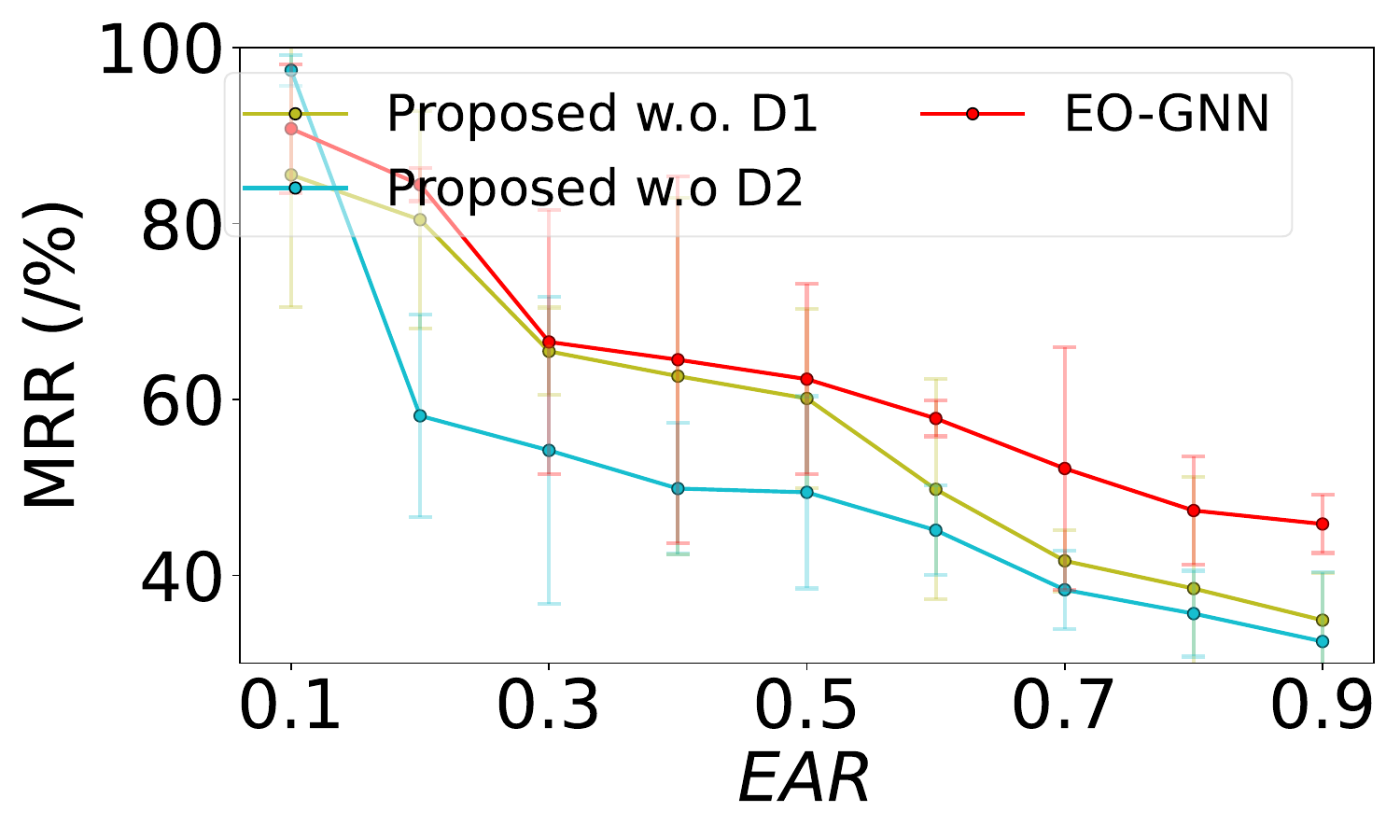}
        \caption{\scriptsize\texttt{syn-cora} (~\Cref{tab:app-syn-cora-mrr}).}
        \label{fig:app-syn-cora-mrr}
    \end{subfigure}
    \caption{Significance of Design Choices D1-D2 via ablation studies. Under high automorphism, D1 exhibits a larger performance gain compared to low automorphism, whereas D2 demonstrates greater robustness across varying levels of automorphism.}
    \label{fig:app_ablation-syn-results}
\end{figure}

\begin{table}[h]
    \centering
    \caption{Ablation study showing the effect of different components. \textcolor{red}{Mean metrics (MRR) and standard deviation} for each method on synthetic Cora with varying \metrics ratios $\EAR\xspace$. }
    \label{tab:ablation_1}
    \resizebox{\linewidth}{!}{\scriptsize
    \begin{tabular}{lccccccccc}
        \toprule
        Method & 0.19 & 0.30 & 0.37 & 0.51 & 0.60 & 0.71 & 0.81 & 0.83 & 0.90 \\
        \midrule
        Proposed w.o. D1   & $99.89 {\scriptstyle \pm 0.08}$ & $99.83 {\scriptstyle \pm 0.10}$ & $99.77 {\scriptstyle \pm 0.11}$ & $99.76 {\scriptstyle \pm 0.07}$ & $99.76 {\scriptstyle \pm 0.05}$ & $99.74 {\scriptstyle \pm 0.07}$ & $99.67 {\scriptstyle \pm 0.12}$ & $99.63 {\scriptstyle \pm 0.07}$ & $99.62 {\scriptstyle \pm 0.22}$ \\
        Proposed w.o D2    & \textcolor{red}{$99.90 {\scriptstyle \pm 0.05}$} & $99.80 {\scriptstyle \pm 0.05}$ & $99.79 {\scriptstyle \pm 0.07}$ & $99.78 {\scriptstyle \pm 0.12}$ & $99.75 {\scriptstyle \pm 0.03}$ & $99.74 {\scriptstyle \pm 0.07}$ & $99.73 {\scriptstyle \pm 0.04}$ & $99.71 {\scriptstyle \pm 0.03}$ & \textcolor{red}{$99.90 {\scriptstyle \pm 0.05}$} \\
        Proposed          & $99.94 {\scriptstyle \pm 0.03}$ & $99.86 {\scriptstyle \pm 0.03}$ & $99.83 {\scriptstyle \pm 0.07}$ & $99.81 {\scriptstyle \pm 0.00}$ & $99.80 {\scriptstyle \pm 0.05}$ & $99.78 {\scriptstyle \pm 0.04}$ & $99.76 {\scriptstyle \pm 0.03}$ & $99.74 {\scriptstyle \pm 0.07}$ & $99.93 {\scriptstyle \pm 0.08}$ \\
        \bottomrule
    \end{tabular}}
\end{table}

\begin{table}[h]
    \centering
    \caption{Ablation study showing the effect of different components. \textcolor{red}{Mean metrics (MRR) and standard deviation} for each method on synthetic Cora with varying \metrics ratios $\EAR\xspace$. }
    \label{tab:ablation_2}
    \resizebox{\linewidth}{!}{\scriptsize
    \begin{tabular}{lccccccccc}
        \toprule
        Method & 0.19 & 0.30 & 0.37 & 0.51 & 0.60 & 0.71 & 0.81 & 0.83 & 0.90 \\
        \midrule
         Proposed w.o. D1 & $85.53 {\scriptstyle \pm 14.99}$ & $80.43 {\scriptstyle \pm 12.34}$ & $65.46 {\scriptstyle \pm 4.98}$ & $62.64 {\scriptstyle \pm 20.27}$ & $60.10 {\scriptstyle \pm 10.21}$ & $49.77 {\scriptstyle \pm 12.52}$ & $41.64 {\scriptstyle \pm 3.52}$ & $38.49 {\scriptstyle \pm 12.70}$ & $34.89 {\scriptstyle \pm 5.32}$ \\
        Proposed w.o D2 & $97.42 {\scriptstyle \pm 1.74}$ & $58.13 {\scriptstyle \pm 11.50}$ & $54.20 {\scriptstyle \pm 17.44}$ & $49.86 {\scriptstyle \pm 7.44}$ & $49.43 {\scriptstyle \pm 10.93}$ & $45.12 {\scriptstyle \pm 5.11}$ & $38.35 {\scriptstyle \pm 4.47}$ & $35.64 {\scriptstyle \pm 4.88}$ & $32.47 {\scriptstyle \pm 7.89}$ \\
        Proposed & $90.78 {\scriptstyle \pm 7.34}$ & $84.41 {\scriptstyle \pm 1.87}$ & $66.53 {\scriptstyle \pm 14.99}$ & $64.51 {\scriptstyle \pm 20.85}$ & $62.30 {\scriptstyle \pm 10.83}$ & $57.83 {\scriptstyle \pm 2.04}$ & $52.14 {\scriptstyle \pm 13.77}$ & $47.36 {\scriptstyle \pm 6.13}$ & $45.83 {\scriptstyle \pm 3.30}$ \\
        \bottomrule
    \end{tabular}}
\end{table}

\begin{table}[h]
    \centering
    \caption{Ablation study showing the effect of different components. \textcolor{red}{Mean metrics (MRR) and standard deviation} for each method on synthetic Citeseer with varying  \metrics ratios $\EAR\xspace$. }
    \label{tab:ablation_3}
    \resizebox{\linewidth}{!}{\scriptsize
    \begin{tabular}{lcccccc}
        \toprule
         $\EAR\xspace$ & 0.10 & 0.18 & 0.28 & 0.38 & 0.48 & 0.57 \\
        \midrule
        Proposed w.o. D1  & $99.89 {\scriptstyle \pm 0.08}$ & $99.83 {\scriptstyle \pm 0.10}$ & $99.77 {\scriptstyle \pm 0.11}$ & $99.76 {\scriptstyle \pm 0.07}$ & $99.76 {\scriptstyle \pm 0.05}$ & $99.74 {\scriptstyle \pm 0.07}$ \\
        Proposed w.o D2 & $99.90 {\scriptstyle \pm 0.05}$ & $99.80 {\scriptstyle \pm 0.05}$ & $99.79 {\scriptstyle \pm 0.07}$ & $99.78 {\scriptstyle \pm 0.12}$ & $99.75 {\scriptstyle \pm 0.03}$ & $99.74 {\scriptstyle \pm 0.07}$ \\
        Proposed & $99.94 {\scriptstyle \pm 0.03}$ & $99.86 {\scriptstyle \pm 0.03}$ & $99.83 {\scriptstyle \pm 0.07}$ & $99.81 {\scriptstyle \pm 0.00}$ & $99.80 {\scriptstyle \pm 0.05}$ & $99.78 {\scriptstyle \pm 0.04}$ \\
        \bottomrule
    \end{tabular}}
    \vspace{0.3cm}
    \resizebox{\linewidth}{!}{\scriptsize
    \begin{tabular}{lccccc}
        \toprule
         $\EAR\xspace$ & 0.65 & 0.77 & 0.81 & 0.88 & 0.94 \\
        \midrule
        Proposed w.o. D1  & $99.67 {\scriptstyle \pm 0.12}$ & $99.63 {\scriptstyle \pm 0.07}$ & $99.63 {\scriptstyle \pm 0.15}$ & $99.63 {\scriptstyle \pm 0.18}$ & $99.62 {\scriptstyle \pm 0.22}$ \\
        Proposed w.o D2 & $99.73 {\scriptstyle \pm 0.04}$ & $99.71 {\scriptstyle \pm 0.03}$ & $99.75 {\scriptstyle \pm 0.04}$ & $99.84 {\scriptstyle \pm 0.05}$& $99.73 {\scriptstyle \pm 0.05}$ \\
        Proposed & $99.76 {\scriptstyle \pm 0.03}$ & $99.74 {\scriptstyle \pm 0.07}$ & $99.74 {\scriptstyle \pm 0.08}$& $99.73 {\scriptstyle \pm 0.08}$& $99.90 {\scriptstyle \pm 0.08}$ \\
        \bottomrule
    \end{tabular}}
\end{table}

\begin{table}[h]
    \centering
    \caption{Ablation study showing the effect of different components. \textcolor{red}{Mean MRR and standard deviation} for each method on synthetic Citeseer with varying  \metrics ratios $\EAR\xspace$. }
    \label{tab:ablation_4}
    \resizebox{\linewidth}{!}{\scriptsize
    \begin{tabular}{lcccccc}
        \toprule
         $\EAR\xspace$ & 0.10 & 0.18 & 0.28 & 0.38 & 0.48 & 0.57 \\
        \midrule
        Proposed w.o. D1  & $83.40 {\scriptstyle \pm 11.40}$ & $71.65 {\scriptstyle \pm 12.54}$ & $69.35 {\scriptstyle \pm 12.85}$ & $67.94 {\scriptstyle \pm 12.46}$ & $64.67 {\scriptstyle \pm 8.23}$ & $64.18 {\scriptstyle \pm 7.58}$ \\
        Proposed w.o D2   & $79.99 {\scriptstyle \pm 12.23}$ & $76.28 {\scriptstyle \pm 7.76}$ & $72.20 {\scriptstyle \pm 16.27}$ & $66.62 {\scriptstyle \pm 12.13}$& $66.62 {\scriptstyle \pm 12.13}$& $66.62 {\scriptstyle \pm 12.13}$\\
        Proposed & $85.72 {\scriptstyle \pm 5.57}$ & $83.02 {\scriptstyle \pm 6.87}$ & $74.85 {\scriptstyle \pm 7.60}$ & $72.06 {\scriptstyle \pm 5.95}$ & $68.64 {\scriptstyle \pm 4.59}$ & $67.42 {\scriptstyle \pm 10.02}$ \\
        \bottomrule
    \end{tabular}}
    \vspace{0.3cm}
    \resizebox{\linewidth}{!}{\scriptsize
    \begin{tabular}{lccccc}
        \toprule
         $\EAR\xspace$ & 0.65 & 0.77 & 0.81 & 0.88 & 0.94 \\
        \midrule
        Proposed w.o. D1  & $63.90 {\scriptstyle \pm 9.91}$ & $43.61 {\scriptstyle \pm 5.35}$ & $42.21 {\scriptstyle \pm 3.54}$ & $41.19 {\scriptstyle \pm 3.23}$ & $36.42 {\scriptstyle \pm 9.45}$ \\
        Proposed w.o D2   & $58.34 {\scriptstyle \pm 13.80}$ & $55.88 {\scriptstyle \pm 12.07}$ & $49.56 {\scriptstyle \pm 10.00}$ & $41.81 {\scriptstyle \pm 7.54}$ & $38.42 {\scriptstyle \pm 12.32}$ \\
        Proposed & $66.61 {\scriptstyle \pm 16.27}$ & $65.97 {\scriptstyle \pm 11.65}$ & $55.19 {\scriptstyle \pm 12.10}$ & $42.59 {\scriptstyle \pm 8.04}$ & $41.19 {\scriptstyle \pm 3.23}$ \\
        \bottomrule
    \end{tabular}}
\end{table}

\newpage

\section{Empirical Setup \& Hyperparameter Tuning}
\label{app:tuning}

\subsection{Setup} 
\paragraph{\method Implementation} 
For loss function, we calculate the contrastive learning loss between the predicted 
and the ground-truth edge connections. 
\begin{equation}
    \label{equ:loss}
    L(v) = - \sum_{v \in N(v)} \log \delta(\mathbf{r}_v, \mathbf{r}_u) - \sum_{w \notin N(v)} \log (1 - \delta(\mathbf{r}_v, \mathbf{r}_w)).
\end{equation}

\paragraph{Baseline Implementations} For all baselines besides MLP, we used the implementation from the public library pytorch geometric \cite{fey2019fastgraphrepresentationlearning}. 
\begin{itemize}
    \item \textbf{GCN}~\citep{Kipf2016SemiSupervisedCW}: \url{https://github.com/tkipf/gcn}
    \item \textbf{GraphSAGE}~\citep{you2021graphcontrastivelearningaugmentations}: \url{https://github.com/williamleif/graphsage-simple} (PyTorch implementation)
    \item \textbf{GIN} \citep{Xu2018HowPA}: \url{https://pytorch-geometric.readthedocs.io/en/latest/_modules/torch_geometric/nn/conv/gin_conv.html#GINConv}
    \item \textbf{GAT}~\citep{Velickovic2017GraphAN}: \url{https://github.com/PetarV-/GAT}. 
\end{itemize}

For \decoder, we used our own implementation of MLP with 1 to 3-hidden layers. We use the same loss function as \method for training MLP. 

\paragraph{Hardware Specifications} We run experiments on synthetic benchmarks using the HoreKa Green cluster, which consists of nodes equipped with dual-socket Intel Xeon Platinum 8368 CPUs, each with 76 cores (152 threads per node) and 512 GB of main memory. Each node features four NVIDIA A100 GPUs, each with 40 GB of GPU memory.  

\paragraph{Dataset Statistic}
Random splits allocate \textcolor{red}{80\%/15\%/5\%} of the edges for the training, validation and test sets, respectively. The Collab dataset permits the use of validation edges as input during testing. For all reported results in the tables, we perform target link removal during training and do not utilize validation edges in the training stage.

\subsection{Hyperparameter Tuning} 
\label{subsec:app-choice-metrics}
To avoid bias, we tuned the hyperparameters of each method (\method and baseline models) on each benchmark. 
Below we list the hyperparameters tested on each benchmark per model. 
As the hyperparameters defined by each baseline model differ significantly, we list the combinations of non-default command line arguments we tested, without explaining them in detail. %
For the adam optimizer, we perform a grid search over the
parameters reported in the original source code and tune the Adam optimizer
through the following ranges: learning rate ($10^{-2, -3, -4}$), dropout rate
(0, 0.1, 0.2, 0.3) and batch size ($2^{5, 7, \dots, 15}$). \\

\paragraph{Synthetic Benchmark Tuning} For \texttt{syn-cora}, we test the following command-line arguments for each baseline method: 
\begin{itemize}
    \item \textbf{\method-1 \& \method-2}: 
    \begin{itemize}
        \item Dimension of Feature Embedding $p$: 64
        \item Non-linearity Function $\rho$: ReLU
        \item Dropout Rate: $a\in \{0, 0.5\}$
    \end{itemize}
    We report the best performance, for $a=0$. 
    \item \textbf{GCN}~\cite{Kipf2016SemiSupervisedCW}: 
    \begin{itemize}
        \item \texttt{\lstinline{hidden1}}: $a\in \{16, 32, 64\}$
        \item \texttt{\lstinline{early_stopping}}: $b\in\{40, 100, 200\}$
        \item \texttt{\lstinline{epochs}}: 2000
    \end{itemize}
    We report the best performance, for $a=32, b=40$.
    \item \textbf{GraphSAGE}~\cite{Hamilton2017InductiveRL}: 
    \begin{itemize}
        \item \texttt{\lstinline{hid_units}}: $a\in \{64, 128\}$
        \item \texttt{\lstinline{lr}}: $b\in \{0.1, 0.7\}$
         \item \texttt{\lstinline{layers}}: $l\in \{1, 3\}$
         \item \texttt{\lstinline{dropout}}: $l\in \{1, 3\}$
        \item \texttt{\lstinline{epochs}}: 500
    \end{itemize}
    We report the performance with $layers=3, dropout=0.5, epochs=800$.
    
    \item \textbf{GAT}~\cite{Velickovic2017GraphAN}: 
    \begin{itemize}
        \item \texttt{\lstinline{hid_units}}: $a \in \{8, 16, 32, 64\}$
        \item \texttt{\lstinline{n_heads}}: $b\in\{1, 4, 8\}$ 
        \item \texttt{\lstinline{n_layers}}: $b\in\{1, 2, 3\}$ 
        \item \texttt{\lstinline{dropout}}: $l\in \{1, 3\}$
        \item \texttt{\lstinline{epochs}}: 500
    \end{itemize}
    We report the performance with $a=8, b=8$.
    \item \textbf{\decoder}
    \begin{itemize}
        \item Dimension of Feature Embedding $p$: 64
        \item Non-linearity Function $\rho$: ReLU
        \item Dropout Rate: 0.5
    \end{itemize}
\end{itemize}

\newpage

\section{WL Labeling Scheme with Perfect HASH}
\label{app:algo}
Consider a graph $\graph = (\vertexSet, \edgeSet)$. Let $\ell^{(0)}(v) = \ell(v)$ be the initial label assigned to each node $v \in \vertexSet$ (e.g., based on node features or degree). Let $H$ denote the number of WL iterations. Then, for each iteration $h = 1, \dots, H$, the WL labeling scheme updates node labels recursively by considering the multiset of neighboring labels from the previous iteration.

Formally, define the multiset of neighbors' labels for node $v$ at iteration $h$ as:
\[
\mathcal{N}^{(h)}(v) = \left\{ \ell^{(h)}(u) \mid u \in \neighNoSelfLoop(v) \right\}
\]
The updated label at iteration $h+1$ is computed as:
\begin{equation}
\ell^{(h+1)}(v) = \texttt{hash}\left( \ell^{(h)}(v), \mathcal{N}^{(h)}(v) \right)
\label{eq:wl-update}
\end{equation}
Following the original formulation~\citep{weisfeiler1968reduction}, we use a perfect hashing function, ensuring that two nodes receive the same label at iteration $h+1$ if and only if their own label and the multiset of their neighbors’ labels were identical at iteration $h$.

\begin{algorithm}[h]
\caption{\textcolor{royalgreen}{Weisfeiler-Lehman Forward Pass}}
\label{algo:wl_forward}
\SetNoFillComment
\textcolor{royalgreen}{\footnotesize
\SetKwInput{Input}{Input}\SetKwInput{Output}{Output}
\SetKwInput{HyperParams}{Hyper-parameters}\SetKwInput{TrainParams}{Train Parameters}
\Input{
Node feature matrix $\mathbf{X} = \mathbf{I}\in  \mathbb{Z}^{|\mathcal{V}| \times 1}$ \\
Adjacency matrix $\mathbf{A}$ \\
Hash function $\mathcal{H}$
}
\TrainParams{
Empty
}
\Output{
Updated node hash labels $h \in \mathbb{Z}^{|\mathcal{V}|}$
}
\Begin{
\tcc{Stage S1: Preprocessing}
\If{$\mathbf{A}$ is sparse}{
    Convert to $(s, d)$ format\;
}
\Else{
    Sort $\mathbf{A}$ by destination node\;
}
Initialize $\mathcal{N}\leftarrow$ $\text{EmptyList}_N$\;
\tcc{Stage S2: Neighbor Aggregation}
\ForEach{$(s, d) \in \mathbf{A}$}{
    Append $\mathbf{X} [s]$ to $\mathcal{N}[d]$\;
}
\tcc{Stage S3: Weisfeiler-Lehman Hash Computation}
Initialize $O \leftarrow \text{EmptyList}_|\mathcal{V}|$\;
\ForEach{$i \in [0, |\mathcal{V}|)$}{
    Compute unique hash $i$ using:\;
    \quad - Node feature $\mathbf{X}[i]$\;
    \quad - Sorted features of $\mathcal{N}[i]$\;
    \If{$ i \notin \mathcal{H}$}{
        Assign a new unique ID to $i$\;
    }
    Store hashed value in $O[i]$\;
}
\tcc{Stage S4: Edge Orbit Hash Computation}
\ForEach{$(u,v)\in\mathcal{E}$}{
    Compute edge hash using the multiset $\{O[u], O[v]\}$\;
    Store hashed edge orbit ID in $\mathcal{O}_{\mathcal{E}}[(u,v)]$\;
}
\Output{
Updated node hash labels $O \in \mathbb{Z}^{|\mathcal{V}|}$ \\
Edge orbit labels $\mathcal{O}_{\mathcal{E}}$
}
}}
\end{algorithm}

\begin{figure*}[h]
    \centering
    \resizebox{\textwidth}{!}{%
    \begin{minipage}{\textwidth}
        \centering
        \begin{subfigure}{0.3\textwidth}
            \centering
            \includegraphics[width=\linewidth]{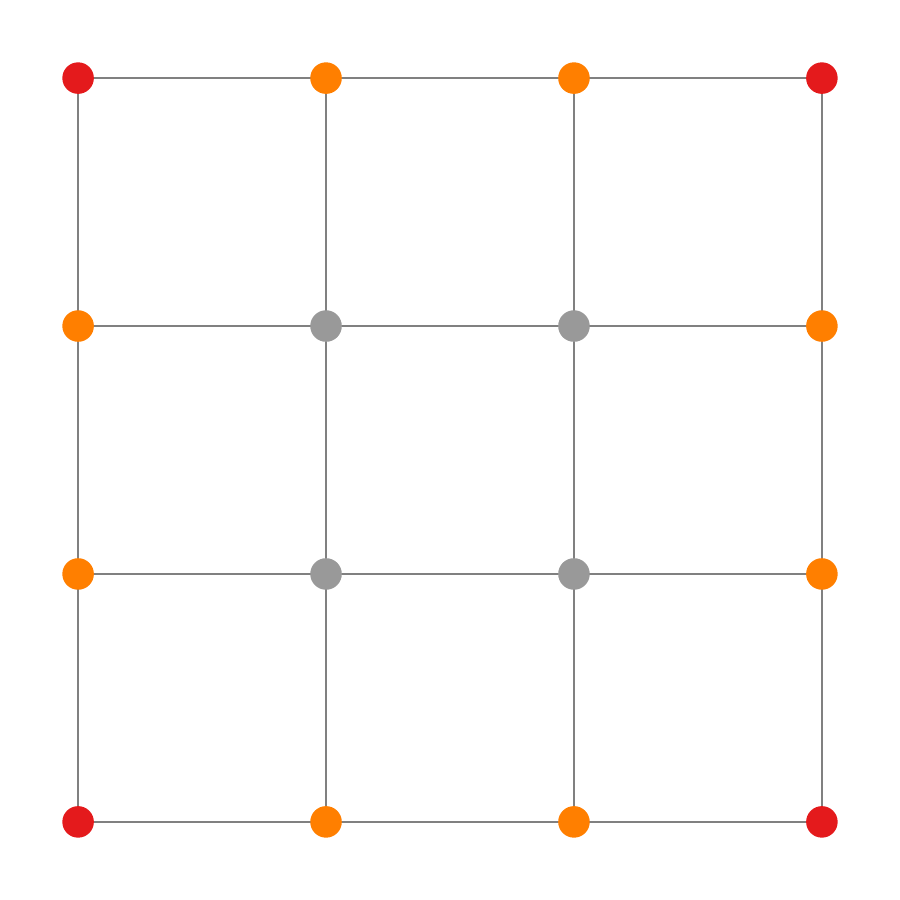}
            \caption{Square Grid Graph with 
            $\EAR\xspace$=1}
            \label{fig:square_grid_4}
        \end{subfigure}
        \hspace{0.4cm}
        \begin{subfigure}{0.3\textwidth}
            \centering
            \includegraphics[width=\linewidth]{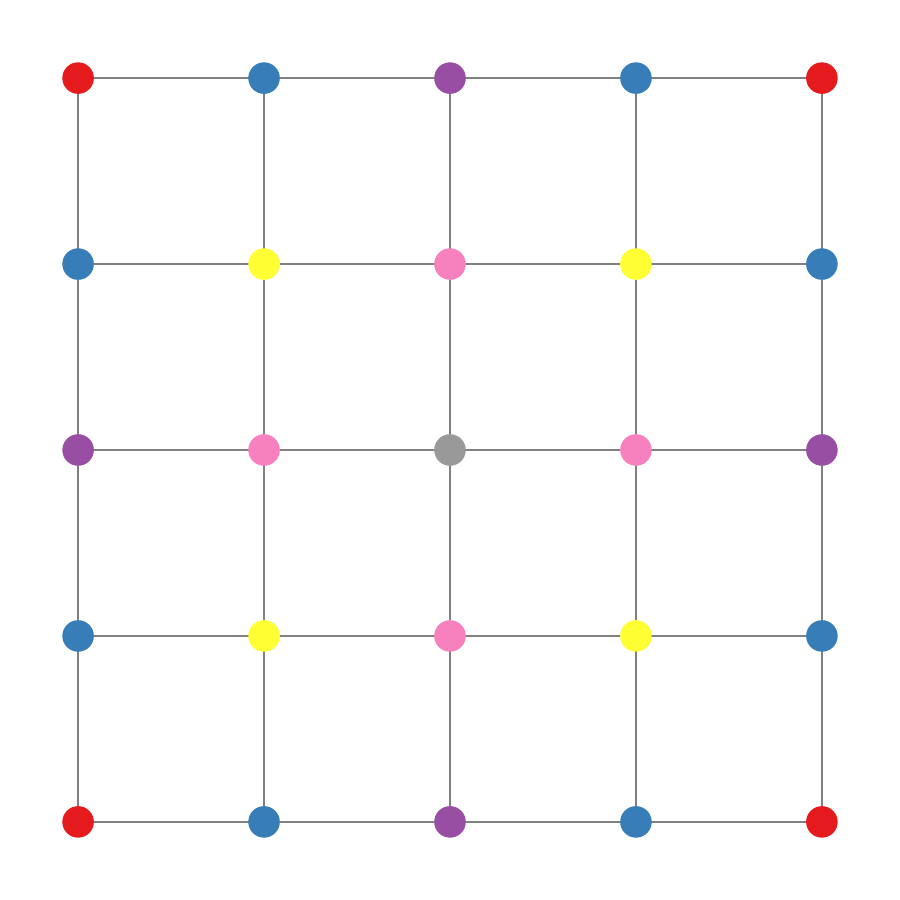}
            \caption{Square Grid Graph with 
            $\EAR\xspace=1$}
            \label{fig:square_grid_5}
        \end{subfigure}
        
        \begin{subfigure}{0.3\textwidth}
            \centering
            \includegraphics[width=\linewidth]{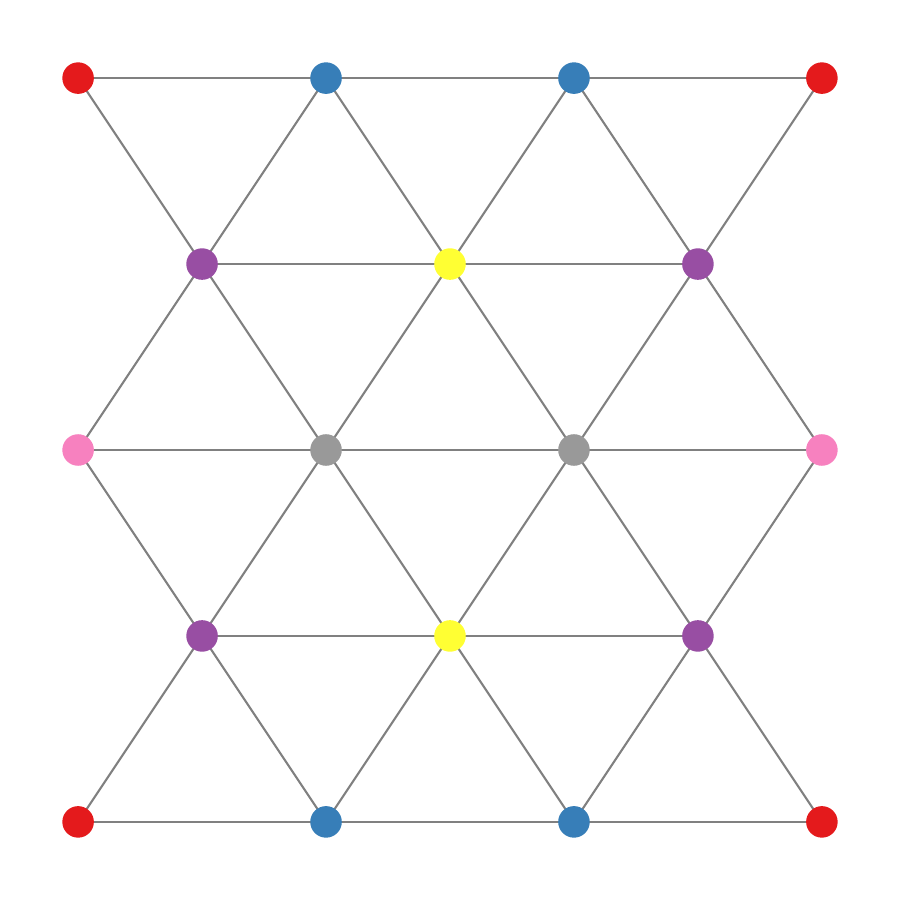}
            \caption{Triangular Graph with 
            $\EAR\xspace$=1}
            \label{fig:triangular_20}
        \end{subfigure}
        \hspace{0.4cm}
        \begin{subfigure}{0.3\textwidth}
            \centering
            \includegraphics[width=\linewidth]{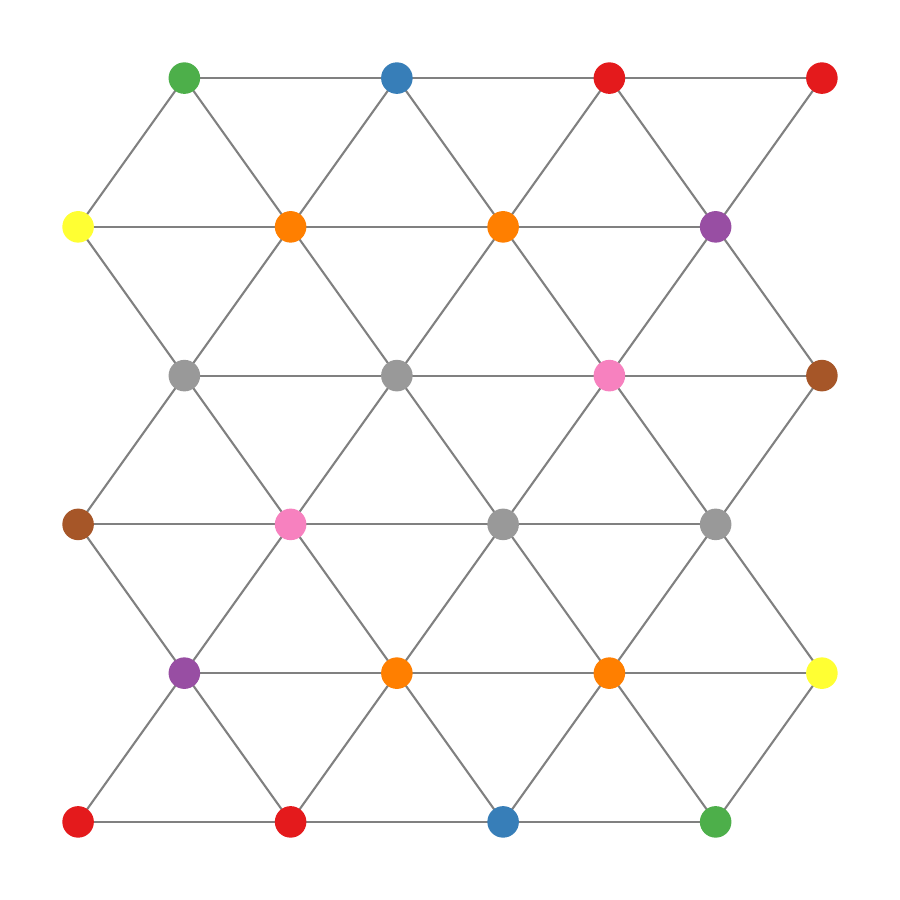}
            \caption{Triangular Graph with 
            $\EAR\xspace=1$}
            \label{fig:triangular_30}
        \end{subfigure}

        \begin{subfigure}{0.3\textwidth}
            \centering
            \includegraphics[width=\linewidth]{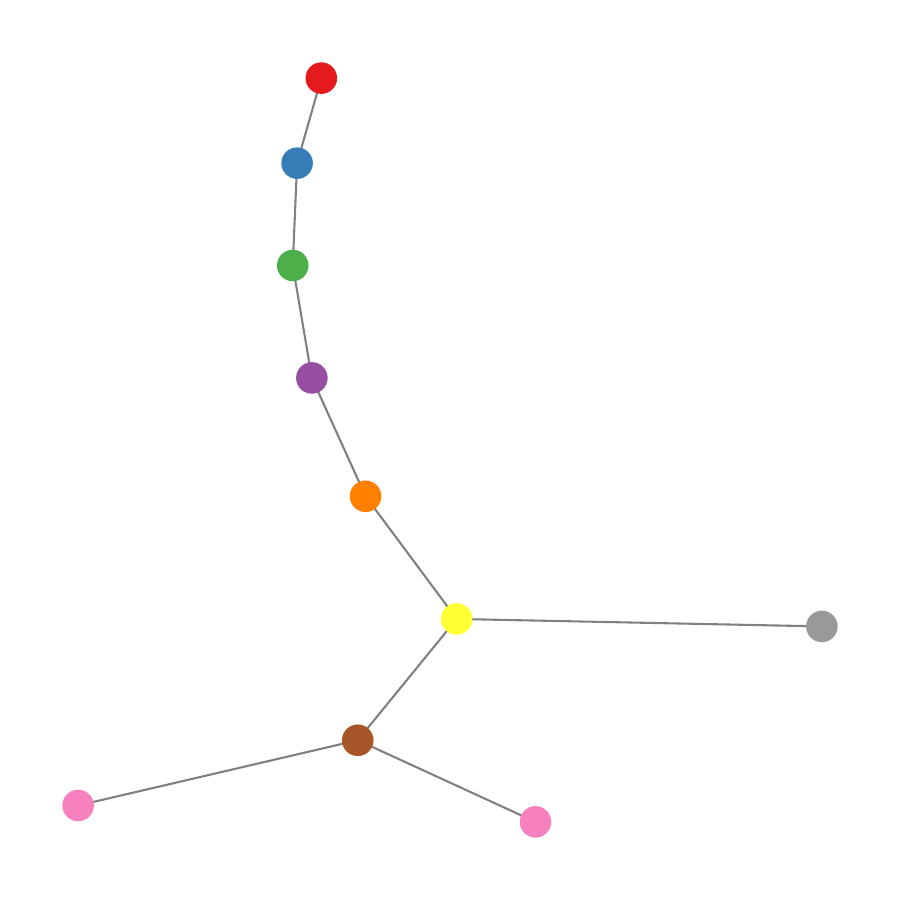}
            \caption{Tree Graph with 
            $\EAR\xspace$=2/9}
            \label{fig:tree_10}
        \end{subfigure}
        \hspace{0.4cm}
        \begin{subfigure}{0.3\textwidth}
            \centering
            \includegraphics[width=\linewidth]{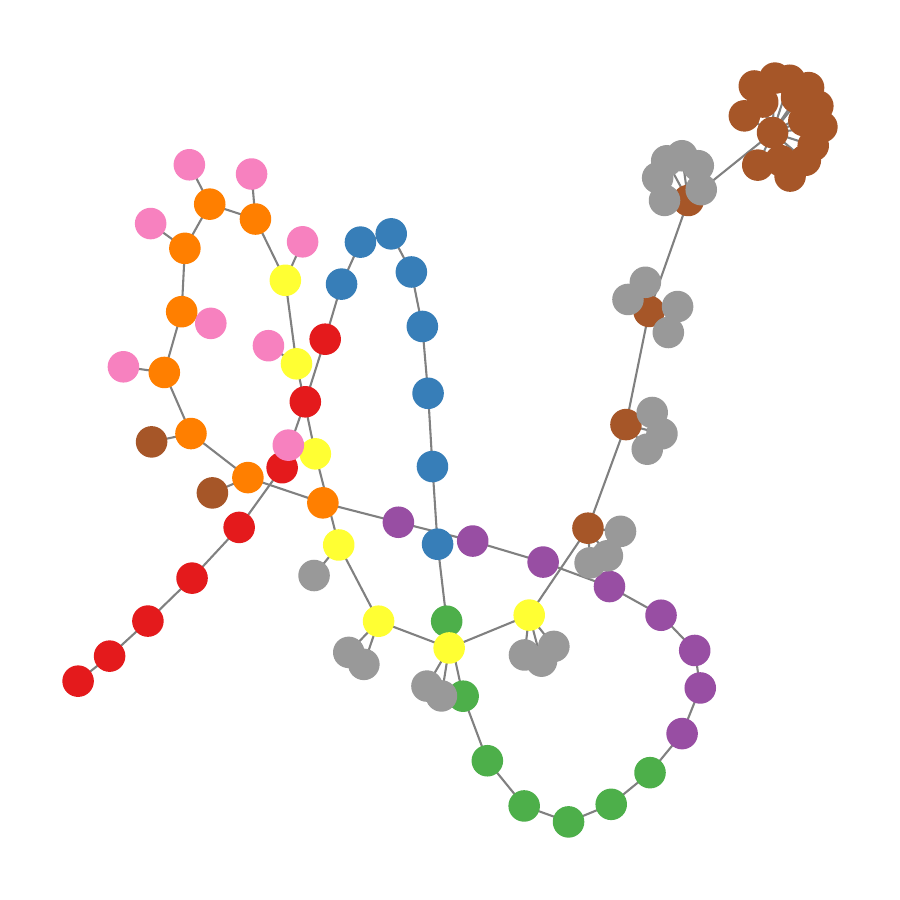}
            \caption{Tree Graph with 
            $\EAR\xspace$=0.53}
            \label{fig:tree_100}
        \end{subfigure}
        
    \end{minipage}
    } 
    \caption{Illustration of the automorphism test approximated by ~\Cref{algo:wl_forward}.}
        \label{fig:automorphic-test-visualization}
\end{figure*}

\newpage

\section{\method: Time Complexity in Detail}
\label{app:time complexity}

\begin{table*}[h!]
    \centering
    \caption{Statistics of standard benchmark graphs }
    {\small
    \begin{adjustbox}{width=0.8\textwidth}
    \begin{tabular}{lccccccc}
        \toprule
        & \textbf{Cora} & \textbf{Citeseer} & \textbf{Pubmed} & \textbf{Collab} & \textbf{PPA} & \textbf{Citation2} & \textbf{DDI} \\
        \midrule
       \textbf{Split Ratio} & \textcolor{red}{80/15/5} & \textcolor{red}{80/15/5} & \textcolor{red}{80/15/5} &92/4/4 & 70/20/10 & 98/1/1 & 80/10/10 \\
        \textbf{Split Scheme} & R & R & R & Time & Throughput & Time & Protein target \\
        \midrule
        \textbf{\#Nodes $|\mathcal{V}|$} & 2708 & 3327 & 19716 & 235868 & 576289 & 2927963 & 4267 \\
        \textbf{\#Edges $|\mathcal{E}|$} & 7392 & 6374 & 62056 & 1935264 & 42463862 & 60703760 & 2135822 \\
        \textbf{Avg Deg (G)} & 7.85 & 3.62 & 11.55 & 24.86 & 149.17 & 89.75 & \textbf{804.51} \\
        \textbf{Avg Deg (G2)} & 5.63 & 3.28 & 7.85 & 20.70 & 133.13 & 68.21 & \textbf{796} \\
        \textbf{Clustering} & 0.12 & 0.07 & 0.03 & \textbf{0.73} & 0.22 & 0.18 & 0.51 \\
        \textbf{Transitivity} & 0.06 & 0.09 & 0.04 & 0.36 & 0.22 & 0.06 & \textbf{0.47} \\
        \textbf{Deg Gini} & 0.45 & 0.50 & \textbf{0.63} & 0.55 & 0.55 & 0.57 & 0.47 \\
        \textbf{Coreness Gini} & 0.27 & 0.36 & 0.45 & 0.43 & \textbf{0.45} & 0.42 & 0.35 \\
        \textbf{Heterogeneity} & 0.14 & 0.11 & 0.23 & 0.21 & 0.13 & \textbf{0.31} & -0.08 \\
        \textbf{Power Law $\alpha$} & \textbf{1.89} & \textbf{2.13} & 1.97 & 1.89 & 1.33 & 1.39 & 1.21 \\
        \bottomrule
    \end{tabular}
    \end{adjustbox}
    }
    \label{tab:graph-stats}
\end{table*}

The 1-dimensional Weisfeiler-Lehman (1-WL) algorithm with $h$ iterations for precomputing subtree-based WL labels incurs a complexity of $\mathcal{O}(|\mathcal{E}| \cdot K)$~\cite{togninalli2019wasserstein}. The D1 augmentation, which involves randomly dropping edges and selecting subgraph orbits, introduces no computational overhead and is therefore considered to have $\mathcal{O}(1)$ complexity. 

For feature encoding, we apply a GNN with $L$ layers, each involving a dense transformation and neighborhood aggregation. This results in a total embedding complexity of $\mathcal{O}\left(L \cdot (|\mathcal{V}| \cdot p^2 + |\mathcal{E}| \cdot p)\right)$, where $p$ denotes the feature dimensionality. 
For each node with a performed linear transformation $\mathbf{H}^l\mathbf{W}^l$, where $\mathbf{H}^l \in \mathbb{R}^{\vert \mathcal{V}\vert \times p}$, $\mathbf{W}^l \in \mathbb{R}^{p \times p}$ refer to node and weight matrix in $l$ layer. The complexity is $\mathcal{O}\left(\vert \mathcal{V}  \vert \cdot p^2 \right)$. The edge wise aggregation is $\mathcal{O}\left(\vert \mathcal{E}  \vert \cdot p \right)$.
Combining both components, the total computational complexity of \method is:
\[
\mathcal{O}\left(|\mathcal{E}| \cdot K + L \cdot (|\mathcal{V}| \cdot p^2 + |\mathcal{E}| \cdot p)\right).
\]


\section{Real Datasets: Details}
\label{app:real}

\label{subsec:app-dataset_statistics}
Considerable work has demonstrated that local and global structural characterization's are more effective for LP. To translate the homophily assumption, local and global graph heuristics, small-world phenomenon and scale-free network properties into task-specific statistics, we provide the following graph metrics. 
\begin{enumerate}[left=0pt, labelsep=0.5em, itemsep=0em]
    \item \textbf{Graph Density}: Number of Nodes, Edges, Arithmetic Deg are used to measure the graph's size, density and sparsity. Average degree of each central node $v \in \mathcal{V}$ and its of 2-order neighborhood $\mathcal{N}_{v}$' average degree measures the graph's local connectivity.
    \item \textbf{Graph Locality}: We utilize two metrics to quantify the locality of one graph. 
    \textit{Transitivity}: Transitivity measures the fraction of all possible triangles in the graph. It quantifies the likelihood that if two nodes are connected to a common node, they will also be connected to each other. The formula for transitivity is given by:
    \begin{equation}
    T = \frac{3 \times \# \text{triangles}}{\# \text{triads}}
    \end{equation}
    where, the numerator represents the number of triangles in the graph; the denominator represents the number of possible triads (sets of three nodes that are connected by at least two edges). Transitivity gives an overall measure of how many triangles (closed 3-node subgraphs) exist relative to the total number of possible connections between three nodes in the graph.
    \textit{Average Clustering Coefficient}: It measures the fraction of possible triangles through that node that actually exist. It can be computed for a node $i$ as:
    \begin{equation}
        C_i = \frac{2 \times T(i)}{\text{deg}(i)(\text{deg}(i) - 1)}
    \end{equation}
    $T(i)$ is the number of triangles through node $i$ and $\text{deg}(i)$ is the degree of node $i$. The \textit{average clustering coefficient} is simply the average value of $C_i$ for all nodes in the graph. It gives a measure of how close the graph is to a complete clique, i.e., how often neighbors of a node are connected to each other.
    \item \textbf{Hierarchical level}: We leverage k-Core graph's fraction and degree distribution to calculate Gini and Coreness Gini. 
    \item \textbf{Scale-free}: If its node degree distribution $P(d)$ follows a power law $P(d) \sim d^{-\gamma}$, where $\gamma$ typically lies within the range $2 < \gamma < 3$. We approximate power law $\alpha$ based on the following estimator. \text{Citeseer} is scale-free networks.
    \begin{equation}
        \hat{\alpha} = 1 + N\left( \sum_{i=1}^{n} \log\left(\frac{d_i + 1}{d_{\min} + 1}\right) \right)^{-1}
\end{equation}
\end{enumerate}

\end{document}